\documentclass[english,ruled]{article}
\usepackage[pdftex]{graphicx}
\usepackage[T1]{fontenc}
\usepackage[latin9]{inputenc}
\usepackage{verbatim}
\usepackage{algorithm2e}
\usepackage{amsmath}
\usepackage{amsthm}
\usepackage{amssymb}
\usepackage{bm}
\usepackage{xcolor}
\usepackage{multicol}
\usepackage{multirow}
\usepackage{geometry}
\usepackage{booktabs}
\usepackage{enumitem}
\usepackage{setspace}
\makeatletter
\usepackage[toc,page,header]{appendix}
\usepackage{minitoc}
\usepackage{ifthen}
\newboolean{doublecolumn}
\setboolean{doublecolumn}{false}
\newboolean{arxiv}
\setboolean{arxiv}{true}
\usepackage{pgfplots}
\usepackage{pdflscape}
\usepackage{hyperref}
\usepackage{cleveref}
\usepackage{authblk}
\pgfplotsset{compat=1.15}
%

%

%

%

%

%

%

%

%

%


%
%
%


%
%
%


%
%
%
\global\long\def\vertiii#1{\left\vert \kern-0.25ex  \left\vert \kern-0.25ex  \left\vert #1\right\vert \kern-0.25ex  \right\vert \kern-0.25ex  \right\vert }%

%

%

%

%

%

%

%

%

%

%

%

%

%

%

%

%

%

%

%


%

%

%

%

\global\long\def\diam{\mathrm{diam}}%

\global\long\def\inte{\operatornamewithlimits{int}}%

\global\long\def\argmin{\operatornamewithlimits{arg\,min}}%

\global\long\def\and{\mathrm{and}}%

\global\long\def\dist{\mathrm{dist}}%

\global\long\def\Ebb{\mathbb{E}}%

\global\long\def\Rbb{\mathbb{R}}%

\global\long\def\Acal{\mathcal{A}}%

\global\long\def\Ocal{\mathcal{O}}%

\global\long\def\Ycal{\mathcal{Y}}%

%

%


%

%

%

%

%

%

%

%

%

%

%

%

%

%

%

%

%

%

%

%

%

%

%

%

%

%

%


%

%

%

%

%

%

%

%

%

%

%

%

%

%

%

%

%

%

%

%

%

%

%

%

%

%

%

%

%

%

%

%

%

%

%

%

%

%

%

%

%

%

%

%

%

%

\setlength{\parindent}{0pt}
\geometry{verbose,tmargin=3cm,bmargin=3cm,lmargin=2.4cm,rmargin=2.4cm}
\linespread{1.0}

\renewcommand \thepart{}
\renewcommand \partname{}


\theoremstyle{plain}
\newtheorem{lem}{\protect\lemmaname}[section]
\theoremstyle{remark}
\newtheorem{rem}{\protect\remarkname}[section]
\theoremstyle{plain}
\newtheorem{thm}{\protect\theoremname}[section]
\theoremstyle{plain}
\newtheorem{prop}{\protect\propositionname}[section]
\theoremstyle{plain}
\newtheorem{exple}{\protect\examplename}[section]
\providecommand{\corollaryname}{Corollary}
\theoremstyle{plain}
\newtheorem{coro}{\protect\corollaryname}[section]

\providecommand{\propositionname}{Proposition}
\providecommand{\lemmaname}{Lemma}
\providecommand{\remarkname}{Remark}
\providecommand{\theoremname}{Theorem}
\providecommand{\examplename}{Example}

\newcommand{\assign}{:=}
\newcommand{\tmtextbf}[1]{\text{{\bfseries{#1}}}}
\newcommand{\tmop}[1]{\ensuremath{\operatorname{#1}}}

\newcommand{\sgm}{{\texttt{SGM}}}

\newcommand{\x}{\mathbf{x}}
\newcommand{\tmc}{\mathbf{c}}
\newcommand{\p}{\mathbf{p}}
\newcommand{\s}{\mathbf{s}}

\newcommand{\tmd}{\mathbf{d}}
\newcommand{\A}{\mathbf{A}}
\newcommand{\tmb}{\mathbf{b}}
\newcommand{\1}{\mathbf{1}}
\newcommand{\y}{\mathbf{y}}
\newcommand{\tma}{\mathbf{a}}
\newcommand{\ld}{\underline{d}}
\newcommand{\g}{\mathbf{g}}
\newcommand{\z}{\mathbf{z}}
\newcommand{\I}{\mathbf{I}}
\newcommand{\tmv}{\mathbf{v}}
\newcommand{\bc}{\bar{c}}
\newcommand{\ba}{\bar{a}}
\newcommand{\bd}{\bar{d}}
\newcommand{\bld}{\underline{d}}
\newcommand{\ud}{\bar{d}}
\newcommand{\ua}{\bar{a}}
\newcommand{\uc}{\bar{c}}
\newcommand{\mathd}{\mathrm{d}}

\newcommand{\goodnewline}{\\}

\newcommand{\condsmall}{\ifthenelse{\boolean{doublecolumn}}{\small}{}}

\crefdefaultlabelformat{#2\textbf{#1}#3} 
\crefname{section}{\textbf{section}}{\textbf{sections}}
\Crefname{section}{\textbf{Section}}{\textbf{Sections}}
\crefname{thm}{\textbf{theorem}}{\textbf{theorems}}
\Crefname{thm}{\textbf{Theorem}}{\textbf{Theorems}}
\crefname{lem}{\textbf{lemma}}{\textbf{lemmas}}
\Crefname{lem}{\textbf{Lemma}}{\textbf{Lemmas}}
\crefname{prop}{\textbf{proposition}}{\textbf{propositions}}
\Crefname{prop}{\textbf{Proposition}}{\textbf{Propositions}}
\crefname{algorithm}{\textbf{algorithm}}{\textbf{algorithms}}
\Crefname{algorithm}{\textbf{Algorithm}}{\textbf{Algorithms}}
\crefname{coro}{\textbf{Corollary}}{\textbf{corollaries}}
\Crefname{coro}{\textbf{Corollary}}{\textbf{corollaries}}
\crefname{definition}{\textbf{Definition}}{\textbf{definitions}}
\Crefname{definition}{\textbf{Definition}}{\textbf{definitions}}
\crefname{table}{\textbf{Table}}{\textbf{tables}}
\Crefname{table}{\textbf{Table}}{\textbf{tables}}
\crefname{figure}{\textbf{Figure}}{\textbf{figures}}
\Crefname{figure}{\textbf{Figure}}{\textbf{figures}}

\setlength{\parindent}{0pt}

\makeatother

\author[1]{Wenzhi Gao\thanks{\url{gwz@stanford.edu}, this paper is an extended version of \cite{gao2024decoupling}}}
\author[2]{Dongdong Ge\thanks{\url{dongdong@gmail.com}, corresponding author}}
\author[1]{Chunlin Sun\thanks{\url{chunlin@stanford.edu}}}
\author[3]{Chenyu Xue\thanks{\url{xcy2721d@gmail.com}, corresponding author}}
\author[1,4]{Yinyu Ye\thanks{\url{yyye@stanford.edu}}}
\affil[1]{ICME, Stanford University}
\affil[2]{Antai College of Economics and Management, Shanghai Jiao Tong University}
\affil[3]{RIIS, Shanghai University of Finance and 
Economics}
\affil[4]{Management Science and Engineering, Stanford University}

\title{Beyond $\Ocal(\sqrt{T})$ Regret: Decoupling Learning and Decision-making in Online Linear Programming}

\begin{document}

\maketitle
\begin{abstract}
Online linear programming plays an important role in both revenue management
  and resource allocation, and recent research has focused on developing
  efficient first-order online learning algorithms. Despite the empirical
  success of first-order methods, they typically achieve a regret no better
  than $\mathcal{O} ( \sqrt{T} )$, which is suboptimal compared to
  the $\mathcal{O} (\log T)$ bound guaranteed by the state-of-the-art linear
  programming (LP)-based online algorithms. This paper establishes a general
  framework that improves upon the $\mathcal{O} ( \sqrt{T} )$
  result when the LP dual problem exhibits certain error bound conditions. For the first time, we show that first-order learning algorithms achieve $o( \sqrt{T} )$ regret in the
  continuous support setting and $\mathcal{O} (\log T)$ regret in the finite support setting beyond the non-degeneracy assumption. Our results significantly improve the state-of-the-art regret results and provide new insights for sequential decision-making.
\end{abstract}

\section{Introduction} \label{sec:intro}

This paper presents a new algorithmic framework to solve the online linear programming (OLP) problem. In this context, a decision-maker receives a sequence of resource requests with bidding prices and sequentially makes irrevocable allocation decisions for these requests. 
OLP aims to maximize the accumulated reward subject to inventory or resource constraints.  OLP plays an important role in a wide range of applications. For example, in online advertising {\cite{10.48550/arxiv.2011.10124}}, an online platform has limited advertising slots on a web page. When a web page loads, online advertisers bid for ad placement, and the platform decides within milliseconds the slot allocation based on the features of advertisers and the user. The goal is to maximize the website's revenue and improve user experience. Another example is online auction, where an online auction platform hosts a large number of auctions for different items. The platform must handle bids and update the auction status in real-time. Besides the aforementioned applications, OLP is also widely used in applications such as revenue management \cite{talluri2004theory}, resource allocation \cite{katoh1998resource}, cloud computing \cite{hussain2013survey}, and many other applications \cite{balseiro2023survey}. \goodnewline

Most state-of-the-art algorithms for OLP are dual linear program (LP)-based
{\cite{agrawal2014dynamic,kesselheim2014primal,ma2024optimal,li2022online,ma2024optimal}}. More specifically, these algorithms require solving a sequence of LPs to make online decisions.  However, the high computational cost of these LP-based methods prevents their application in time-sensitive or large-scale problems. For example, in the aforementioned online advertising example, a decision has to be made in milliseconds, while LP-based methods can take minutes to hours on large-scale problems. This challenge motivates a recent line of research
using first-order methods to address OLP
{\cite{10.48550/arxiv.2003.02513,gao2023solving,10.48550/arxiv.2011.10124,balseiro2022online}}, which are based on gradient information and more scalable and computationally efficient than LP-based methods. \goodnewline

Despite the advantage in computational efficiency, first-order methods are still not comparable to LP-based methods in terms of regret for many settings. Existing first-order OLP algorithms only achieve $\mathcal{O} ( \sqrt{T} )$ regret bound with only a few exceptions. When the distribution of requests and bidding prices has finite support \cite{sun2020near}, $\mathcal{O} ( T^{3/8} )$ regret is obtainable using a three-stage algorithm; if first-order methods are used to solve the subproblems of LP-based methods infrequently, $\mathcal{O} ( \log^2 T )$ regret is achievable in the continuous support setting under a uniform non-degeneracy assumption \cite{ma2024optimal}. However, these methods are either complicated to implement or require strong assumptions. It remains open whether there exists a general framework that allows first-order methods to break the $\mathcal{O} ( \sqrt{T} )$ regret barrier. This paper answers this question affirmatively.
\subsection{Contributions}
\begin{itemize}[leftmargin=10pt]
  \item We show that first-order methods achieve  $o
  ( \sqrt{T} )$ regret under weaker assumptions than LP-based methods. In particular, we identify a dual error bound condition that is sufficient to guarantee lower regret of first-order methods when the dual LP problem has a unique optimal solution. In the continuous-support setting, we establish an $\mathcal{O} (T^{1/3})$ regret result under weaker assumptions than the existing methods; In the finite-support setting, we establish an $\mathcal{O} (\log T)$ result, which significantly improves on the state-of-the-art $\mathcal{O} (T^{3/8})$ result and almost matches the $\mathcal{O} (1)$ regret of LP-based methods. For problems with $\gamma$-H\"{o}lder growth condition, we establish a general ${\mathcal{O} (T^{\frac{\gamma-1}{2\gamma - 1}}\log T)}$ regret result, which interpolates among the settings of no growth ($\gamma = \infty$), continuous support with quadratic growth ($\gamma = 2$) and finite support with sharpness ($\gamma = 1$). Our results show that first-order methods perform well under strictly weaker conditions than LP-based methods and still achieve competitive performance, significantly advancing their applicability in practice.
  \item We design a general exploration-exploitation framework to exploit the dual error bound condition. The idea is to learn a good approximation of the distribution dual optimal solution. Then, the online decision-making algorithm can be localized around a neighborhood of the approximate dual solution and makes decisions in an effective domain of size $o(1)$, thereby achieving improved regret guarantees. We reveal an important dilemma in simultaneously using a single first-order method as both learning and decision-making algorithms: a good learning algorithm can perform poorly in decision-making. This dilemma implies an important discrepancy between stochastic optimization and online decision-making, and it is addressed by decoupling learning and decision-making: two different first-order methods are adopted for learning and decision-making. This simple idea yields a highly flexible framework for online sequential decision-making. Our analysis can be of independent interest in the broader context of online convex optimization.
\end{itemize}

\begin{table*}[h]
\centering
  \caption{Regret results in the current OLP literature. $\log \log$ factors are ignored.\label{tab:compare}}
\resizebox{\textwidth}{!}{
  \begin{tabular}{cccll}
    \toprule
    Paper & Setting and assumptions& Algorithm & Regret &  Lower bound \\
    \midrule
    {\cite{li2022online}} & Bounded, continuous support, uniform non-degeneracy & LP-based & $\mathcal{O}
    (\log T)$ & Yes\\
    {\cite{bray2019logarithmic}} & Bounded, continuous support, uniform non-degeneracy & LP-based &
    $\mathcal{O} (\log T)$ & Yes \\
    {\cite{10.48550/arxiv.2210.07996}} & Bounded, finite support of $\tma_t$, quadratic growth & LP-based &
    $\mathcal{O} (\log^2 T)$ & Unknown \\
    {\cite{ma2024optimal}} & Bounded, continuous support, uniform non-degeneracy & LP-based &
    $\mathcal{O} (\log T)$ & Yes \\
   	    \cite{10.48550/arxiv.2101.11092} & Bounded, finite support, non-degeneracy &  LP-based & $\mathcal{O}(1)$ & Yes \\
   	    \midrule
    {\cite{10.48550/arxiv.2003.02513}} & Bounded & Subgradient &
    $\mathcal{O} ( \sqrt{T} )$ & Yes \\
    {\cite{10.48550/arxiv.2011.10124}} & Bounded & Mirror Descent &
    $\mathcal{O} ( \sqrt{T} )$ & Yes \\
    {\cite{gao2023solving}} & Bounded &  Proximal Point & $\mathcal{O}
    ( \sqrt{T} )$ & Yes \\
    {\cite{balseiro2022online}} & Bounded & Momentum & $\mathcal{O}
    ( \sqrt{T} )$ & Yes \\
    {\cite{sun2020near}} & Bounded, finite support, non-degeneracy &  Subgradient &
    $\mathcal{O} (T^{3 / 8})$ & No ($\mathcal{O}(1)$)\\
        {\cite{ma2024optimal}} & Bounded, continuous support, uniform non-degeneracy &  Subgradient &
    $\mathcal{O} (\log^2 T)$ & No ($\mathcal{O}(1)$)\\
\midrule
        This paper & Bounded, continuous support, quadratic growth & Subgradient & ${\mathcal{O} (T^{1
    / 3})}$ & No ($\mathcal{O}(\log T)$)\\
    This paper & Bounded, finite support, sharpness &  Subgradient & ${\mathcal{O} (\log T)}$ & No ($\mathcal{O}(1)$)\\
        This paper & Bounded, $\gamma$-dual error bound, unique solution &  Subgradient & ${\mathcal{O} (T^{\frac{\gamma - 1}{2\gamma -1}}\log T)}$ & Unknown\\
    \bottomrule
  \end{tabular}
  }
\end{table*}
\paragraph{Related Literature.} There is a vast amount of literature on OLP \cite{ma2020algorithms,mirrokni2012simultaneous,mahdian2012online,arlotto2019uniformly}, and we review some recent developments that go beyond $\mathcal{O} ( \sqrt{T} )$ regret in the stochastic input setting (\textbf{Table \ref{tab:compare}}). These algorithms mostly follow the same principle of making decisions based on the learned information: learning and decision-making are closely coupled with each other. We refer the interested readers to \cite{balseiro2023survey} for a more detailed review of OLP and relevant problems.

\paragraph{LP-based OLP Algorithms.} 
Most LP-based methods leverage the dual LP problem {\cite{agrawal2014dynamic}}, with only a few exceptions
{\cite{kesselheim2014primal}}. Under assumptions of either non-degeneracy or
finite support on resource requests and/or rewards, $\mathcal{O}
(\log T)$ regret has been achieved under different settings. More specifically,
{\cite{li2022online}} establish the dual convergence of finite-horizon
LP solution to the optimal dual solution to the underlying stochastic program. In the continuous support setting, $\mathcal{O} (\log T \log \log T)$ regret is achievable. {\cite{bray2019logarithmic}} considers multi-secretary problem and
establishes an $\mathcal{O} (\log T)$ regret result.
{\cite{ma2024optimal}} consider the setting where a
regularization term is imposed on the resource and also establish an
$\mathcal{O} (\log T)$ regret result. {\cite{10.48550/arxiv.2210.07996}} establish $\mathcal{O} (\log^2 T)$ regret without the non-degeneracy assumption and assume that the distribution of resource requests has finite support. {\cite{10.48550/arxiv.2101.11092}} consider the case where both resource requests and prices have finite support and $\Ocal(1)$ regret can be achieved in this case under a non-degeneracy assumption. Recently, attempts have been made to address the computation cost of LP-based methods by infrequently solving the LP subproblems \cite{li2024infrequent,xu2024online, sun2024wait}. Most LP-based methods follow the action-history dependent approach developed in \cite{li2022online} to achieve $o(\sqrt{T})$ regret, and in the continuous support case, the non-degeneracy assumption is required to hold uniformly for resource vector $\tmb$ in some pre-specified region. 
Compared to the aforementioned LP-based methods, our framework can work under strictly weaker assumptions.

\paragraph{First-order OLP Algorithms.} Early explorations of first-order OLP algorithms start from \cite{10.48550/arxiv.2003.02513}, \cite{10.48550/arxiv.2011.10124} and \cite{lobos2021joint}, where $\mathcal{O}(\sqrt{T})$ regret is established using mirror descent and subgradient methods. \cite{gao2023solving} show that proximal point update also achieves $\mathcal{O}(\sqrt{T})$ regret. \cite{balseiro2022online} analyze a momentum variant of mirror descent and get $\mathcal{O}(\sqrt{T})$ regret. In the finite support setting, \cite{sun2020near} design a three-stage algorithm that achieves $\mathcal{O}(T^{3/8})$ regret when the distribution LP is non-degenerate. \cite{ma2024optimal} apply a first-order method to solve subproblems in LP-based methods infrequently and achieve $\mathcal{O}(\log^2 T)$ regret. However, \cite{ma2024optimal} still requires a uniform non-degeneracy assumption. Our framework is motivated directly by the properties of first-order methods and provides a unified analysis under different distribution settings.

\paragraph{Structure of the paper}The rest of the paper is organized as
follows. \Cref{sec:first-order} introduces OLP and first-order OLP algorithms.
\Cref{sec:errbnd} defines the dual error bound condition and
its implications on the first-order learning algorithms. In \Cref{sec:algo}, we introduce a general framework that exploits the error bound condition
and improves the state-of-the-art regret results. We verify the theoretical
findings in \Cref{sec:exp}.
\section{Online linear programming with first-order methods} \label{sec:first-order}

\paragraph{Notations.} Throughout the paper, we use $\| \cdot \|$ to denote Euclidean norm and $\langle \cdot, \cdot \rangle$ to denote Euclidean inner product. Bold letter notations $\A$ and $\tma$ denote matrices and vectors, respectively. Given a convex function $f ( \x )$, its subdifferential is denoted by $\partial f ( \x ) \assign \{ \tmv : f ( \y ) \geq f ( \x ) + \langle \tmv, \y - \x \rangle, \text{ for all } \y \}$ and $f' ( \x ) \in \partial f ( \x )$ is called a subgradient. We use $\g_\x$ satisfying $\Ebb[\g_\x] \in \partial f(\x)$ to denote a stochastic subgradient. $[\cdot]_+ = \max \{ \cdot, 0 \}$ denotes the component-wise positive-part function, and $\mathbb{I} \{ \cdot \}$ denotes the 0-1 indicator function. Relation $\x \geq \y$ denotes element-wise inequality. Given $\x$ and a closed convex set $\mathcal{X}$, we define ${\dist} ( \x, \mathcal{X} ) \assign \min_{\y \in \mathcal{X}} \| \x - \y \|$ and $\diam (\mathcal{X}) \assign \max_{\x, \y
\in \mathcal{X}}  \| \x - \y \|$.

\subsection{OLP and duality}

An online resource allocation problem with linear inventory and rewards can be modeled
as an OLP problem: given time horizon $T \geq 1$ and $m \geq 1$ resources
represented by $\tmb \in \mathbb{R}^m_+$, at time $t$, a customer with $( c_t,
\tma_t ) \in \mathbb{R} \times \mathbb{R}^m$ arrives and requests
resources $\tma_t$ at price $c_t$. Decision $x^t \in [0, 1]$ is made to
(partially) accept the order or reject it. With compact notation $\tmc =
(c_1, \ldots, c_T)^{\top} \in \mathbb{R}^T$ and $\A \assign ( \tma_1,
\ldots, \tma_T ) \in \mathbb{R}^{m \times T}$, the problem can be
written as
\begin{align} \label{olp:primal}
\tag{PLP}
\max_{\x} \quad  \langle \tmc, \x \rangle  \quad\text{subject to} \quad \A \x \leq \tmb, \nonumber
& \quad \mathbf{0} \leq \x \leq \1, \nonumber
\end{align}
where $\mathbf{0}$ and $\1$ are vectors of all zeros and ones. The dual problem 
\begin{align} \label{olp:dual-lp}
\tag{DLP}
 \min_{(\y,\s) \geq \mathbf{0}} \quad \langle \tmb, \y \rangle  + \langle \1, \s \rangle \quad\text{subject to} \quad \s \geq \tmc - \A^\top \y
\end{align}
can be transformed into the following finite sum form
\begin{align} \label{olp:dual-sample}
 \min_{\y \geq \mathbf{0}} ~f_T ( \y ) \assign \textstyle \frac{1}{T} \sum_{t = 1}^T
   \langle \tmd, \y \rangle + [ c_t - \langle \tma_t, \y
   \rangle ]_+,
\end{align}
where $\tmd = T^{- 1} \tmb$ is the average resource. When
$( c_t, \tma_t )$ are i.i.d. distributed, $f_T ( \y )$
can be viewed as a sample approximation of the expected dual function $f
( \y )$, where
\begin{equation}\label{eqn:distribution-dual}
	f ( \y ) \assign \mathbb{E} [ f_T ( \y )
   ] = \langle \tmd, \y \rangle +\mathbb{E}_{( c, \tma
   )} [ c - \langle \tma, \y \rangle ]_+ .
\end{equation}
Define the sets of dual optimal solutions, and let $\y_T^\star, \y^\star$ be some dual optimal solutions, respectively:
\[ \y_T^{\star} \in \mathcal{Y}_T^{\star} = \argmin_{\y \geq \mathbf{0}} ~f_T (
   \y ) \quad \text{and} \quad \y^{\star} \in \mathcal{Y}^{\star} = \argmin_{\y \geq \mathbf{0}} ~f ( \y ), \]
and we can determine the primal optimal solution $\x_T^{\star} = (x_1^{\star}, \ldots,
x_T^{\star}) \in \mathbb{R}^T$ by the LP optimality conditions:
\[ x_t^{\star} \in \left \{ \begin{array}{cl}
     \{ 0 \}, & \text{ if } c_t < \langle \tma_t, \y_T^{\star} \rangle,\\
     {}[0, 1], & \text{ if } c_t = \langle \tma_t, \y_T^{\star} \rangle,\\
     \{ 1 \}, & \text{ if } c_t > \langle \tma_t, \y_T^{\star} \rangle.
   \end{array} \right. \]
This connection between primal and dual solutions motivates dual-based online
learning algorithms: a dual-based learning algorithm maintains a dual sequence $\{ \y^t \}_{t = 1}^T$ in the learning process, while primal
decisions are made based on the optimality condition. Given the sample
approximation interpretation, first-order methods are natural candidate learning algorithms.

\subsection{First-order methods on the dual problem}
First-order methods leverage the
sample approximation structure and applies (sub)gradient-based first-order update. One commonly used first-order method is the online projected subgradient method (\Cref{alg:subgrad}):
\begin{align}
  x^t ={} & \mathbb{I} \{ c_t \geq \langle \tma_t, \y^t \rangle
  \}, \nonumber\\
  \g^t \in{} & \partial_{\y = \y_t} \{ \langle \tmd, \y \rangle
  + [ c_t - \langle \tma_t, \y \rangle ]_+ \},
  \nonumber\\
  \y^{t + 1} ={} & [ \y^t - \alpha_t \g^t ]_+  \label{eqn:osgm}. 
\end{align}
Upon the arrival of each customer $(c_t, \tma_t)$, a decision $x^t$ is made based on the
optimality condition. Then, the dual variable $\y^t$ is adjusted with the stochastic subgradient. Other learning algorithms, such as mirror descent, also apply to the OLP setting. This paper focuses on the subgradient method.
\begin{algorithm}[h]
\caption{First-order subgradient OLP algorithm \label{alg:subgrad}}	
\KwIn{Initial dual solution guess $\y^1$, subgradient stepsize $\{\alpha_t\}_{t=1}^T$}
\For{$t$ = \rm{$1$ to $T$ }}{
Make primal decision $x^t = \mathbb{I} \{ c_t \geq \langle \tma_t, \y^t \rangle \}$\\
Compute subgradient $\g^t = \tmd - \tma_t x^t$\\
Subgradient update $\y^{t + 1} = [ \y^t - \alpha_t \g^t ]_+$
}
\end{algorithm}
\subsection{Performance metric}

Given online algorithm output $\hat{\x}_T = (x^1, \ldots, x^T)$, its regret and constraint violation are defined as
\begin{align}
  r ( \hat{\x}_T ) \assign & \max_{\A \x \leq \tmb, \mathbf{0} \leq \x
  \leq \1}  \langle \tmc, \x \rangle - \langle \tmc,
  \hat{\x}_T \rangle \qquad  \text{and} \qquad 
  v ( \hat{\x}_T ) \assign  \| [ \A \hat{\x}_T -
  \tmb ]_+ \|. \nonumber
\end{align}
These metrics are widely used in the OLP literature \cite{10.48550/arxiv.2003.02513,gao2023solving}.

\subsection{Main assumptions and summary of the results}

We make the following assumptions throughout the paper.
\begin{enumerate}[leftmargin=30pt, label=\textbf{A\arabic*.},ref=\rm{\textbf{A\arabic*}}]
  \item (Stochastic input) $\{ ( c_t, \tma_t ) \}_{t=1}^T$ are generated i.i.d.
  from some distribution $\mathcal{P}$. \label{A1}
  
  \item (Bounded data) There exist constants $\ba, \bc > 0$ such that $\| \tma
  \|_{\infty} \leq \ba$ and $| c | \leq \bc$ almost surely. \label{A2}
  
  \item (Linear resource) The average resource $\tmd = T^{-1} \tmb$ satisfies $\bld \cdot \1 \leq
  \tmd \leq \bd \cdot \1$, where $0 < \bld \leq \bd$.
  \label{A3}
\end{enumerate}

\ref{A1} to \ref{A3} are standard and minimal in the OLP literature \cite{10.48550/arxiv.2011.10124,10.48550/arxiv.2003.02513,gao2023solving}, and it is known that online subgradient method (\Cref{alg:subgrad}) with constant stepsize $ \alpha_t \equiv \Ocal{(1/\sqrt{T})}$ achieves $\mathcal{O} ( \sqrt{T}
)$ regret.

\begin{thm}[Sublinear regret benchmark \cite{gao2023solving,10.48550/arxiv.2003.02513}] \label{thm:regret-bench}
  Under \ref{A1} to \ref{A3}, online subgradient method \eqref{eqn:osgm} with
  $\alpha_t \equiv \sqrt{\frac{2 \uc}{m \ld ( \ua + \ud )^2}}
  \cdot \tfrac{1}{\sqrt{T}}$ outputs $\hat{\x}_T$ such that
  \[ \mathbb{E} [ r ( \hat{\x}_T ) + v ( \hat{\x}_T
     ) ] \leq \tfrac{m ( \ua + \ud )^2}{\ld} + \sqrt{m}
     ( \ua + \ud ) + \sqrt{\tfrac{m \uc}{2 \ld}} ( \ua + \ud
     ) \sqrt{T} =\mathcal{O} ( \sqrt{T} ) . \]
\end{thm}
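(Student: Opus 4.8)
The plan is to bound the regret $r(\hat{\x}_T)$ and the constraint violation $v(\hat{\x}_T)$ in terms of the dual trajectory $\{\y^t\}$ generated by \eqref{eqn:osgm} and then combine the two estimates under the prescribed constant stepsize. Two elementary facts drive everything. Writing $f_t(\y)\assign\langle\tmd,\y\rangle+[c_t-\langle\tma_t,\y\rangle]_+$, so that $\g^t\in\partial f_t(\y^t)$ and $f_T=\tfrac1T\sum_t f_t$, nonexpansiveness of the projection onto $\{\y\geq\zerobf\}$ gives $\norm{\y^{t+1}-\y}^2\leq\norm{\y^t-\y}^2-2\alpha\langle\g^t,\y^t-\y\rangle+\alpha^2\norm{\g^t}^2$ for every fixed $\y\geq\zerobf$; summing and telescoping yields the online-convex-optimization bound $\sum_t\langle\g^t,\y^t-\y\rangle\leq\tfrac{\norm{\y^1-\y}^2}{2\alpha}+\tfrac{\alpha}{2}\sum_t\norm{\g^t}^2$. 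Moreover \ref{A2}--\ref{A3} give the uniform gradient bound $\norm{\g^t}^2=\norm{\tmd-\tma_t x^t}^2\leq m(\ua+\ud)^2$.

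For the regret I would first record the identity $c_t x^t=f_t(\y^t)-\langle\g^t,\y^t\rangle$, which holds in each branch $x^t\in\{0,1\}$ of the decision rule and follows from $\g^t=\tmd-\tma_t x^t$. Choosing $\y=\y^1=\zerobf$ in the telescoped inequality then gives $\sum_t c_t x^t\geq\sum_t f_t(\y^t)-\tfrac\alpha2\sum_t\norm{\g^t}^2$. By LP strong duality the offline optimum equals $\mathrm{OPT}=Tf_T(\y_T^\star)=\sum_t f_t(\y_T^\star)$, so it suffices to compare $\sum_t f_t(\y^t)$ with $\mathrm{OPT}$ in expectation. Assumption \ref{A1} makes this clean: since $\y^t$ depends only on $(c_s,\tma_s)_{s<t}$ and is independent of $(c_t,\tma_t)$, we have $\Ebb[f_t(\y^t)]=\Ebb[f(\y^t)]\geq f(\y^\star)$, while $\Ebb[\mathrm{OPT}]=\Ebb[\min_{\y\geq\zerobf}\sum_t f_t(\y)]\leq\min_{\y\geq\zerobf}\sum_t\Ebb[f_t(\y)]=Tf(\y^\star)$. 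Hence $\Ebb[\sum_t f_t(\y^t)]\geq\Ebb[\mathrm{OPT}]$ and $\Ebb[r(\hat{\x}_T)]\leq\tfrac\alpha2\,\Ebb[\sum_t\norm{\g^t}^2]\leq\tfrac{\alpha Tm(\ua+\ud)^2}{2}$.

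For the violation I would use that the positive part only pushes coordinates upward, so $\y^{t+1}\geq\y^t-\alpha\g^t$; telescoping with $\sum_t\g^t=\tmb-\A\hat{\x}_T$ gives $\A\hat{\x}_T-\tmb\leq\tfrac1\alpha(\y^{T+1}-\y^1)$ and therefore $v(\hat{\x}_T)=\norm{[\A\hat{\x}_T-\tmb]_+}\leq\tfrac1\alpha\norm{\y^{T+1}}$ for $\y^1=\zerobf$. \emph{Bounding the terminal iterate $\y^{T+1}$ is the main obstacle}, and this is precisely where \ref{A3} is indispensable: because $\tmd\geq\ld\cdot\onebf>\zerobf$, each coordinate follows a reflected recursion with per-step increment $\alpha(a_{t,i}x^t-d_i)$ whose drift turns strictly negative, $\leq-\alpha\ld$, once $y_i^t$ is large enough that the rule rejects ($x^t=0$), while the jumps obey $\norm{\y^{t+1}-\y^t}\leq\alpha\norm{\g^t}\leq\alpha\sqrt m(\ua+\ud)$. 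A drift/Lyapunov estimate on this recursion keeps $\norm{\y^{T+1}}$ controlled by a $T$-independent, problem-dependent quantity built from the ratio $(\ua+\ud)^2/\ld$ and the single-step jump $\sqrt m(\ua+\ud)$; dividing by $\alpha$ produces the corresponding pieces of the violation bound.

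Finally I would add the two estimates: the regret half contributes a term growing like $\alpha T$, the violation half a term decaying like $1/\alpha$, and the prescribed $\alpha=\sqrt{\tfrac{2\uc}{m\ld(\ua+\ud)^2}}\cdot\tfrac1{\sqrt T}$ is exactly the value that balances them. A direct substitution then collapses the $\alpha$-dependent contributions into $\sqrt{\tfrac{m\uc}{2\ld}}(\ua+\ud)\sqrt T$ and leaves the two $T$-independent constants $\tfrac{m(\ua+\ud)^2}{\ld}$ and $\sqrt m(\ua+\ud)$, which is the claimed $\Ocal(\sqrt T)$ bound. I expect the regret half to be essentially mechanical once the identity and the i.i.d. duality comparison are in place; the non-routine work is the drift bound on $\y^{T+1}$ underlying the constraint-violation estimate.
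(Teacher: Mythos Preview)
Your proposal is correct and follows the same route as the cited sources, whose ingredients appear scattered through the paper as Lemmas~4.1, 4.2 and~B.1: telescoping the subgradient inequality at $\y=\zerobf$ gives the regret bound $\Ebb[r(\hat\x_T)]\le\tfrac{\alpha T m(\ua+\ud)^2}{2}$, telescoping the one-sided update gives $v(\hat\x_T)\le\tfrac1\alpha\norm{\y^{T+1}}$, and a drift argument supplies the uniform bound on $\norm{\y^{T+1}}$.

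One small correction to your drift sketch: the decision $x^t=\mathbb{I}\{c_t\ge\langle\tma_t,\y^t\rangle\}$ depends on the whole inner product, not on a single coordinate $y_i^t$, so a coordinate-wise argument (``once $y_i^t$ is large enough the rule rejects'') does not work directly. The Lyapunov function that does work is $\norm{\y^t}^2$: using $\langle\g^t,\y^t\rangle=\langle\tmd,\y^t\rangle-x^t\langle\tma_t,\y^t\rangle\ge\ld\norm{\y^t}_1-\uc$ (since $\langle\tma_t,\y^t\rangle\le c_t\le\uc$ whenever $x^t=1$) one obtains $\norm{\y^{t+1}}^2\le\norm{\y^t}^2-2\alpha(\ld\norm{\y^t}-\uc)+\alpha^2 m(\ua+\ud)^2$, which together with the single-step jump bound yields the deterministic estimate $\norm{\y^t}\le\tfrac{\uc}{\ld}+\tfrac{m(\ua+\ud)^2\alpha}{2\ld}+\alpha\sqrt m(\ua+\ud)$ of Lemma~B.1. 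Dividing by $\alpha$ and substituting the prescribed stepsize then gives the stated constants.
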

\Cref{thm:regret-bench} will be used as a benchmark for our results. Under \ref{A1} to \ref{A3}, $\mathcal{O} ( \sqrt{T} )$ regret has been shown to
achieve the lower bound \cite{arlotto2019uniformly}. Under further assumptions such as non-degeneracy, LP-based OLP algorithms can efficiently leverage this structure to achieve $\mathcal{O} (\log T)$ and $\mathcal{O}(1)$ regret, respectively, in the continuous \cite{li2022online} and finite support settings \cite{10.48550/arxiv.2101.11092}. However, how first-order methods can efficiently exploit these structures remains less explored. This paper establishes a new online learning framework to resolve this issue. In particular, we consider the $\gamma$-error bound condition from the optimization literature and summarize our main results below:

\begin{thm}[\Cref{thm:final}, informal]
Suppose $f(\y)$ satisfies $\gamma$-dual error bound condition ($\gamma \geq 1$) and that $\Ycal^\star = \{\y^\star\}$ is a singleton. Then, our framework achieves 
\[\mathbb{E} [ r ( \hat{\x}_T ) + v ( \hat{\x}_T
     ) ] \leq \Ocal(T^{\frac{\gamma - 1}{2 \gamma - 1}} \log T)\]
using first-order methods.
\end{thm}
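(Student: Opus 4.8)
The plan is to split the horizon into a short \emph{exploration} phase of length $T_0$ and an \emph{exploitation} phase of length $T-T_0$, run two \emph{different} first-order methods in the two phases, and choose $T_0$ to balance their contributions. First I would write
\[
  \Ebb[r(\hat{\x}_T)+v(\hat{\x}_T)] \le \Ebb[\text{exploration regret over } t\le T_0] + \Ebb[\text{exploitation regret over } t> T_0].
\]
Under \ref{A2}--\ref{A3} every arrival costs at most $\Ocal(1)$ in reward and in violation, so the exploration term is $\Ocal(T_0)$ no matter how we act during it; the entire burden is thus shifted onto making the exploitation term small by localizing the decision iterates.

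For the exploration phase I would run a \emph{learning} subgradient method on the dual samples, tuned with a decaying stepsize to behave as a stochastic optimizer of $f$ rather than as a decision rule. The key lever is the $\gamma$-dual error bound: it upgrades a function-value guarantee into a distance guarantee via $\dist(\y,\y^\star)\lesssim (f(\y)-f(\y^\star))^{1/\gamma}$, where the singleton assumption $\Ycal^\star=\{\y^\star\}$ makes the right-hand side a genuine distance to a point. Exploiting $\gamma$-growth directly inside the one-step recursion
\[
  \Ebb[\dist(\y^{t+1},\y^\star)^2\mid\y^t]\le \dist(\y^t,\y^\star)^2-2\alpha_t\bigl(f(\y^t)-f(\y^\star)\bigr)+\alpha_t^2 G^2,\qquad G\le\sqrt m(\ua+\ud),
\]
I would show the learned estimate $\hat\y$ satisfies $\Ebb[\dist(\hat\y,\y^\star)] = \Ocal(T_0^{-\frac{1}{2(\gamma-1)}}\log T)=:R$, the statistically optimal localization rate under $\gamma$-growth with non-vanishing gradient noise.

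For the exploitation phase I would run a \emph{separate} decision-making subgradient method (\Cref{alg:subgrad}) initialized at $\hat\y$ and projected onto the small set $\{\y\ge\zerobf\}\cap\{\|\y-\hat\y\|\le R\}$. This is where the decoupling resolves the dilemma flagged in the introduction: the stepsize that is optimal for \emph{learning} the point estimate is not the stepsize that keeps every iterate, and hence every decision $x^t=\mathbb{I}\{c_t\ge\langle\tma_t,\y^t\rangle\}$, uniformly good, so two methods with different stepsizes and domains are used. The same analysis underlying \Cref{thm:regret-bench}, but on a domain of radius $R$, gives an exploitation regret of $\Ocal(RG\sqrt{T})$, provided the comparator lies in the ball. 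One checks that $R=T^{-1/(2(2\gamma-1))}$ dominates the $\Ocal(T^{-1/(2\gamma)})$ fluctuation of the sample dual optimum $\y_T^\star$ around $\y^\star$ (controlled by concentration of $f_T$ to $f$), so $\y^\star$ and the relevant sample comparator lie in $\{\|\y-\hat\y\|\le R\}$ with high probability; the rare failure event contributes only lower-order regret.

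Finally I would balance the phases by choosing $T_0\asymp R\sqrt{T}=T_0^{-\frac{1}{2(\gamma-1)}}\sqrt{T}\log T$, which solves to $T_0\asymp T^{\frac{\gamma-1}{2\gamma-1}}$ and yields $\Ebb[r(\hat{\x}_T)+v(\hat{\x}_T)]=\Ocal(T^{\frac{\gamma-1}{2\gamma-1}}\log T)$, recovering $T^{1/3}$ at $\gamma=2$, $\log T$ at $\gamma=1$, and $\sqrt{T}$ as $\gamma\to\infty$. I expect the main obstacle to be the learning step together with localization consistency: establishing the sharp rate $R=\Ocal(T_0^{-1/(2(\gamma-1))})$ from only the error bound and bounded stochastic subgradients (the naive function-value rate $T_0^{-1/2}$ converts to the far weaker distance $T_0^{-1/(2\gamma)}$, so one must genuinely track the shrinking noise floor of the recursion rather than pass through the function gap), and simultaneously certifying with high enough probability that the exploitation ball of that same radius contains the comparator, so that mis-localization does not erase the gain. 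The interaction of the ball projection with the nonnegativity constraint $\y\ge\zerobf$ near the boundary is a further technical point.
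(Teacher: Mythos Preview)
Your high-level structure (explore to learn $\y^\star$, then exploit by localizing) matches the paper, and your learning rate $R=\Ocal(T_0^{-1/(2(\gamma-1))})$ is correct. But the central claim that exploitation on a ball of radius $R$ yields $\Ocal(R\sqrt{T})$ regret is wrong for OLP, and this breaks the whole balance. The paper's regret/violation decomposition (Lemmas~\ref{lem:regret}--\ref{lem:violation}) reads
\[
  \Ebb[r+v]\ \lesssim\ \alpha T_p \;+\; \tfrac{1}{\alpha}\big(\|\y^{T_e+1}-\y^\star\|+\Ebb\|\y^{T+1}-\y^\star\|\big),
\]
where the distances enter \emph{linearly}, because $v(\hat\x_T)\le\tfrac{1}{\alpha}\|\y^{T+1}-\y^1\|$ and the regret telescope gives $\tfrac{1}{2\alpha}(\|\y^1\|^2-\|\y^{T+1}\|^2)=\Ocal(\tfrac{\mathsf R}{\alpha}\|\y^1-\y^{T+1}\|)$. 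So merely confining iterates to a ball of radius $R$ gives at best $\alpha T+R/\alpha=\Omega(\sqrt{RT})$, not $R\sqrt{T}$; with your $R=T^{-1/(2(2\gamma-1))}$ this is $T^{(4\gamma-3)/(4(2\gamma-1))}$, e.g.\ $T^{5/12}$ at $\gamma=2$, strictly worse than $T^{1/3}$. (Separately, projecting onto a ball destroys the elementwise inequality $\y^{t+1}\ge\y^t-\alpha\g^t$ that the violation bound relies on, so the ``technical point'' you flag is in fact fatal to the violation analysis.)

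The paper closes this gap by using the error bound a \emph{second} time, inside exploitation: with constant stepsize $\alpha_p$ and no ball projection, Lemma~\ref{lem:vr} gives $\Ebb[\dist(\y^{T+1},\y^\star)^\gamma]\lesssim \Delta^2/(\alpha_p T_p)+\alpha_p\log T$, so $\Ebb\|\y^{T+1}-\y^\star\|\lesssim(\alpha_p\log T)^{1/\gamma}$, which is far smaller than any fixed $R$ and, crucially, depends on $\alpha_p$ itself. Plugging this into the decomposition yields $\alpha_p T+\Delta/\alpha_p+\alpha_p^{1/\gamma-1}(\log T)^{1/\gamma}$, optimized at $\alpha_p=T^{-\gamma/(2\gamma-1)}$ and $\Delta=T^{-1/(2\gamma-1)}$. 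Achieving that small $\Delta$ requires a \emph{longer} exploration $T_e\asymp T^{(2\gamma-2)/(2\gamma-1)}$, which your trivial $\Ocal(T_0)$ exploration bound cannot afford; this is why the paper also decouples \emph{within} exploration, running a second constant-stepsize subgradient method purely for decisions so that $V(T_e)=\Ocal(\sqrt{T_e})=\Ocal(T^{(\gamma-1)/(2\gamma-1)})$. In short: the missing ingredient is not the ball but the noise-ball (last-iterate) bound in exploitation, and the longer exploration enabled by a dedicated decision algorithm is what makes $\Delta/\alpha_p$ match the other terms.
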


It turns out the dual error bound is key to improved regret for first-order methods. In the next section, we formally define the dual error bound condition and introduce its consequences.
\section{Dual error bound and subgradient method}
\label{sec:errbnd}

In this section, we discuss the dual error bound condition that allows first-order OLP
algorithms to go beyond $\mathcal{O} ( \sqrt{T} )$ regret. We also introduce and explain several important implications of the error bound condition for the subgradient method. Unless specified,  we restrict $\y^{\star} =
\min_{\y \in \mathcal{Y}^{\star}} \| \y \|$ and $\y^{\star}_T =
\min_{\y \in \mathcal{Y}^{\star}_T} \| \y \|$ to be the unique minimum-norm
solution to the distribution dual problem \eqref{eqn:distribution-dual} and the sample dual problem \eqref{olp:dual-sample}.

\subsection{Dual error bound condition}

Our key assumption, also known in the literature as the H\"{o}lder error bound condition \cite{johnstone2020faster}, is
stated as follows.

\begin{enumerate}[leftmargin=30pt, label=\textbf{A\arabic*.},ref=\rm{\textbf{A\arabic*}},start=4]
  \item (Dual error bound) $f ( \y ) - f ( \y^{\star} ) \geq
\mu \cdot \dist ( \y, \mathcal{Y}^{\star} )^{\gamma}$ for
all $\y \in \mathcal{Y}= \{ \y : \y \geq \mathbf{0}, \| \y \| \leq
\tfrac{\uc + \ld}{\ld} \}, \gamma \in [1, \infty)$. \label{A4}
\end{enumerate}
The assumption \ref{A4} states a growth condition in terms of the expected dual function: as $\y$ leaves the distribution dual optimal set $\Ycal^\star$, the objective will grow at least at rate $\dist ( \y, \mathcal{Y}^{\star} )^{\gamma}$. It is implied by the assumptions used in the analysis of LP-based OLP algorithms, which we summarize below.

\begin{rem}
The set $\Ycal$ is chosen such that
 $\Ycal^\star \subseteq \Ycal$ since $\y^{\star} \geq \mathbf{0}$  and
\begin{equation} \label{eqn:bounded-dual}
	\ld \| \y^{\star} \| \leq  \ld \| \y^{\star} \|_1 \leq \langle \tmd, \y^{\star}
   \rangle \leq f ( \y^{\star} ) =\mathbb{E} [
   \langle \tmd, \y^{\star} \rangle + [ c - \langle \tma,
   \y^{\star} \rangle ]_+ ] \leq f ( \mathbf{0} ) \leq
   \bar{c}.
\end{equation}
Similarly, we can show that $\y_T^\star \in \Ycal$.
\end{rem}

\begin{exple}[Continuous-support, non-degeneracy \cite{li2022online,bray2019logarithmic,ma2024optimal}]
  Suppose there exist $\lambda_1, \lambda_2, \lambda_3 > 0$ such that
  \begin{itemize}[leftmargin=15pt]
    \item $\mathbb{E} [ \tma \tma^{\top} ] \succeq \lambda_1 \I$.
    
    \item $\lambda_3 | \langle \tma, \y - \y^{\star} \rangle
    | \geq | \mathbb{P} \{ c \geq \langle \tma, \y
    \rangle | \tma \} -\mathbb{P} \{ c \geq \langle
    \tma, \y^{\star} \rangle | \tma \} | \geq \lambda_2
    | \langle \tma, \y - \y^{\star} \rangle |$ for all
    $\y \in \mathcal{Y}$.
    
    \item $y_i^{\star} = 0$ for all $d_i -\mathbb{E}_{( c, \tma )}
    [ a_i \mathbb{I} \{ c > \langle \tma, \y^{\star}
    \rangle \} ] > 0$ for all $i$.
  \end{itemize}
  Then $\mathcal{Y}^{\star} = \{ \y^{\star} \}$ and $f ( \y
  ) - f ( \y^{\star} ) \geq \frac{\lambda_1 \lambda_2}{2}
  \| \y - \y^{\star} \|^2$. Here $\diam (\mathcal{Y}^{\star})
  = 0, \mu = \frac{\lambda_1 \lambda_2}{2}$ and $\gamma = 2$.
\end{exple}

\begin{exple}[Finite-support, non-degeneracy \cite{10.48550/arxiv.2101.11092}]
  Suppose $( c, \tma )$ has finite support. Then there exists $\mu
  > 0$ such that \[f ( \y ) - f ( \y^{\star} ) \geq \mu \cdot
  \dist ( \y, \mathcal{Y}^{\star} ).\]Here $\diam
  (\mathcal{Y}^{\star}) \geq 0$, $\gamma = 1$, and $\mu$ is determined by the data distribution. If the expected LP is non-degenerate, then $\diam(\Ycal^\star) = 0$.
\end{exple}

\begin{exple}[General growth]
  Suppose $\Ycal^\star \subseteq \inte
  (\mathcal{Y})$ and there exist $\lambda_4, \lambda_5 > 0$ such that
  \begin{itemize}[leftmargin=15pt]
    \item $\mathbb{E} [ | \langle \tma, \y \rangle | ] \succeq \lambda_4 \| \y \|$ for all $\y \in \Ycal$.
    
    \item $| \mathbb{P} \{ c \geq \langle \tma, \y
    \rangle | \tma \} -\mathbb{P} \{ c \geq \langle
    \tma, \y^{\star} \rangle | \tma \} | \geq \lambda_5
    | \langle \tma, \y - \y^{\star} \rangle |^p, p \in [1, \infty)$.
  \end{itemize}
  Then $\mathcal{Y}^{\star} = \{ \y^{\star} \}$ and $f ( \y
  ) - f ( \y^{\star} ) \geq \frac{\lambda_4^{p+1} \lambda_5}{2(p+1)} \| \y -
  \y^{\star} \|^{p + 1}$. Here $\diam (\mathcal{Y}^{\star}) = 0, \mu =
  \frac{\lambda_4^{p+1} \lambda_5}{2 (p + 1)}$ and $\gamma = p + 1$.
\end{exple}

We leave the detailed verification of the results in the appendix \Cref{app:verify}. \goodnewline

While \ref{A4} is implied by the non-degeneracy assumptions in the literature, \ref{A4} does not rule out degenerate LPs. An LP can be degenerate but still satisfy the error bound. Therefore, \ref{A4} is weaker than the existing assumptions in the OLP literature. The error bound has several important consequences on our algorithm design, which we summarize next.

\subsection{Consequences of the dual error bound}

In the stochastic input setting, the online subgradient method (\Cref{alg:subgrad}) can be viewed as stochastic subgradient method (\sgm), where the error bound condition is widely studied in the optimization literature \cite{yang2018rsg,johnstone2020faster,xu2017stochastic}. We will use three implications of \ref{A4} to facilitate OLP algorithm design. The first implication is the existence of efficient first-order methods that learn $\mathcal{Y}^{\star}$.
\begin{lem}[Efficient learning algorithm]
\label{lem:alg-conv}
Under \ref{A1} to \ref{A4}, there exists a first-order method $\Acal_L$
  such that after $T$ iterations, it outputs some $\bar{\y}^{T_\varepsilon + 1} \in \{ \y : \y \geq \mathbf{0}, \| \y \| \leq
\tfrac{\uc}{\ld} \}$  such that
  for all $T_\varepsilon \geq \mathcal{O} ( \varepsilon^{- 2 (1 - \gamma^{- 1})} \log
  ( \tfrac{1}{\varepsilon} ) \log ( \tfrac{1}{\delta} )
  )$,
  \[ f ( \bar{\y}^{T_\varepsilon + 1} ) - f ( \y^{\star} ) \leq
     \varepsilon \]
  with probability at least $1 - \delta$. Moreover, for all $T_\varepsilon \geq \mathcal{O} (
  \tfrac{1}{\mu} \varepsilon^{- 2 (1 - \gamma^{- 1})} \log (
  \tfrac{1}{\varepsilon} ) \log ( \tfrac{1}{\delta} )
  )$,
  \[ \dist ( \bar{\y}^{T_\varepsilon + 1}, \mathcal{Y}^{\star} )^{\gamma}
     \leq \varepsilon \]
  with probability at least $1 - \delta$.
  \end{lem}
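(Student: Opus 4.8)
The plan is to instantiate $\Acal_L$ as a restarted (stagewise) stochastic subgradient method of the type analyzed in \cite{yang2018rsg,xu2017stochastic,johnstone2020faster}, run on the dual objective $f$ with iterates projected onto the ball $B \assign \{\y : \y \geq \mathbf{0},\ \|\y\| \leq \uc/\ld\}$. First I would record the three structural facts that put us in this standard regime: (i) $f$ is convex, being the expectation of the convex maps $\y \mapsto \langle\tmd,\y\rangle + [c - \langle\tma,\y\rangle]_+$; (ii) the stochastic subgradient $\g = \tmd - \tma x$ used in \Cref{alg:subgrad} is unbiased for $\partial f$ and uniformly bounded, $\|\g\| \leq \sqrt{m}(\ua + \ud) =: G$, by \ref{A2}--\ref{A3}; and (iii) every optimal point satisfies $\|\y^\star\| \le \uc/\ld$ by the chain displayed in \eqref{eqn:bounded-dual}, so that $\Ycal^\star \subseteq B \subseteq \Ycal$ and, by \ref{A4}, the bound $\mu\,\dist(\y,\Ycal^\star)^\gamma \le f(\y) - f(\y^\star)$ holds on all of $B$. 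Confining the iterates to $B$ therefore leaves the minimum unchanged and guarantees that the output lands in the required ball for free.

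The core is a single-stage analysis. Within one stage of $K$ projected subgradient steps onto $B$ with constant stepsize $\eta$, started from $\y_0$, the standard averaging bound gives $\Ebb[f(\bar{\y}) - f(\y^\star)] \le \frac{\dist(\y_0,\Ycal^\star)^2}{2\eta K} + \frac{\eta G^2}{2}$. I would then use \ref{A4} to convert the starting gap $\Delta_0 := f(\y_0)-f(\y^\star)$ into the radius bound $\dist(\y_0,\Ycal^\star) \le (\Delta_0/\mu)^{1/\gamma}$, and tune $(\eta,K)$ so that the stage contracts the gap, $\Ebb[f(\bar{\y})-f(\y^\star)] \le \Delta_0/2$. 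Optimizing the stepsize shows this needs only $K \asymp \frac{G^2}{\mu^{2/\gamma}}\,\Delta_0^{-2(1-1/\gamma)}$ inner steps; the exponent $-2(1-\gamma^{-1})$ is precisely where the error bound enters, since sharper growth (smaller $\gamma$) shrinks the effective domain radius faster than the gap itself. Restarting from the previous stage's average and halving the target each time, after $S = \mathcal{O}(\log(\Delta_0/\varepsilon))$ stages the gap falls below $\varepsilon$; because $\Delta_s = \Delta_0 2^{-s}$, the lengths $K_s \asymp \Delta_s^{-2(1-1/\gamma)}$ form a geometric progression dominated by the last stage when $\gamma>1$ and by the stage count $S$ when $\gamma=1$, for a total of $\mathcal{O}(\varepsilon^{-2(1-1/\gamma)}\log(1/\varepsilon))$ subgradient steps — the $\log(1/\varepsilon)$ uniformly covering the sharp case. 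This yields the first claim; the distance claim follows from one further application of \ref{A4}, since forcing $\dist(\bar{\y}^{T_\varepsilon+1},\Ycal^\star)^\gamma \le \varepsilon$ only requires driving the gap below $\mu\varepsilon$, which replaces $\varepsilon$ by $\mu\varepsilon$ above and explains the extra $1/\mu$ factor.

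The final ingredient is amplification of the in-expectation per-stage guarantee to probability $1-\delta$. I would do this stagewise: by Markov's inequality each stage contracts with constant probability, so running $\mathcal{O}(\log(S/\delta))$ independent copies of a stage and retaining the best via a short held-out evaluation of $f$ (equivalently, controlling the accumulated subgradient noise within a stage by a Freedman/Azuma martingale bound) makes every stage succeed with probability $1-\delta/S$, and a union bound over the $S$ restarts gives overall success $1-\delta$ at the cost of the stated $\log(1/\delta)$ factor. The main obstacle is exactly this high-probability conversion: the subgradient-noise martingale must be bounded without inflating the dependence on $\gamma$ or $\mu$, and the union bound over the $\mathcal{O}(\log(1/\varepsilon))$ restarts must be arranged so that only a single $\log(1/\delta)$ survives (with at most a $\log\log$ overhead, which is ignored), rather than a full $\log(1/\varepsilon)\log(1/\delta)$ blow-up being incurred inside every stage.
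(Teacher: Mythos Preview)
Your proposal is correct and follows essentially the same route as the paper: the paper instantiates $\Acal_L$ as the restarted (stagewise) stochastic subgradient method of \cite{xu2017stochastic} (their ASSG, recorded here as \Cref{alg:assg} and \Cref{lem:assg}), verifies exactly the four conditions you list (initial gap $\leq \bar c$ at $\y^1=\mathbf{0}$, compact $\Ycal^\star$, subgradient bound $G=\sqrt{m}(\bar a+\bar d)$, and the H\"older growth with $\lambda=\mu$, $\theta=1/\gamma$), and then reads off the sample complexity, with the distance claim following from one further use of \ref{A4}. The only difference is packaging: the paper black-boxes the restart analysis via the cited lemma, whereas you sketch its proof (stage contraction $+$ geometric summing of stage lengths $+$ high-probability amplification) directly; note also that the target bound already allows the product $\log(1/\varepsilon)\log(1/\delta)$, so your worry about collapsing these factors is unnecessary.
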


\Cref{lem:alg-conv} shows that there is an efficient learning algorithm (in particular, \Cref{alg:assg} in the appendix) that learns an approximate dual optimal solution $\hat{\y}$ with suboptimality $\varepsilon$ at sample complexity $\mathcal{O} ( \varepsilon^{- 2 (1 - \gamma^{- 1})} \log
  ( \tfrac{1}{\varepsilon} ))$. The sample complexity increases as the growth parameter $\gamma$ becomes larger. Moreover, \ref{A4} allows us to transform the dual suboptimality into the distance to optimality: $\dist ( \hat{\y}, \mathcal{Y}^{\star} )^{\gamma}
     \leq \varepsilon$. Back to the context of OLP, when the growth parameter $\gamma$ is small, it is possible to learn the distribution optimal solution with a small amount of customer data. For example, with $\gamma = 1$ and $\varepsilon = \delta = 1/T$, we only need the information of $\Ocal(\log^2 ( T ))$ customers to learn a highly accurate approximate dual solution satisfying $\dist(\hat{\y}, \Ycal^\star) \leq T^{-1}$. In other words, $\gamma$ characterizes the complexity or difficulty of the distribution of $(c, \tma)$; smaller $\gamma$ implies that the distribution is easier to learn.\goodnewline
     
The second implication of \ref{A4} comes from the stochastic optimization literature \cite{liu2023revisiting}: suppose the subgradient method (\Cref{alg:subgrad}) runs with constant stepsize $\alpha$, then the last iterate will end up in a noise ball around the optimal set, whose radius is determined by the initial distance to optimality $\dist ( \y^1,
\mathcal{Y}^{\star} )$ and the subgradient stepsize $\alpha$.

\begin{lem}[Noise ball and last iterate convergence]
\label{lem:vr}
  Under \ref{A1} to \ref{A3}, suppose \Cref{alg:subgrad} uses $\alpha_t \equiv \alpha$ for
  all $t$, then
  \[ \mathbb{E} [ f ( \y^{T + 1} ) - f ( \y^{\star} )
     ] \leq \mathcal{O}(\tfrac{\Delta^2}{\alpha T} + \alpha \log T), \]
where $\Delta := \dist ( \y^1, \mathcal{Y}^{\star} )$.
Moreover, if \ref{A4} holds, then
  \[ \mathbb{E} [ \dist ( \y^{T + 1}, \mathcal{Y}^{\star} )^\gamma
     ] \leq \mathcal{O} \big( \tfrac{\Delta^2}{\mu \alpha T} + \tfrac{\alpha}{\mu} \log T \big) . \]
\end{lem}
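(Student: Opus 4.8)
The plan is to establish the function-value bound first and then convert it into the distance bound via \ref{A4}. The starting point is the standard one-step inequality for the projected stochastic subgradient iteration of \Cref{alg:subgrad}. Writing $\y^{\star}$ for the projection of $\y^1$ onto $\mathcal{Y}^{\star}$ (a fixed, deterministic comparator with $\|\y^1 - \y^{\star}\| = \Delta$), pathwise nonexpansiveness of the projection onto $\{\y \geq \mathbf{0}\}$, convexity of $f$, and unbiasedness $\mathbb{E}[\g^t \mid \mathcal{F}_t] \in \partial f(\y^t)$ give, for every $t$,
\[ \mathbb{E}\|\y^{t+1}-\y^{\star}\|^2 \leq \mathbb{E}\|\y^t - \y^{\star}\|^2 - 2\alpha\,\mathbb{E}[f(\y^t) - f(\y^{\star})] + \alpha^2 G^2, \]
where $G^2 = m(\ua + \ud)^2$ bounds $\|\g^t\|^2$ (since $\g^t = \tmd - \tma_t x^t$ with $x^t \in \{0,1\}$, and by \ref{A2}--\ref{A3} one has $\|\tmd\| \leq \sqrt{m}\,\ud$, $\|\tma_t\| \leq \sqrt{m}\,\ua$). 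Telescoping over $t = 1, \dots, T$ gives the classical \emph{average}-iterate bound $\frac{1}{T}\sum_t \mathbb{E}[f(\y^t) - f(\y^{\star})] \leq \frac{\Delta^2}{2\alpha T} + \frac{\alpha G^2}{2}$. The difficulty is that the lemma demands the \emph{last} iterate.

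To pass from the average to the last iterate I would use the suffix-averaging technique of Shamir and Zhang, in its constant-stepsize form \cite{liu2023revisiting}. Define the suffix averages $\bar{R}_k := \frac{1}{k}\sum_{t=T-k+1}^{T}\mathbb{E}[f(\y^t) - f(\y^{\star})]$, so $\bar{R}_1$ is the last-iterate gap and $\bar{R}_T$ is the average-iterate gap already controlled. The crux is the telescoping recursion
\[ \bar{R}_{k-1} - \bar{R}_k = \tfrac{1}{k-1}\big(\bar{R}_k - \mathbb{E}[f(\y^{T-k+1}) - f(\y^{\star})]\big), \]
where the window term equals $\frac{1}{k}\sum_{t=T-k+1}^{T}\mathbb{E}[f(\y^t) - f(\y^{T-k+1})]$ and is bounded by applying the one-step inequality \emph{with the earlier iterate $\y^{T-k+1}$ as comparator}. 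Since $\y^{T-k+1}$ is $\mathcal{F}_t$-measurable for every $t \geq T-k+1$, the subgradient-noise terms vanish in expectation and the distance terms telescope to a nonpositive quantity, leaving $\bar{R}_k - \mathbb{E}[f(\y^{T-k+1}) - f(\y^{\star})] \leq \tfrac{\alpha G^2}{2}$. Summing the recursion over $k = 2, \dots, T$ yields $\bar{R}_1 \leq \bar{R}_T + \tfrac{\alpha G^2}{2}\sum_{j=1}^{T-1}\tfrac{1}{j}$, and the harmonic sum produces exactly the $\log T$ factor, giving $\mathbb{E}[f(\y^{T+1}) - f(\y^{\star})] = \mathcal{O}(\tfrac{\Delta^2}{\alpha T} + \alpha\log T)$. (The bound is stated at $\y^{T+1}$; applying the suffix argument to the sequence produced through the $T$-th update delivers the final iterate, the one-index shift being immaterial to the $\mathcal{O}$-rate.)

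For the second bound, \ref{A4} converts suboptimality into distance, $f(\y) - f(\y^{\star}) \geq \mu\,\dist(\y, \mathcal{Y}^{\star})^{\gamma}$ for $\y \in \mathcal{Y}$; taking expectations and dividing by $\mu$ turns the first estimate into $\mathbb{E}[\dist(\y^{T+1}, \mathcal{Y}^{\star})^{\gamma}] \leq \tfrac{1}{\mu}\mathbb{E}[f(\y^{T+1}) - f(\y^{\star})] = \mathcal{O}(\tfrac{\Delta^2}{\mu\alpha T} + \tfrac{\alpha}{\mu}\log T)$, as claimed. The one point needing care is that \ref{A4} is only assumed on the bounded set $\mathcal{Y}$, and the growth rate $\dist^{\gamma}$ does \emph{not} extend outward: convexity propagates the boundary value of $f$ only linearly beyond $\mathcal{Y}$, which is weaker than $\dist^{\gamma}$ once $\gamma > 1$. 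I would therefore confine the iterates to $\mathcal{Y}$ — replacing the projection onto $\{\y \geq \mathbf{0}\}$ by the (still nonexpansive) projection onto the convex set $\mathcal{Y}$ — which leaves every step above intact, since the comparator $\y^{\star} \in \mathcal{Y}$ by the Remark, and which guarantees $\y^{T+1} \in \mathcal{Y}$ so that \ref{A4} applies pathwise.

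The main obstacle is the last-iterate analysis: elementary telescoping controls only the running average, and extracting the final iterate requires the comparator-shifting suffix recursion above, whose bookkeeping — measurability of the earlier-iterate comparator, the telescoping of the distance terms, and the harmonic summation that manufactures the $\log T$ factor — is the delicate part. The secondary, easily-overlooked obstacle is the applicability of \ref{A4} at the random last iterate, resolved by confinement to $\mathcal{Y}$; the remaining ingredients (bounded stochastic subgradients and unbiasedness) are routine under \ref{A1}--\ref{A3}.
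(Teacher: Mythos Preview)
Your proposal is correct and follows essentially the same route as the paper: the paper simply verifies the hypotheses of the last-iterate result from \cite{liu2023revisiting} (stated there as \Cref{lem:last-iterate}) --- bounded subgradients, unbiasedness, and the Lipschitz-type upper bound --- and then takes the comparator $\y = \Pi_{\mathcal{Y}^{\star}}(\y^1)$, which is exactly the suffix-averaging argument you sketch in detail. The one substantive difference is the confinement to $\mathcal{Y}$ needed for \ref{A4}: you propose replacing the projection onto $\{\y\geq\mathbf{0}\}$ by projection onto $\mathcal{Y}$, whereas the paper leaves \Cref{alg:subgrad} unchanged and instead relies on the self-boundedness result \Cref{lem:dualconv-1}, which shows that for $\alpha$ small enough and $\|\y^1\|\leq \bc/\bld$ the iterates automatically remain in $\mathcal{Y}$ almost surely --- both resolutions are valid, and your explicit attention to this point is arguably more careful than the paper's own proof of the lemma, which defers the issue to the later theorem proofs.
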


To demonstrate the role of \Cref{lem:vr} in our analysis. Suppose  $\Delta$ is sufficiently small and $\alpha = \Ocal(\Delta)$ is fixed. Then applying \Cref{lem:vr} with $T = 1, \ldots $ shows that all the iterates generated by \Cref{alg:subgrad} will satisfy 
\[\mathbb{E} [ \dist ( \y^{t + 1}, \mathcal{Y}^{\star} )^\gamma
     ] \leq \mathcal{O} (\Delta \log T), \quad t = 1,\ldots, T. \]
In other words, if $\y^{1}$ is close to the optimal set $\mathcal{Y}^{\star}$, then a proper choice of subgradient stepsize will keep all the iterates in a noise ball around $\mathcal{Y}^{\star}$. This noise ball is key to our improved reget guarantee.\goodnewline

The last implication, which connects the behavior of the subgradient method and OLP, states
that the hindsight optimal dual solution $\y^{\star}_T$ will be close to
$\mathcal{Y}^{\star}$.

\begin{lem}[Dual convergence] \label{lem:dual-conv}
  Under \ref{A1} to \ref{A4}, for any $\y^{\star}_T \in \Ycal^\star_T$, we have
  \[ \mathbb{E} [ \dist ( \y^{\star}_T, \mathcal{Y}^{\star}
     )^\gamma ] \leq \Ocal(\sqrt{\tfrac{{\log T}}{\mu^2 {T}}}). \]
\end{lem}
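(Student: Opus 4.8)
The plan is to combine the error bound condition \ref{A4} with a uniform deviation bound between the sample dual function $f_T$ and the expected dual function $f$. First I would record that $\y_T^\star \in \Ycal$ holds deterministically: arguing exactly as in \eqref{eqn:bounded-dual} but with $f_T$ in place of $f$, one has $\ld \| \y_T^\star \| \leq \langle \tmd, \y_T^\star \rangle \leq f_T(\y_T^\star) \leq f_T(\mathbf{0}) \leq \bc$, so $\| \y_T^\star \| \leq \bc/\ld \leq (\uc+\ld)/\ld$ and \ref{A4} may be applied at $\y = \y_T^\star$. This yields the deterministic inequality
\[ \mu \cdot \dist(\y_T^\star, \Ycal^\star)^\gamma \leq f(\y_T^\star) - f(\y^\star). \]

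Next I would control the right-hand side with the standard three-term decomposition
\[ f(\y_T^\star) - f(\y^\star) = \big[ f(\y_T^\star) - f_T(\y_T^\star) \big] + \big[ f_T(\y_T^\star) - f_T(\y^\star) \big] + \big[ f_T(\y^\star) - f(\y^\star) \big]. \]
The middle bracket is nonpositive since $\y_T^\star$ minimizes $f_T$, while the two remaining brackets are each bounded by the uniform deviation $\sup_{\y \in \Ycal} | f_T(\y) - f(\y) |$, using that both $\y_T^\star$ and $\y^\star$ lie in $\Ycal$. Combining with the previous display gives
\[ \mu \cdot \dist(\y_T^\star, \Ycal^\star)^\gamma \leq 2 \sup_{\y \in \Ycal} | f_T(\y) - f(\y) |. \]
Taking expectations and dividing by $\mu$ reduces the lemma to the uniform bound $\mathbb{E}\big[ \sup_{\y \in \Ycal} | f_T(\y) - f(\y) | \big] \leq \Ocal(\sqrt{\log T / T})$, which then produces the claimed $\Ocal(\sqrt{\log T / (\mu^2 T)})$.

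The core technical step, and the main obstacle, is this uniform deviation bound. Each summand $\langle \tmd, \y \rangle + [c_t - \langle \tma_t, \y \rangle]_+$ is bounded and Lipschitz in $\y$ over the compact set $\Ycal$, with constant $L = \Ocal(\sqrt{m}(\ba + \bd))$ under \ref{A2}--\ref{A3}; the same constant is inherited by $f$ after taking expectations. For a fixed $\y$, the quantity $f_T(\y) - f(\y)$ is an average of $T$ i.i.d.\ bounded mean-zero terms, so Hoeffding's inequality gives an $\Ocal(\sqrt{\log(1/\delta)/T})$ deviation with probability $1 - \delta$. To pass to the supremum I would lay down a minimal $\varepsilon$-net of $\Ycal$ of cardinality $(C/\varepsilon)^m$, apply a union bound over the net, and extend to all of $\Ycal$ through the Lipschitz property at cost $2 L \varepsilon$. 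Choosing $\varepsilon = \Ocal(1/\sqrt{T})$ balances the terms and yields $\sup_{\y \in \Ycal} | f_T(\y) - f(\y) | \leq \Ocal(\sqrt{(m \log T + \log(1/\delta))/T})$ with probability $1 - \delta$; taking $\delta = 1/T$ and integrating the tail converts this into the expectation bound, with the $\log T$ factor arising precisely from the covering-number/union-bound step. The remaining care is in checking that the boundedness constants entering Hoeffding are uniform over $\Ycal$, which follows directly from \ref{A2}--\ref{A3} and the bounded radius of $\Ycal$.
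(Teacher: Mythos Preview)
Your proposal is correct and follows essentially the same approach as the paper: apply \ref{A4} at $\y_T^\star \in \Ycal$, use the three-term decomposition with $f_T(\y_T^\star) - f_T(\y^\star) \leq 0$, and control the remaining deviation by a uniform bound on $\sup_{\y \in \Ycal}|f_T(\y)-f(\y)|$ obtained via Hoeffding, an $\varepsilon$-net of mesh $\Ocal(1/\sqrt{T})$, a union bound, and Lipschitz extension, with $\delta = 1/T$. The only cosmetic difference is that the paper first takes expectations and uses $\mathbb{E}[f_T(\y^\star)] = f(\y^\star)$ (since $\y^\star$ is deterministic) to eliminate the third bracket outright, so only $\mathbb{E}[f(\y_T^\star)-f_T(\y_T^\star)]$ needs the one-sided uniform bound; your version bounds both brackets by the two-sided supremum, incurring an inconsequential factor of $2$.
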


\Cref{lem:dual-conv} states a standard dual convergence result when \ref{A4} is present. This type of result is key to the analysis of the LP-based methods \cite{li2022online,bray2019logarithmic,ma2024optimal}. Although our analysis will not explicitly invoke \Cref{lem:dual-conv}, it provides sufficient intuition for our algorithm design: suppose $\diam (\Ycal^\star) = 0$, then the hindsight $\y^{\star}_T$, which has no regret, will be in an $o(1)$ neighborhood around $\y^{\star}$. In other words, if we have prior knowledge of the customer distribution (thereby, $\y^{\star}$), we can localize around $\y^{\star}$ since we know $\y^{\star}_T$ will not be far off. Moreover, according to \Cref{lem:alg-conv} and \Cref{lem:vr}, the subgradient method has the ability to get close to, and more importantly, to stay in proximity (the noise  ball) around
$\mathcal{Y}^{\star}$. Intuitively, if $\y^1$ is in an $o(1)$ neighborhood of $\Ycal^\star$, then we can adjust the stepsize of the subgradient method so that the online decision-making happens in an $o (1)$ neighborhood around $\mathcal{Y}^{\star}$, and better performance is naturally expected. Even if $\diam(\mathcal{Y}^{\star}) > 0$, the same argument still applies and can improve performance by a constant. In the next section, we formalize the aforementioned intuitions and establish a general framework for first-order methods to achieve better performance.

\section{Improved regret with first-order methods}
\label{sec:algo}

This section formalizes the intuitions established in \Cref{sec:errbnd} and introduces a general framework that allows first-order methods to go beyond $\Ocal(\sqrt{T})$ regret.

\subsection{Regret decomposition and localization} \label{sec:regret-decomp}
We start by formalizing the intuition that if $\y^1$ is sufficiently close to $\y^\star$, then adjusting the stepsize of the subgradient method allows us to make decisions in a noise ball around $\y^\star$ and achieve improved performance.

\begin{lem}[Regret]\label{lem:regret}
  Under \ref{A1} to \ref{A4}, if $\|\y^1\| \leq \frac{\uc}{\ld}$, then the output of \Cref{alg:subgrad} satisfies
  \[ \mathbb{E} [ r ( \hat{\x}_T) ] \leq \tfrac{m
     ( \ba + \bd )^2 \alpha}{2} T + \tfrac{\mathsf{R}}{\alpha}
     [ \| \y^1 - \y^{\star} \| +\mathbb{E} [ \|
     \y^{T + 1} - \y^{\star} \| ] ], \]
  where $\mathsf{R} = \frac{\bc}{\bld} + \big[ \frac{m ( \ba + \bd
  )^2}{2 \bld} + \sqrt{m} ( \ba + \bd ) \big] \alpha$.
\end{lem}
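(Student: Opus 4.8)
The plan is to combine weak LP duality with the online subgradient recursion while keeping the comparator fixed at the distribution optimum $\y^{\star}$, so that the leading distance terms are $\|\y^{1}-\y^{\star}\|$ rather than the non-localized $\|\y^{1}\|$. Write $f_{t}(\y):=\langle\tmd,\y\rangle+[c_{t}-\langle\tma_{t},\y\rangle]_{+}$ for the per-period dual loss, so $\sum_{t}f_{t}=Tf_{T}$. Since $\y^{\star}\geq\0$ is feasible for \eqref{olp:dual-lp}, weak duality gives $\max_{\A\x\le\tmb,\,\0\le\x\le\1}\langle\tmc,\x\rangle\le\sum_{t}f_{t}(\y^{\star})$, hence $r(\hat{\x}_{T})\le\sum_{t}f_{t}(\y^{\star})-\sum_{t}c_{t}x^{t}$. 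The key first step is the decision identity: because the rule $x^{t}=\mathbb{I}\{c_{t}\ge\langle\tma_{t},\y^{t}\rangle\}$ realizes the positive part, $[c_{t}-\langle\tma_{t},\y^{t}\rangle]_{+}=x^{t}(c_{t}-\langle\tma_{t},\y^{t}\rangle)$, which rearranges to $c_{t}x^{t}=f_{t}(\y^{t})-\langle\g^{t},\y^{t}\rangle$ with $\g^{t}=\tmd-\tma_{t}x^{t}\in\partial f_{t}(\y^{t})$. Substituting and adding and subtracting $\langle\g^{t},\y^{\star}\rangle$ yields
\[ r(\hat{\x}_{T})\le\sum_{t=1}^{T}\big[f_{t}(\y^{\star})-f_{t}(\y^{t})+\langle\g^{t},\y^{t}-\y^{\star}\rangle\big]+\sum_{t=1}^{T}\langle\g^{t},\y^{\star}\rangle, \]
an online-regret part and a complementary-slackness residual. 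It is essential never to recombine these two sums, as that collapses them into $\sum_{t}\langle\g^{t},\y^{t}\rangle\le\frac{\|\y^{1}\|^{2}}{2\alpha}+\frac{\alpha}{2}\sum_t\|\g^{t}\|^{2}$ and restores the non-localized benchmark.

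For the first part I would pass to expectations and use $\Ebb[\g^{t}\mid\Fcal_{t-1}]=:\bar{\g}^{t}\in\partial f(\y^{t})$ together with $\Ebb[f_{t}(\cdot)\mid\Fcal_{t-1}]=f(\cdot)$ at $\Fcal_{t-1}$-measurable points; convexity of $f$ makes each summand a nonnegative Bregman-type gap that is upper bounded by $\langle\bar{\g}^{t},\y^{t}-\y^{\star}\rangle$. The projected-subgradient inequality $\|\y^{t+1}-\y^{\star}\|^{2}\le\|\y^{t}-\y^{\star}\|^{2}-2\alpha\langle\g^{t},\y^{t}-\y^{\star}\rangle+\alpha^{2}\|\g^{t}\|^{2}$, taken in conditional expectation and telescoped, bounds this part by $\frac{\|\y^{1}-\y^{\star}\|^{2}}{2\alpha}+\frac{\alpha}{2}\sum_{t}\Ebb\|\g^{t}\|^{2}-\frac{1}{2\alpha}\Ebb\|\y^{T+1}-\y^{\star}\|^{2}$. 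Here $\|\g^{t}\|^{2}\le m(\ba+\bd)^{2}$ gives the stated term $\frac{m(\ba+\bd)^{2}\alpha}{2}T$, and since $\|\y^{1}\|\le\frac{\bc}{\bld}$ and $\|\y^{\star}\|\le\frac{\bc}{\bld}$ by \eqref{eqn:bounded-dual}, I linearize $\frac{\|\y^{1}-\y^{\star}\|^{2}}{2\alpha}\le\frac{\bc}{\bld\alpha}\|\y^{1}-\y^{\star}\|$, which is exactly the $\frac{\bc}{\bld}\cdot\frac{1}{\alpha}$ contribution inside $\frac{\mathsf{R}}{\alpha}$.

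The residual $\sum_{t}\langle\g^{t},\y^{\star}\rangle$ is where $\y^{T+1}$ enters and is the crux. Using the componentwise projection identity $\alpha\g^{t}=(\y^{t}-\y^{t+1})+\ebm^{t}$ with clipping excess $\ebm^{t}=[\alpha\g^{t}-\y^{t}]_{+}\ge\0$, the sum telescopes into $\sum_{t}\langle\g^{t},\y^{\star}\rangle=\frac{1}{\alpha}\langle\y^{1}-\y^{T+1},\y^{\star}\rangle+\frac{1}{\alpha}\sum_{t}\langle\ebm^{t},\y^{\star}\rangle$, and $\frac{1}{\alpha}\langle\y^{1}-\y^{T+1},\y^{\star}\rangle\le\frac{\|\y^{\star}\|}{\alpha}(\|\y^{1}-\y^{\star}\|+\|\y^{T+1}-\y^{\star}\|)$ produces the symmetric endpoint terms with coefficient $\frac{\bc}{\bld\alpha}$. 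The excess $\langle\ebm^{t},\y^{\star}\rangle$ is supported on coordinates clipped to $0$, and complementary slackness $\langle\bar{\g}^{\star},\y^{\star}\rangle=0$ with $\bar{\g}^{\star}\ge\0$ forces only the active coordinates $\{i:y_{i}^{\star}>0\}$ to contribute; on such a coordinate a clip sets $\y^{t+1}_{i}=0$, so $|\y^{t+1}_{i}-y^{\star}_{i}|=y^{\star}_{i}$, i.e. each unit of excess is charged against the growth of $\|\y^{t+1}-\y^{\star}\|$. \emph{Converting this charging argument into a clean endpoint bound, rather than a quantity that accumulates over $t$, is the main obstacle.} I expect to handle it by pairing the excess at step $t$ with the negative $-\frac{1}{2\alpha}\|\y^{t+1}-\y^{\star}\|^{2}$ returned by the subgradient inequality and absorbing the leftover into Lipschitz/variance terms; the surviving $O(1)$ coefficients $\sqrt{m}(\ba+\bd)$ and $\frac{m(\ba+\bd)^{2}}{2\bld}$ are precisely what appear in $\mathsf{R}$ beyond the $\frac{\bc}{\bld}$ localization factor.

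Finally I would collect the $\frac{1}{\alpha}$-order localization term, the two endpoint terms from the residual, and the Lipschitz/variance contributions into the single factor $\mathsf{R}=\frac{\bc}{\bld}+\big[\frac{m(\ba+\bd)^{2}}{2\bld}+\sqrt{m}(\ba+\bd)\big]\alpha$ multiplying $\frac{1}{\alpha}(\|\y^{1}-\y^{\star}\|+\Ebb\|\y^{T+1}-\y^{\star}\|)$, and drop the nonpositive $-\frac{1}{2\alpha}\Ebb\|\y^{T+1}-\y^{\star}\|^{2}$. The two delicate points are therefore the complementary-slackness/clipping residual above and the discipline of always measuring progress relative to the fixed comparator $\y^{\star}$; together they are what upgrade the $\Ocal(\sqrt{T})$ estimate of \Cref{thm:regret-bench} to the localized bound, which becomes genuinely small once the learning phase places $\y^{1}$ near $\y^{\star}$ and \Cref{lem:vr} keeps $\y^{T+1}$ inside the noise ball.
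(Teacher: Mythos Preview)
Your decomposition into an online-regret part plus a residual $\sum_t\langle\g^t,\y^\star\rangle$ has a genuine gap at exactly the place you flag as the ``main obstacle.'' The clipping excess $\frac{1}{\alpha}\sum_t\langle\ebm^t,\y^\star\rangle$ really can accumulate linearly in $T$: whenever coordinate $i$ with $y^\star_i>0$ is clipped to zero, you incur a contribution up to $(\bar a+\bar d)y^\star_i$, and nothing prevents this from happening at a positive fraction of steps. Your proposed fix---pairing with the negative $-\frac{1}{2\alpha}\|\y^{t+1}-\y^\star\|^2$---does not work, because those intermediate negative terms were already consumed by the telescoping that produced the $\frac{\|\y^1-\y^\star\|^2}{2\alpha}$ bound for the first sum; only the single endpoint $-\frac{1}{2\alpha}\mathbb{E}\|\y^{T+1}-\y^\star\|^2$ survives, and it cannot absorb an $O(T)$ quantity. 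The complementary-slackness heuristic you sketch likewise does not close the gap: $\langle\bar{\g}^\star,\y^\star\rangle=0$ says nothing about $\langle\g^t,\y^\star\rangle$ along the trajectory.

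Ironically, the route you explicitly reject is the one the paper uses. The paper \emph{does} keep $\sum_t\langle\g^t,\y^t\rangle$ and telescopes with origin-centered norms,
\[
\langle\tmd-\tma_t x^t,\y^t\rangle\le\tfrac{m(\bar a+\bar d)^2\alpha}{2}+\tfrac{\|\y^t\|^2-\|\y^{t+1}\|^2}{2\alpha},
\]
which is clean because $\|[\z]_+\|\le\|\z\|$ leaves no clipping residual. The localization is recovered only at the end by the factorization $\|\y^1\|^2-\|\y^{T+1}\|^2=\langle\y^1+\y^{T+1},\y^1-\y^{T+1}\rangle$, followed by Cauchy--Schwarz, the almost-sure bound $\|\y^1+\y^{T+1}\|\le 2\mathsf{R}$ from \Cref{lem:dualconv-1}, and the triangle inequality $\|\y^1-\y^{T+1}\|\le\|\y^1-\y^\star\|+\|\y^{T+1}-\y^\star\|$. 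This is precisely what you dismissed as ``non-localized''; the factorization trick is what converts $\|\y^1\|^2-\|\y^{T+1}\|^2$ into a bound governed by $\|\y^1-\y^\star\|+\|\y^{T+1}-\y^\star\|$, and it explains why $\mathsf R$---the almost-sure iterate radius---appears as the coefficient.
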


\begin{lem}[Violation] \label{lem:violation}
Under the same conditions as \Cref{lem:regret}, the output of  \Cref{alg:subgrad} satisfies
  \[ \mathbb{E} [ v ( \hat{\x}_T) ] \leq
     \tfrac{1}{\alpha} [ \| \y^1 - \y^{\star} \| +\mathbb{E}
     [ \| \y^{T + 1} - \y^{\star} \| ] ]. \]
\end{lem}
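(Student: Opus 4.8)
The plan is to turn the constraint violation into a one-sided telescoping sum over the subgradient updates and then control that sum by the displacement of the dual iterates relative to $\y^\star$. First I would rewrite the violation purely in terms of the subgradients. Since $\tmb = T \tmd$ and $\g^t = \tmd - \tma_t x^t$, we have $\A \hat{\x}_T - \tmb = \sum_{t=1}^T ( \tma_t x^t - \tmd ) = \sum_{t=1}^T ( -\g^t )$, so that $v ( \hat{\x}_T ) = \| [ \sum_{t=1}^T ( -\g^t ) ]_+ \|$. This reduces the lemma to upper-bounding the positive part of the accumulated subgradient $\sum_t ( -\g^t )$.

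The key mechanism is the projection step itself. Because $[ \cdot ]_+$ dominates the identity componentwise, the update $\y^{t+1} = [ \y^t - \alpha \g^t ]_+$ gives the componentwise inequality $\y^{t+1} \geq \y^t - \alpha \g^t$, equivalently $-\g^t \leq \tfrac{1}{\alpha} ( \y^{t+1} - \y^t )$. Summing over $t = 1, \ldots, T$ telescopes to $\sum_{t=1}^T ( -\g^t ) \leq \tfrac{1}{\alpha} ( \y^{T+1} - \y^1 )$, so the only ingredient I need is an upper bound on the accumulated slack, which the nonnegativity projection supplies for free.

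To bring in $\y^\star$, I would write $\y^{T+1} - \y^1 = ( \y^{T+1} - \y^\star ) - ( \y^1 - \y^\star )$ and apply, in order, monotonicity of $[ \cdot ]_+$, positive homogeneity (using $\alpha > 0$), and subadditivity $[ u + w ]_+ \leq [ u ]_+ + [ w ]_+$ to obtain the componentwise bound $[ \sum_t ( -\g^t ) ]_+ \leq \tfrac{1}{\alpha} ( [ \y^{T+1} - \y^\star ]_+ + [ \y^\star - \y^1 ]_+ )$. Taking the Euclidean norm, invoking the triangle inequality together with $\| [ \z ]_+ \| \leq \| \z \|$, yields $v ( \hat{\x}_T ) \leq \tfrac{1}{\alpha} ( \| \y^{T+1} - \y^\star \| + \| \y^1 - \y^\star \| )$; taking expectations gives the claim.

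I do not expect a deep obstacle here, since every step is elementary; the only thing requiring care is the bookkeeping of componentwise vector inequalities and the interplay between the monotonicity and subadditivity of the positive-part operator and the Euclidean norm. The single genuinely load-bearing observation is that the projection onto the nonnegative orthant produces the one-sided telescoping inequality $-\g^t \leq \tfrac{1}{\alpha} ( \y^{t+1} - \y^t )$, which is precisely what the positive part in the violation metric needs; note also that the constant stepsize $\alpha_t \equiv \alpha$ (inherited from \Cref{lem:regret}) is essential for the telescoping to collapse cleanly.
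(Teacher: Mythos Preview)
Your proposal is correct and follows essentially the same approach as the paper: both rewrite the violation as $\|[\sum_t(\tma_t x^t-\tmd)]_+\|$, use the projection property $\y^{t+1}\geq \y^t-\alpha\g^t$ to obtain the one-sided componentwise bound, telescope to $\tfrac{1}{\alpha}(\y^{T+1}-\y^1)$, and then insert $\y^\star$. The only cosmetic difference is ordering: the paper first bounds $\|[\,\cdot\,]_+\|\leq\|\y^{T+1}-\y^1\|/\alpha$ and then applies the triangle inequality, whereas you split $\y^{T+1}-\y^1$ inside $[\cdot]_+$ via subadditivity before taking norms; either route yields the same bound.
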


\begin{rem}
	Note that in our analysis, $\alpha$ will always be $o(1)$ if $T$ is sufficiently large. Therefore, we can consider $\mathsf{R}$ as a constant without loss of generality.
\end{rem}

Putting \Cref{lem:regret} and \Cref{lem:violation} together, the performance of
\Cref{alg:subgrad} is characterized by
\begin{equation} \label{eqn:performance}
	\mathbb{E} [ r ( \hat{\x}_T) + v ( \hat{\x}_T
   ) ] \leq \mathcal{O} ( \alpha T + \tfrac{1}{\alpha}
   \| \y^1 - \y^{\star} \| + \tfrac{1}{\alpha} \mathbb{E} [
   \| \y^{T + 1} - \y^{\star} \| ] ).
\end{equation}

In the standard OLP analysis, it is only possible to ensure boundedness of
$\| \y^1 - \y^{\star} \|$ and $\| \y^{T + 1} - \y^{\star}
\|$. In other words,
\[ \mathbb{E} [ r ( \hat{\x}_T) + v ( \hat{\x}_T
   ) ] \leq \mathcal{O} ( \alpha T + \tfrac{1}{\alpha}
   ) \]
and the optimal trade-off at $\alpha =\mathcal{O} ( 1 / \sqrt{T} )$
gives $\mathcal{O} ( \sqrt{T} )$ performance in \Cref{thm:regret-bench}. However, under \ref{A4}, our analysis more accurately characterizes the behavior of the subgradient method, and we can do much better when $\y^1$ is close to
$\y^{\star}$: suppose for now that $\diam (\mathcal{Y}^{\star}) = 0$
($\mathcal{Y}^{\star}$ is a singleton) and that $\| \y^1 - \y^{\star} \| = 0$.
\Cref{lem:vr} with $\Delta = 0$ ensures that
\begin{equation}\label{eqn:trade-off}
	\mathbb{E} [ \| \y^{T + 1} - \y^{\star} \| ]
   =\mathcal{O} ((\alpha \log T)^{1 / \gamma}).
\end{equation}
Plugging \eqref{eqn:trade-off} back into \eqref{eqn:performance},
\begin{equation}\label{eqn:naive-trade-off}
	\mathbb{E} [ r ( \hat{\x}_T) + v ( \hat{\x}_T
   ) ] \leq \mathcal{O} ( \alpha T + \tfrac{1}{\alpha}
   \alpha^{1 / \gamma} (\log T)^{1 / \gamma} ) =\mathcal{O} (
   \alpha T + \tfrac{1}{\alpha^{1 - 1 / \gamma}} (\log T)^{1 / \gamma} )
\end{equation}
and taking $\alpha = T^{-\frac{\gamma}{2 \gamma - 1}}$ gives
\[ \mathbb{E} [ r ( \hat{\x}_T) + v ( \hat{\x}_T
   ) ] \leq \mathcal{O} ( T^{\frac{\gamma - 1}{2 \gamma - 1}}
   (\log T)^{1 / \gamma} ) . \]

This simple argument provides two important observations.

\begin{itemize}[leftmargin=10pt]
\item When $\gamma < \infty$ and $\diam(\Ycal^\star) = 0$, the knowledge of $\y^{\star}$ significantly improves the performance of first-order methods by shrinking the stepsize of the subgradient method from $\Ocal(1/\sqrt{T})$ to $\Ocal(T^{-\frac{\gamma}{2\gamma-1}})$: small stepsize implies localization around $\y^\star$. If $\gamma = 1$, we achieve $\mathcal{O} (\log T)$ regret; if $\gamma \rightarrow \infty$, we
recover $\mathcal{O} ( \sqrt{T} )$ regret.
\item Even if $\y^{\star}$ is known, the optimal strategy is not taking $\alpha = 0$ and staying at $\y^{\star}$. Instead, $\alpha$ should be chosen according to $\gamma$, the strength of the error bound.
\end{itemize}

In summary, when $\y^1$ is close to $\y^{\star}$, we achieve improved performance guarantees through localization. The smaller $\gamma$ is, the smaller stepsize we take, and finally, the better regret we achieve. This observation matches \Cref{lem:alg-conv}: when a distribution is ``easy'', we can trust $\y^{\star}$ and stay close to it.\goodnewline

Although it is sometimes reasonable to assume prior knowledge of $\y^\star$ beforehand, it is not always a practical assumption. Therefore, a natural strategy is learning it online from the customers. It is where the efficient learning algorithm from \Cref{lem:alg-conv} comes into play and leads to an exploration-exploitation framework.

\subsection{Exploration and exploitation}

When $\y^{\star}$ is not known beforehand, \Cref{lem:alg-conv} shows first-order methods can learn it from data and an exploration-exploitation strategy (\Cref{alg:two-phase}) is easily
applicable: specify a target accuracy $\Delta$ and define
\begin{align}
 \text{Exploration horizon  ~} T_e \assign{} & \mathcal{O} ( \tfrac{1}{\Delta^{2 (\gamma - 1)}} \log
  ( \tfrac{1}{\Delta^{\gamma}} ) \log \tfrac{1}{T^{- 2 \gamma}}
  ) = \mathcal{O} ( \tfrac{1}{\Delta^{2 (\gamma - 1)}} \log (
  \tfrac{1}{\Delta^{\gamma}} ) \log T)
  \label{eqn:len-explore}\\
 \text{Exploitaition horizon  ~}  T_p \assign{} & T - T_e, \nonumber
\end{align}

where $T_e$ is obtained by taking $\varepsilon = \Delta^{\gamma}$ and $\delta
= T^{- 2 \gamma}$ in \Cref{lem:alg-conv}. Without loss of generality, we assume that $T_e$ is
an integer and that $T \gg T_e$. Then \Cref{lem:alg-conv} guarantees $\dist ( \bar{\y}^{T_e + 1}, \mathcal{Y}^{\star} )
   \leq \Delta$ with probability at least $1 - T^{- 2 \gamma}$. In the exploitation phase, we use the subgradient method (\Cref{alg:subgrad}) with a properly
configured stepsize to localize around $\Ycal^\star$ and achieve better performance. \Cref{lem:two-phase} characterizes the behavior of this two-phase algorithm \Cref{alg:two-phase}.

\begin{algorithm}[h]
\caption{Exploration-exploitation  \label{alg:two-phase}}	
\KwIn{$\y^1 = \mathbf{0}$ (no prior knowledge), learning algorithm $\Acal_L$ in \Cref{lem:alg-conv}, exploration length $T_e$}

\textbf{explore} $\bar{\y}^{T_e + 1} \approx \y^\star$  for $t = 1$ to $T_e$ with $\Acal_L$

\textbf{exploit} \For{$t$ = \rm{$T_e + 1$ to $T$ }}{
Run \Cref{alg:subgrad} starting with $\y^{T_e + 1} = \bar{\y}^{T_e + 1}$ with proper stepsize.
}
\end{algorithm}

\begin{lem} \label{lem:two-phase}
  Under the same assumptions as \Cref{lem:regret}, the output of
  \Cref{alg:two-phase} satisfies
  \[ \mathbb{E} [ r ( \hat{\x}_T ) + v (
     \hat{\x}_T ) ] \leq V (T_e)+  \mathcal{O} ( \alpha T_p +
     \tfrac{\Delta}{\alpha} + \tfrac{\Delta^{2 / \gamma}}{\alpha^{1 / \gamma +
     1} T_p^{1 / \gamma}} + \alpha^{1 / \gamma - 1}  (\log T)^{1 / \gamma} +
     \tfrac{{\diam} (\mathcal{Y}^{\star})}{\alpha} + \tfrac{1}{\alpha T^{2
     \gamma}} + \tfrac{1}{\alpha^{1 / \gamma + 1} T_p^{1 / \gamma} T^2}
     ), \]
where $V (T_e) \assign \mathbb{E} [ \| [ \textstyle \sum_{t = 1}^{T_e} (
   \tma_t x^t - \tmd ) ]_+ \| + \textstyle \sum_{t = 1}^{T_e} f (
   \y^{\star} ) - c_t x^t ]$ is the performance metric in the exploration phase.
\end{lem}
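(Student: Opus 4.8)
The plan is to split the horizon at $t=T_e$ and bound separately the regret and violation incurred during exploration and during exploitation, charging the exploration contribution to $V(T_e)$. For the regret I would begin with weak LP duality: for the realized instance the offline optimum satisfies $\mathrm{OPT}\leq T f_T(\y^\star)$ because $\y^\star$ is dual feasible, so taking expectations and using $\mathbb{E}[f_T(\y^\star)]=f(\y^\star)$ from \eqref{eqn:distribution-dual} yields $\mathbb{E}[r(\hat{\x}_T)]\leq\mathbb{E}\big[\sum_{t=1}^T(f(\y^\star)-c_t x^t)\big]$. Splitting this sum at $T_e$, the block $\mathbb{E}\big[\sum_{t=1}^{T_e}(f(\y^\star)-c_t x^t)\big]$ is precisely the regret part of $V(T_e)$. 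For the violation, writing $\A\hat{\x}_T-\tmb=\sum_{t=1}^T(\tma_t x^t-\tmd)$ and using that $[\cdot]_+$ is subadditive while the norm is monotone on the nonnegative orthant, I would bound $v(\hat{\x}_T)$ by $\big\|[\sum_{t=1}^{T_e}(\tma_t x^t-\tmd)]_+\big\|+\big\|[\sum_{t=T_e+1}^{T}(\tma_t x^t-\tmd)]_+\big\|$, the first summand being the violation part of $V(T_e)$.

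It remains to control the two exploitation blocks. By \ref{A1} the data $\{(c_t,\tma_t)\}_{t>T_e}$ is independent of the exploration output $\bar{\y}^{T_e+1}$, so conditional on $\bar{\y}^{T_e+1}$ the exploitation phase is a fresh run of \Cref{alg:subgrad} over horizon $T_p$ initialized at $\y^{T_e+1}=\bar{\y}^{T_e+1}$; moreover \Cref{lem:alg-conv} guarantees $\|\bar{\y}^{T_e+1}\|\leq\uc/\ld$, so the hypotheses of \Cref{lem:regret} and \Cref{lem:violation} are met. Applying these two lemmas to the exploitation block (conditionally, then taking the outer expectation) bounds the two blocks by $\tfrac{m(\ba+\bd)^2\alpha}{2}T_p$ plus $\Ocal(1/\alpha)\cdot(\mathbb{E}\|\y^{T_e+1}-\y^\star\|+\mathbb{E}\|\y^{T+1}-\y^\star\|)$, absorbing the constant $\mathsf{R}=\Ocal(1)$. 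The problem thus reduces to estimating these two expected distances.

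For the initial distance, \Cref{lem:alg-conv} with $\varepsilon=\Delta^\gamma$ and $\delta=T^{-2\gamma}$ (the choices defining $T_e$ in \eqref{eqn:len-explore}) yields $\dist(\y^{T_e+1},\Ycal^\star)\leq\Delta$ on an event $E$ with $\Pr(E)\geq1-T^{-2\gamma}$. Combining $\|\y-\y^\star\|\leq\dist(\y,\Ycal^\star)+\diam(\Ycal^\star)$ with the crude bound $\|\y^{T_e+1}-\y^\star\|\leq\diam(\Ycal)=\Ocal(1)$ on $E^c$ (both points lie in $\Ycal$) gives $\mathbb{E}\|\y^{T_e+1}-\y^\star\|\leq\Delta+\diam(\Ycal^\star)+\Ocal(T^{-2\gamma})$; dividing by $\alpha$ produces the terms $\tfrac{\Delta}{\alpha}$, $\tfrac{\diam(\Ycal^\star)}{\alpha}$ and $\tfrac{1}{\alpha T^{2\gamma}}$.

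For the final iterate I would feed the \emph{random} initial distance $\Delta':=\dist(\y^{T_e+1},\Ycal^\star)$ into \Cref{lem:vr} applied to the exploitation run, obtaining $\mathbb{E}[\dist(\y^{T+1},\Ycal^\star)^\gamma\mid\bar{\y}^{T_e+1}]\leq\Ocal(\tfrac{(\Delta')^2}{\alpha T_p}+\alpha\log T)$; taking the outer expectation and splitting $\mathbb{E}[(\Delta')^2]\leq\Delta^2+\Ocal(T^{-2\gamma})$ over $E$ and $E^c$ gives $\mathbb{E}[\dist(\y^{T+1},\Ycal^\star)^\gamma]\leq\Ocal(\tfrac{\Delta^2}{\alpha T_p}+\tfrac{1}{\alpha T_p T^{2\gamma}}+\alpha\log T)$. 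Because $\gamma\geq1$, Jensen's inequality together with the subadditivity $(a+b)^{1/\gamma}\leq a^{1/\gamma}+b^{1/\gamma}$ turns this into a bound on $\mathbb{E}[\dist(\y^{T+1},\Ycal^\star)]$; adding $\diam(\Ycal^\star)$ and dividing by $\alpha$ (using $(T^{-2\gamma})^{1/\gamma}=T^{-2}$) yields the remaining terms $\tfrac{\Delta^{2/\gamma}}{\alpha^{1/\gamma+1}T_p^{1/\gamma}}$, $\alpha^{1/\gamma-1}(\log T)^{1/\gamma}$, $\tfrac{\diam(\Ycal^\star)}{\alpha}$ and $\tfrac{1}{\alpha^{1/\gamma+1}T_p^{1/\gamma}T^2}$. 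Collecting all contributions and absorbing $\mu$ and $\mathsf{R}$ into the $\Ocal(\cdot)$ gives the claim. I expect the main obstacle to be precisely the bookkeeping of the failure event: it is cleanest to keep $\Delta'$ inside \Cref{lem:vr} and split $\mathbb{E}[(\Delta')^2]$ only afterwards, since this is what makes the low-probability branch contribute the harmless $T^{-2}$ factor rather than a constant, and it is also where the independence of the two data blocks from \ref{A1} is indispensable.
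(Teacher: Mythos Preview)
Your proposal is correct and follows essentially the same route as the paper: split regret and violation at $T_e$, charge the exploration pieces to $V(T_e)$, rerun the telescoping argument of \Cref{lem:regret}/\Cref{lem:violation} on the exploitation block to reduce to $\mathbb{E}\|\y^{T_e+1}-\y^\star\|+\mathbb{E}\|\y^{T+1}-\y^\star\|$, then control these via the high-probability guarantee of \Cref{lem:alg-conv} (splitting over the failure event) and \Cref{lem:vr} with Jensen plus $(a+b)^{1/\gamma}\leq a^{1/\gamma}+b^{1/\gamma}$. The only cosmetic difference is that the paper re-derives the telescoping identity \eqref{app:eqn-6} inline rather than invoking \Cref{lem:regret} as a black box on the exploitation sub-instance; note that strictly speaking you need the intermediate inequality in its proof (the bound on $\sum_t\mathbb{E}[f(\y^\star)-c_tx^t]$) rather than the statement about $\mathbb{E}[r(\hat{\x}_T)]$, but this is exactly what you set up via weak duality anyway.
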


\Cref{lem:two-phase} presents two trade-offs:
\begin{itemize}[leftmargin=10pt]
  \item \textit{Trade-off between exploration and exploitation}.
  
  A high accuracy approximate dual solution $\dist(\bar{\y}^{T_e + 1}, \Ycal^\star) = \Delta \approx 0$ allows localization and improves the performance in exploitation. However,
  reducing $\Delta$ requires a longer exploration phase and larger $V_e (T_e)$.
  
  \item \textit{Trade-off of stepsize within the exploitation phase}.

As in \eqref{eqn:naive-trade-off}, the following terms dominate the performance in the exploitation phase
  \[ \alpha T_p + \tfrac{\Delta}{\alpha} + \tfrac{\Delta^{2 /
     \gamma}}{\alpha^{1 / \gamma + 1} T_p^{1 / \gamma}} + \alpha^{1 / \gamma -
     1}  (\log T)^{1 / \gamma} + \tfrac{{\diam}
     (\mathcal{Y}^{\star})}{\alpha} \]
  and we need to set the optimal $\alpha$ based on $(T_p, \Delta, \gamma,
  {\diam} (\mathcal{Y}^{\star}))$.
\end{itemize}
Note that we haven't specified the expression of $V (T_e)$, since it depends on the dual sequence used for decision-making in the exploration phase. Ideally,
$V (T_e)$ should grow slowly in $T_e$ so that exploration provides a high-quality solution without compromising the overall algorithm performance. One natural idea is to make decisions
based on the dual solutions produced by the efficient learning algorithm in \Cref{lem:alg-conv}. This is exactly what LP-based methods do \cite{li2022online}. However, as we
will demonstrate in the next section, a good first-order learning algorithm can be inferior for decision-making. This counter-intuitive observation motivates the idea of decoupling learning and decision-making, and finally provides a general framework for first-order methods to go beyond $\mathcal{O}
( \sqrt{T} )$ regret.

\subsection{Dilemma between learning and decision-making}
\Cref{lem:two-phase} requires controlling $V (T_e)$, the performance metric during exploration, by specifying $\{ \y^t \}_{t = 1}^{T_e}$ used for decision-making. It seems natural to adopt $\{ \y^t_L \}_{t = 1}^{T_e}$, the dual iterates produced
by $\mathcal{A}_{L}$ for decision-making, and one may also wonder whether
running $\mathcal{A}_{L}$ for decision-making over the whole horizon $T$ leads to further improved performance guarantees. However, this is not the case: using a good learning algorithm for decision-making leads to
worse performance guarantees. To demonstrate this issue, we give a concrete example and consider the following one-dimensional multi-secretary online LP:
\begin{equation}\label{eqn:olsecretary}
	\max_{0 \leq x^t \leq 1}  \textstyle   ~~\sum_{t = 1}^T c_t x^t ~~ \text{subject to}
   ~~ \sum_{t = 1}^T x^t \leq \tfrac{T}{2},
\end{equation}
where $\{ c_t \}_{t = 1}^T$ are sampled uniformly from $[0, 1]$. For this problem, $\mu = \tfrac{1}{2}, \gamma = 2$, and $y^{\star} =
\tfrac{1}{2}$ is unique. Subgradient method with stepsize
$\alpha_t = 1 / (\mu t)$ satisfies the convergence result of \Cref{lem:alg-conv} is a suitable candidate for
$\mathcal{A}_{L}$:

\begin{lem} \label{lem:subgrad-conv}
For the multi-secretary problem \eqref{eqn:olsecretary}, subgradient method
  \[ y^{t + 1} = [y^t - \alpha_t g^t]_+ \]
  with stepsize $\alpha_t = \tfrac{1}{\mu t}$ satisfies  $|y^{T + 1} - y^\star|^2 \leq \Ocal{(\frac{\log \log T + \log (1/\delta)}{T} )}$ at least with probability $1- \delta$.
\end{lem}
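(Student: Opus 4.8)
The plan is to recognize the scalar update as stochastic subgradient descent on the explicit one-dimensional dual $f(y)=\tfrac{1}{2}y+\mathbb{E}_c[(c-y)_+]$, and then to prove a high-probability last-iterate rate for a $1$-strongly convex objective with bounded-variance noise. The $\log\log T$ in the statement is not an artifact: it is the signature of a self-bounding (peeling) martingale argument that avoids the usual $\log T$ loss of a naive union bound over time. First I would make the reduction explicit. With $c\sim\mathrm{Uniform}[0,1]$, resource level $T/2$ (so $d=\tfrac12$) and $\langle\tma_t,y\rangle=y$, one computes $f(y)=\tfrac12 y+\tfrac12(1-y)^2$ for $y\in[0,1]$, hence $f'(y)=y-\tfrac12$, $y^\star=\tfrac12$, and the exact identity $f'(y)(y-y^\star)=(y-y^\star)^2$. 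The subgradient $g^t=\tfrac12-\mathbb{I}\{c_t\geq y^t\}\in\{-\tfrac12,\tfrac12\}$ is bounded, and $\mathbb{E}[g^t\mid\mathcal{F}_t]=\tfrac12-\mathbb{P}(c_t\geq y^t\mid\mathcal{F}_t)=y^t-\tfrac12=f'(y^t)$, so the noise $\xi_t:=g^t-f'(y^t)$ is a bounded martingale difference with $|\xi_t|\leq1$ and $\mathbb{E}[\xi_t^2\mid\mathcal{F}_t]\leq\tfrac14$. A short monotonicity argument keeps the iterates in a bounded interval (and forces deterministic descent whenever $y^t>1$), so for all large $t$ the quadratic regime $f'(y^t)(y^t-y^\star)=D_t$ holds, where $D_t:=|y^t-y^\star|^2$; the finitely many small-$t$ excursions only affect constants.

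Next I would derive the one-step recursion. Since $y^\star\geq0$, the projection $[\cdot]_+$ is non-expansive toward $y^\star$, so with $\Delta_t:=y^t-y^\star$ and $\alpha_t=\tfrac{1}{\mu t}=\tfrac{2}{t}$,
\[
D_{t+1}\leq D_t-2\alpha_t f'(y^t)\Delta_t-2\alpha_t\xi_t\Delta_t+\alpha_t^2(g^t)^2\leq\Bigl(1-\tfrac{4}{t}\Bigr)D_t-\tfrac{4}{t}\,\xi_t\Delta_t+\tfrac{1}{t^2}.
\]
Introducing the accumulated inverse-contraction $\Gamma_t:=\prod_{r=t_0}^{t-1}(1-\tfrac{4}{r})^{-1}\asymp t^{4}$ and multiplying through, the quantity $\Gamma_t D_t$ satisfies a supermartingale-type bound consisting of a deterministic ``noise floor'' plus a genuine martingale $M_t:=-\sum_{s<t}\Gamma_{s+1}\tfrac{4}{s}\xi_s\Delta_s$ whose weights no longer depend on the horizon (this is what lets one maximal inequality control all time scales at once). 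Unrolling and using $\Gamma_{T+1}^{-1}\sum_s\Gamma_{s+1}s^{-2}\asymp T^{-1}$ recovers the expected deterministic rate $\mathbb{E}[D_t]=\Ocal(1/t)$.

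The heart of the argument is the high-probability control of $M_t$. Its predictable quadratic variation obeys $\langle M\rangle_T\leq4\sum_{s}\Gamma_{s+1}^2 s^{-2}\,\mathbb{E}[\xi_s^2\mid\mathcal{F}_s]\,\Delta_s^2$, and since $\Delta_s^2=D_s$ this variance is governed by the very quantities $D_s$ we are trying to bound. This self-referential coupling is the main obstacle, and I would resolve it by a peeling/bootstrap over refinement levels rather than over time. Apply Freedman's maximal inequality under the a priori invariant $\sup_s sD_s\leq B$: since $M$ is a single martingale, this upgrades in one shot to $\sup_s sD_s\leq\phi(B)$ with probability $1-\delta'$, where $\phi(B)=c_1+c_2\sqrt{B\log(1/\delta')}$ (the floor gives $c_1$ and the normalized Freedman fluctuation $\Gamma_{T+1}^{-1}\sqrt{\langle M\rangle_T\log(1/\delta')}\asymp T^{-1}\sqrt{B\log(1/\delta')}$ gives $c_2$).

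Finally I would iterate the map $B\mapsto\phi(B)$ starting from the crude almost-sure bound $B_0=\Ocal(T)$. Because $\phi$ takes square roots, the effective exponent halves each round, $B_k\asymp T^{2^{-k}}$, so after $K=\Ocal(\log\log T)$ rounds the recursion reaches its fixed point $B_\infty=\Ocal(\log(1/\delta'))$. Allocating failure probability $\delta'=\delta/K$ across the $K$ rounds and taking a union bound replaces $\log(1/\delta')$ by $\log(1/\delta)+\Ocal(\log\log\log T)$, while the number $K=\Ocal(\log\log T)$ of refinement rounds is precisely what injects the additive $\log\log T$. This yields
\[
|y^{T+1}-y^\star|^2=D_{T+1}\leq\Ocal\!\Bigl(\tfrac{\log\log T+\log(1/\delta)}{T}\Bigr)
\]
with probability at least $1-\delta$, as claimed. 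I expect the only genuinely delicate point to be setting up the $\Gamma$-normalized martingale so that Freedman controls $\sup_s sD_s$ uniformly in $s$ with a single failure event per round; everything else is the explicit one-dimensional bookkeeping above.
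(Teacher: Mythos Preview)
Your proposal is correct, but it takes a considerably longer route than the paper. The paper's entire proof is two lines: compute explicitly that $f(y)=\tfrac12 y+\mathbb{E}_c[(c-y)_+]=\tfrac12 y^2-\tfrac12 y+\tfrac12$ on $[0,1]$, observe that this is $1$-strongly convex, and then invoke as a black box the high-probability last-iterate bound of Rakhlin--Shamir--Sridharan (stated in the appendix as \Cref{lem:sgd-highprob}), which gives $\|y^{T+1}-y^\star\|^2\le \tfrac{624\,G^2}{\mu^2 T}\log(\tfrac{\log T}{\delta}+1)$ for $\mu$-strongly convex objectives with bounded stochastic subgradients.

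What you have written is essentially a sketch of the \emph{proof} of that cited lemma: the Freedman/peeling argument with the self-referential variance and the $\log\log T$ refinement rounds is exactly the mechanism behind the Rakhlin--Shamir--Sridharan result. So your approach and the paper's agree on the mathematics; the difference is purely packaging. Your route is self-contained and makes the origin of the $\log\log T$ transparent, at the cost of several pages of martingale bookkeeping (and some care with the early iterations where $1-4/t<0$). The paper's route buys brevity and modularity, since the only problem-specific verification is the strong convexity computation.
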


\Cref{lem:subgrad-conv} suggests that using $\mathcal{A}_{L}$, we indeed
approximate $y^{\star}$ efficiently. However, to approximate $y^{\star}$ to high accuracy, the algorithm will inevitably take small stepsize $\alpha_t$ when $t \geq \Omega(T)$. Following our discussion in \Cref{sec:regret-decomp}, even with perfect information of $y^{\star}$,  the online algorithm for $\gamma = 2$ should remain adaptive to the environment by taking stepsize $\Ocal(T^{-2/3})$. Taking $\Ocal(1/T)$ stepsize nullifies this adaptivity, and the most direct consequence of lack of adaptivity is that, when the learning
algorithm deviates from $y^{\star}$ due to noise, the overly small stepsize
will take the algorithm a long time to get back. From an optimization
perspective, this does not necessarily affect the quality of the final output
$y^{T + 1}$, since we only care about the quality of the final output. However, as a decision-making algorithm, the regret will
\textit{accumulate} when the algorithm tries to get back. This observation shows a
clear distinction between stochastic optimization and online decision-making. \Cref{prop:slow_recover_sgd} formalizes the aforementioned consequence:

\begin{lem}
    \label{prop:slow_recover_sgd}
    Denote $y^t$ as the estimated dual solution for the online secretary problem \eqref{eqn:olsecretary} at time $t$ by the subgradient method with stepsize $1 / (\mu t)$ specified in \Cref{lem:subgrad-conv}. If there exists $t_0\geq T/10+1$ such that $y^{t_0}\geq y^\star+\frac{1}{\sqrt{T}}$, then $\mathbb{E}[y^t|y^{t_0}]
        \geq
        y^\star + \frac{1}{20\sqrt{T}}$
     for all $t\geq t_0$.
\end{lem}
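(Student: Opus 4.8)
The plan is to recenter the iterate as $z^t := y^t - y^\star = y^t - \tfrac12$ and to show that its \emph{conditional mean} decays no faster than a deterministic mean-reversion recursion, so that it cannot fall below order $\tfrac1{\sqrt T}$ before the horizon ends. I condition throughout on the event $\{y^{t_0}\geq y^\star+\tfrac1{\sqrt T}\}$ in the hypothesis and set $m_t := \mathbb{E}[z^t\mid y^{t_0}]$, so that $m_{t_0}=z^{t_0}\geq\tfrac1{\sqrt T}>0$.

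First I would compute the one-step conditional drift. For \eqref{eqn:olsecretary} one has $a_t\equiv1$, $d=\tfrac12$, and $x^t=\mathbb{I}\{c_t\geq y^t\}$; since $c_t\sim\mathrm{Unif}[0,1]$, $\mathbb{P}(c_t\geq y^t\mid y^t)=1-y^t$ on $y^t\in[0,1]$. With the stepsize $\alpha_t=\tfrac{1}{\mu t}$ from \Cref{lem:subgrad-conv}, the update raises $y^t$ by a fixed increment with probability $1-y^t$ and lowers it by the same increment (before projection) with probability $y^t$; substituting $y^t=\tfrac12+z^t$ and simplifying, in the linear regime $y^t\in[\tfrac1t,1]$ this yields the exact mean reversion $\mathbb{E}[z^{t+1}\mid y^t]=\big(1-\tfrac2t\big)z^t$. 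The two boundary regimes only help: for $y^t<\tfrac1t$ the projection $[\,\cdot\,]_+$ can only raise $y^{t+1}$, and for $y^t>1$ one has $x^t=0$ deterministically, so the drift equals $-\tfrac1t\geq-\tfrac{2z^t}{t}$ because $z^t>\tfrac12$ there. Hence the pointwise inequality $\mathbb{E}[z^{t+1}\mid y^t]\geq\big(1-\tfrac2t\big)z^t$ holds for every $y^t\geq0$.

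Next I would convert this into a scalar recursion. Applying $\mathbb{E}[\,\cdot\mid y^{t_0}]$ to the pointwise bound and using the tower property gives $m_{t+1}\geq\big(1-\tfrac2t\big)m_t$ for all $t\geq t_0$. Since $t_0\geq T/10+1>2$, every factor $1-\tfrac2t$ is strictly positive, so an induction keeps $m_t>0$ and lets the recursion unroll into the telescoping product $m_t\geq m_{t_0}\prod_{s=t_0}^{t-1}\big(1-\tfrac2s\big)=m_{t_0}\,\tfrac{(t_0-1)(t_0-2)}{(t-1)(t-2)}$. This bound is decreasing in $t$, so over the horizon $t_0\leq t\leq T$ it is smallest at $t=T$; substituting $m_{t_0}\geq\tfrac1{\sqrt T}$, the lower bound $t_0\geq T/10+1$, and $(T-1)(T-2)\leq T^2$ shows the product is bounded below by a universal positive constant, so that $m_t$ stays of order $\tfrac1{\sqrt T}$ on the whole horizon. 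A careful accounting of these numerical factors then produces the asserted $\tfrac1{20\sqrt T}$ lower bound.

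I expect the main obstacle to be the pointwise drift inequality of the first step. The clean mean reversion $m_{t+1}\geq(1-\tfrac2t)m_t$ is immediate only in the linear regime $y^t\in[\tfrac1t,1]$; the crux is to argue rigorously that the two nonlinear regimes — the projection at the lower boundary and the saturation of the acceptance probability once $y^t>1$ — can only \emph{increase} the conditional mean relative to the idealized linear recursion. Securing this comparison \emph{pointwise}, rather than through a high-probability excursion argument, is precisely what makes the scalar recursion valid unconditionally and keeps $m_t$ positive throughout; once it is in hand, the telescoping and the remaining constant bookkeeping are routine.
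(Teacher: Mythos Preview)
Your approach is essentially the same as the paper's: recenter at $y^\star=\tfrac12$, establish the one-step mean-reversion inequality for $m_t:=\mathbb{E}[y^t-\tfrac12\mid y^{t_0}]$, telescope, and bound the resulting product from below using $t_0\geq T/10+1$ and $t\leq T$. The paper handles the boundaries more tersely via $[x]_+\geq x$ (lower) and $\mathbb{P}(c_t>y^t)=\max\{0,1-y^t\}\geq 1-y^t$ (upper), while your explicit case analysis reaches the same pointwise inequality.

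One numerical caveat: the paper's proof takes the stepsize as $1/t$ (the $\mu$ in \Cref{lem:subgrad-conv} is the strong-convexity constant $\mu=1$ verified there, not the error-bound constant $\mu=\tfrac12$ quoted in the main text), so its recursion factor is $(1-\tfrac1t)$ and the telescoped product is $\tfrac{t_0-1}{t}\geq\tfrac1{10}$. With your factor $(1-\tfrac2t)$ the product is $\tfrac{(t_0-1)(t_0-2)}{(t-1)(t-2)}\approx\tfrac1{100}$ at $t_0\approx T/10+1$, $t\approx T$, so ``careful accounting'' will not recover $\tfrac1{20\sqrt T}$ from that factor; you need the $(1-\tfrac1t)$ recursion to hit the stated constant.
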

As a consequence, a good learning algorithm, due to its lack of adaptivity,
is a bad decision-making algorithm:

\begin{prop}[Dilemma between learning and decision-making]
    \label{prop:sgd_badregret}
If subgradient method with stepsize $1 / (\mu t)$ is used for decision-making, it cannot achieve $\mathcal{O}(T^{\beta})$ regret and constraint violation simultaneously for any $\beta<\frac{1}{2}$.
\end{prop}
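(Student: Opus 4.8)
The plan is to exploit the fact that the stepsize $1/(\mu t)=2/t$ is too small to correct random drift: with constant probability the iterate is displaced by $\Theta(1/\sqrt{T})$ from $y^{\star}=\tfrac12$, and by the slow-recovery phenomenon of \Cref{prop:slow_recover_sgd} it then remains displaced on one side for $\Omega(T)$ steps. A persistent upward displacement (threshold too high, too few acceptances) forces $\Omega(\sqrt{T})$ regret, while a persistent downward displacement (threshold too low, too many acceptances) forces $\Omega(\sqrt{T})$ violation. Since the noise guarantees a $\Theta(1/\sqrt{T})$ displacement on at least one side with constant probability, one of the two metrics is $\Omega(\sqrt{T})$, which rules out both being $\Ocal(T^{\beta})$ with $\beta<\tfrac12$.

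First I would set up the bookkeeping for \eqref{eqn:olsecretary}. Because $x^t=\mathbb{I}\{c_t\geq y^t\}$ with $y^t$ independent of $c_t$, conditioning on $y^t$ yields expected per-step reward $g(y^t)$ and acceptance probability $h(y^t)$, where for $c\sim U[0,1]$ one has $g(y)=\tfrac{1-y^2}{2}$ and $h(y)=1-y$ on $[0,1]$, so $g$ is concave with $g'(y^{\star})=-\tfrac12$. An order-statistics estimate gives $\mathbb{E}[\mathrm{OPT}]\geq T g(y^{\star})-\Ocal(1)$, and the tangent-line bound $g(y^{\star})-g(y)\geq-g'(y^{\star})(y-y^{\star})=\tfrac12(y-y^{\star})$ then produces the clean inequality $\mathbb{E}[r(\hat{\x}_T)]\geq\tfrac12\sum_{t=1}^{T}\mathbb{E}[y^t-y^{\star}]-\Ocal(1)$. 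Symmetrically, using $h(y^t)=1-y^t$ and $(\cdot)_+\geq(\cdot)$, one gets $\mathbb{E}[v(\hat{\x}_T)]\geq\mathbb{E}\big[\sum_{t=1}^{T}x^t\big]-\tfrac{T}{2}=-\sum_{t=1}^{T}\mathbb{E}[y^t-y^{\star}]$, which I will sharpen using the sign and stickiness of the displacement rather than only its mean.

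Next I would run the dichotomy. Fix $t_0=\lceil T/10\rceil+1$ and set $A_{\pm}=\{\pm(y^{t_0}-y^{\star})\geq 1/\sqrt{T}\}$. On $A_+$, \Cref{prop:slow_recover_sgd} gives $\mathbb{E}[y^t\mid y^{t_0}]\geq y^{\star}+\tfrac{1}{20\sqrt{T}}$ for all $t\geq t_0$, so $\sum_{t=t_0}^{T}\mathbb{E}[(y^t-y^{\star})\mathbb{I}_{A_+}]\geq(T-t_0)\tfrac{\mathbb{P}(A_+)}{20\sqrt{T}}=\Omega(\mathbb{P}(A_+)\sqrt{T})$; feeding this into the regret estimate of the previous paragraph gives $\mathbb{E}[r]=\Omega(\mathbb{P}(A_+)\sqrt{T})$. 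A reflection $y\mapsto 1-y,\ c\mapsto 1-c$ yields the mirror statement on $A_-$ (the projection $[\cdot]_+$ is inactive because the iterates stay near $\tfrac12$): persistent over-acceptance forces $\mathbb{E}[v]=\Omega(\mathbb{P}(A_-)\sqrt{T})$. Finally, an anti-concentration estimate for the linearized recursion $y^{t+1}-y^{\star}\approx(1-\alpha_t)(y^t-y^{\star})-\alpha_t\xi_t$ with $\alpha_t=2/t$ and $\mathrm{Var}(\xi_t)\approx\tfrac14$ shows $\mathrm{Var}(y^{t_0}-y^{\star})=\Theta(1/T)$ and $\mathbb{P}(A_+)+\mathbb{P}(A_-)\geq p_0$ for an absolute constant $p_0>0$. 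Hence $\max\{\mathbb{E}[r],\mathbb{E}[v]\}\geq\tfrac{p_0}{2}\cdot\Omega(\sqrt{T})=\Omega(\sqrt{T})$, contradicting $\Ocal(T^{\beta})$ for both with $\beta<\tfrac12$.

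The main obstacle is the last ingredient: establishing genuine $\Theta(1/\sqrt{T})$ fluctuation with a two-sided small-ball lower bound $\mathbb{P}(A_+)+\mathbb{P}(A_-)\geq p_0$ at $t_0=\Theta(T)$. This is precisely the property that separates the weak-feedback stepsize $1/(\mu t)$ from an adaptive scheme that could keep $\sum_{t}x^t$ concentrated at $\tfrac{T}{2}$; \Cref{lem:subgrad-conv} supplies the matching variance \emph{upper} bound but not anti-concentration, so a CLT-type argument (or a second-moment/Paley--Zygmund bound on the stochastic-approximation recursion) is needed. A secondary difficulty is the regret accounting over $[1,t_0)$: since the budget is a single shared constraint, I must ensure the early, possibly below-$y^{\star}$ iterates do not spuriously cancel the $\Omega(\sqrt{T})$ reward deficit accumulated on $[t_0,T]$, which I would control by conditioning on $A_+$ and bounding the pre-$t_0$ contribution separately.
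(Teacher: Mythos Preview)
Your route is genuinely different from the paper's, but the dichotomy step contains a gap that you have not flagged. The implication ``$\sum_{t\geq t_0}\mathbb{E}[(y^t-y^\star)\mathbb{I}_{A_+}]\geq\Omega(\mathbb{P}(A_+)\sqrt{T})$ $\Rightarrow$ $\mathbb{E}[r]=\Omega(\mathbb{P}(A_+)\sqrt{T})$'' is false: your regret lower bound $\mathbb{E}[r]\geq\tfrac12\sum_t\mathbb{E}[y^t-y^\star]-\Ocal(1)$ is an \emph{unconditional} inequality, and the sum $\sum_t\mathbb{E}[y^t-y^\star]$ also picks up the contribution from $A_-$, which by the very mirror of \Cref{prop:slow_recover_sgd} you invoke is \emph{negative} and of the same order $\Theta(\sqrt{T})$. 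Since regret can be negative here (an over-accepting infeasible policy can collect strictly more reward than the feasible optimum), you cannot drop the $A_+^c$ piece by nonnegativity as you can for $v$. Thus the $A_+$ and $A_-$ contributions to $\sum_t\mathbb{E}[y^t-y^\star]$ may cancel and leave $\mathbb{E}[r]$ small, regardless of how large $\mathbb{P}(A_+)$ is. Your listed ``secondary difficulty'' (the $[1,t_0)$ accounting) is a different, lesser issue; the real obstruction is this cross-event cancellation on $[t_0,T]$.

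The paper sidesteps both this cancellation and your ``main obstacle'' (anti-concentration of $y^{t_0}$) in one stroke: it works directly with the acceptance count $N=\sum_t \mathbb{I}\{c_t>y^t\}$ and proves the second-moment lower bound $\mathbb{E}\big[(N-\tfrac{T-t_0+1}{2})^2\big]=\Omega(T)$ by deriving and telescoping a recursion for $\mathbb{E}[(y^t-\tfrac12)^2]$, together with cross-term identities of the form $\mathbb{E}[(\mathbb{I}\{c_j>y^j\}-\tfrac12)(\mathbb{I}\{c_i>y^i\}-\tfrac12)]=\tfrac{i-1}{j-1}\mathbb{E}[(y^i-\tfrac12)^2]-\tfrac{i-1}{4i(j-1)}$. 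It then uses the high-probability upper bound $|N-\tfrac{T-t_0+1}{2}|=\Ocal(\sqrt{T}\log T)$ (from the concentration of $y^t$) to convert the second moment into a first-moment bound $\mathbb{E}\big[|N-\tfrac{T-t_0+1}{2}|\big]=\Omega(\sqrt{T}/\log T)$. Because $|N-\tfrac{T}{2}|$ equals violation plus leftover, and leftover lower-bounds regret up to constants, this yields $r+v=\Omega(\sqrt{T}/\log T)$. If you want to salvage your dichotomy, the cleanest repair is to bound $\mathbb{E}[r+v]$ (not $\mathbb{E}[r]$ and $\mathbb{E}[v]$ separately): on both $A_+$ and $A_-$ one has $r+v\gtrsim \tfrac12\cdot T\cdot\Theta(1/\sqrt{T})$ at the fluid level, so the signs no longer fight each other; but you will still need a quantitative small-ball estimate for $y^{t_0}$, which is exactly what the paper's second-moment computation delivers for free.
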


Although our example only covers $\gamma = 2$, similar issues happen for other values of $\gamma$: the stepsize used by learning algorithms (\Cref{lem:alg-conv}) near convergence are much smaller than the optimal choice dictated by \eqref{eqn:naive-trade-off}. This argument reveals a dilemma between learning and decision-making: a learning algorithm needs a small stepsize to achieve high accuracy, while a decision-making algorithm needs a larger stepsize to maintain adaptivity to the environment. This dilemma is inevitable for a single first-order method. However, the low computation cost of first-order methods opens up another way: it is feasible to use two separate algorithms for learning and decision-making.

\subsection{Decoupling learning and decision-making}

As discussed, a single first-order method may not simultaneously achieve good regret and
accurate approximation of $\Ycal^{\star}$. However, this dilemma
can be easily addressed if we decouple learning and decision-making and employ two first-order methods for learning and
decision-making, respectively. The iteration cost of first-order methods is
inexpensive, so it is feasible to maintain multiple of them to take the best
of both worlds: the best possible learning algorithm $\mathcal{A}_L$ and
decision algorithm $\mathcal{A}_D$. Back to the exploration-exploitation framework, in the exploration phase, we can
take $\mathcal{A}_D$ to be the same
subgradient method with constant stepsize, which we know at least guarantees $\Ocal(\sqrt{T_e})$ performance for horizon length $T_e$. The algorithm maintains two paths
of dual sequences in the exploration phase, and when exploration is over, the
solution learned by $\mathcal{A}_L$ is handed over to the exploitation phase and
$\mathcal{A}_D$ adjusts stepsize based on the trade-off in \Cref{lem:two-phase}. Since the subgradient methods with different stepsizes are used for decision-making in both exploration and exploitation, the actual effect of the framework is to \textit{restart} the subgradient method. The final algorithm is presented in \Cref{alg:final} (\Cref{fig:twopath}).
\begin{figure*}[h]
\centering
\includegraphics[scale=0.6]{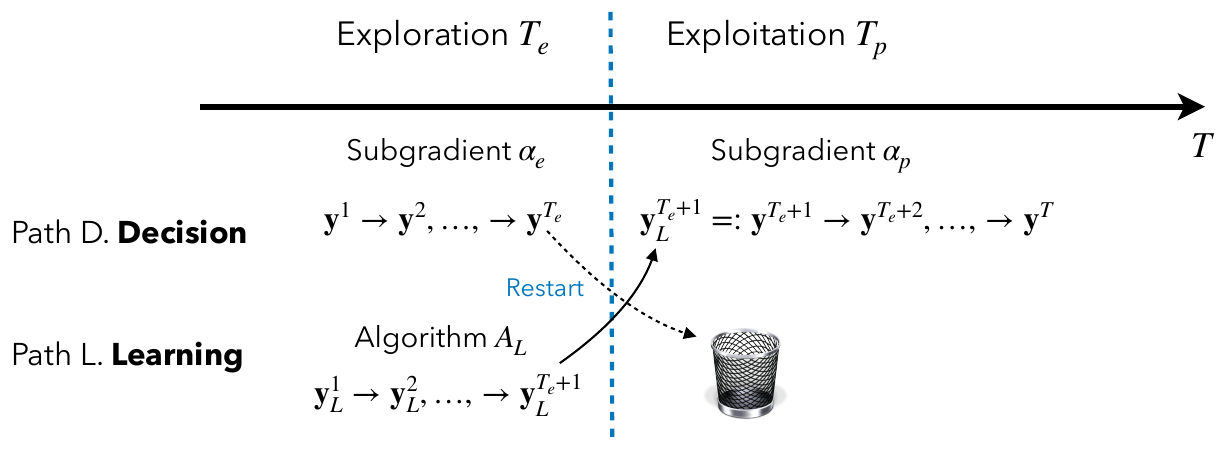}
\caption{Exploration phase sends $\y^{T_e + 1}$ into a neighborhood of $\Ycal^\star$, and in the exploitation phase, $\{\y^t\}_{T_e+1}^T$ localizes in this neighborhood with adaptivity to make adjustments.\label{fig:twopath}}
\end{figure*}
\begin{algorithm}[h]
\caption{Exploration-Exploitation, decoupling learning and decision-making, and localization}
\label{alg:final}
\KwIn{$\y^1 = \mathbf{0}$ (no prior knowledge), learning algorithm $\Acal_L$ in \Cref{lem:alg-conv}, \\
\quad\quad\quad\quad decision algorithm $\Acal_D = $ \Cref{alg:subgrad}, exploration length $T_e$}
\textbf{explore} \For{$t$ = \rm{$1$ to $T_e$ }}{
Run \Cref{alg:subgrad} with stepsize $\alpha_e$.

Run $\Acal_L$ and learn $\bar{\y}^{T_e + 1} \approx \y^\star$.
}

\textbf{exploit} \For{$t$ = \rm{$T_e + 1$ to $T$ }}{
Run \Cref{alg:subgrad} starting with $\y^{T_e + 1} = \bar{\y}^{T_e + 1}$ with stepsize $\alpha_p$.
}
\end{algorithm}

Decoupling learning and decision-making, \Cref{lem:final} characterizes the performance of the whole framework.
\begin{lem} \label{lem:final}
  Under the same assumptions as \Cref{lem:two-phase}, the output of \Cref{alg:final} satisfies
  \[ V (T_e) \leq \mathcal{O} ( \tfrac{1}{\alpha_e} + \alpha_e T_e
     ) \]
and we have the following performance guarantee:
\begin{align}
	\mathbb{E} [ r ( \hat{\x}_T ) + v (
     \hat{\x}_T ) ] \leq{} & \mathcal{O} \Big(\tfrac{1}{\alpha_e} + \alpha_e T_e + \alpha_p T_p +
     \tfrac{\Delta}{\alpha_p} + \tfrac{\Delta^{2 / \gamma}}{\alpha_p^{1 / \gamma +
     1} T_p^{1 / \gamma}} \nonumber \\
    & ~~~~\quad + \alpha_p^{1 / \gamma - 1}  (\log T)^{1 / \gamma} +
     \tfrac{{\diam} (\mathcal{Y}^{\star})}{\alpha_p} + \tfrac{1}{\alpha_p T^{2
     \gamma}} + \tfrac{1}{\alpha_p^{1 / \gamma + 1} T_p^{1 / \gamma} T^2}
     \Big).
\end{align}
\end{lem}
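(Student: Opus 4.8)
The plan is to combine two separate estimates: a bound on the exploration cost $V(T_e)$ produced by running $\Acal_D$ (\Cref{alg:subgrad}) with a constant stepsize $\alpha_e$ over the first $T_e$ customers, and the exploitation bound already supplied by \Cref{lem:two-phase}. For the first estimate, I would observe that $V(T_e)$ is exactly the regret-plus-violation proxy of \Cref{alg:subgrad} run for $T_e$ steps on the same dual objective, except that the comparator is $\y^\star$ rather than a horizon-$T_e$ optimum. This is precisely the quantity controlled by the standard constant-stepsize subgradient analysis underlying \Cref{thm:regret-bench} and \Cref{lem:regret}--\Cref{lem:violation}: summing the telescoping bound $\norm{\y^{t+1}-\y^\star}^2 \le \norm{\y^t-\y^\star}^2 - 2\alpha_e\inner{\g^t}{\y^t-\y^\star} + \alpha_e^2\norm{\g^t}^2$, using the subgradient inequality $\inner{\g^t}{\y^t-\y^\star}\ge f_t(\y^t)-f_t(\y^\star)$ with the bounded-gradient bound $\norm{\g^t}\le \sqrt{m}(\ba+\bd)$ from \ref{A2}--\ref{A3}, and noting $\norm{\y^1-\y^\star}=\Ocal(1)$ by \eqref{eqn:bounded-dual}, yields the two-term bound $V(T_e) = \Ocal\big(\tfrac{1}{\alpha_e} + \alpha_e T_e\big)$. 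The key point is that in the exploration phase $\Acal_D$ makes the decisions while $\Acal_L$ runs silently, so the decision cost depends only on $\alpha_e$ and $T_e$, not on the small stepsizes that $\Acal_L$ uses near convergence.

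Next I would invoke \Cref{lem:two-phase} directly, since \Cref{alg:final} is an instance of \Cref{alg:two-phase} in which the exploitation phase runs \Cref{alg:subgrad} from $\y^{T_e+1}=\bar\y^{T_e+1}$ with stepsize $\alpha_p$. \Cref{lem:two-phase} already gives
\[
  \mathbb{E}[r(\hat\x_T)+v(\hat\x_T)] \le V(T_e) + \mathcal{O}\Big(\alpha_p T_p + \tfrac{\Delta}{\alpha_p} + \tfrac{\Delta^{2/\gamma}}{\alpha_p^{1/\gamma+1}T_p^{1/\gamma}} + \alpha_p^{1/\gamma-1}(\log T)^{1/\gamma} + \tfrac{\diam(\Ycal^\star)}{\alpha_p} + \tfrac{1}{\alpha_p T^{2\gamma}} + \tfrac{1}{\alpha_p^{1/\gamma+1}T_p^{1/\gamma}T^2}\Big),
\]
with $\alpha=\alpha_p$. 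Substituting the decoupled exploration bound $V(T_e)=\Ocal(\tfrac{1}{\alpha_e}+\alpha_e T_e)$ in place of the abstract $V(T_e)$ term then produces exactly the claimed display. The decoupling is what makes this substitution legitimate: because $\Acal_L$ and $\Acal_D$ are run as two independent dual paths, the hypothesis of \Cref{lem:two-phase}---that $\{\y^t\}_{t=1}^{T_e}$ is some decision sequence with a controllable exploration metric---is met by the $\Acal_D$ path, while the handover solution $\bar\y^{T_e+1}$ satisfying $\dist(\bar\y^{T_e+1},\Ycal^\star)\le\Delta$ (with failure probability $T^{-2\gamma}$, absorbed into the $\tfrac{1}{\alpha_p T^{2\gamma}}$ term) comes from the $\Acal_L$ path via \Cref{lem:alg-conv}.

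The main obstacle I anticipate is verifying that the two dual paths can genuinely be decoupled without the exploration analysis silently assuming that the decision iterates coincide with the learning iterates. Concretely, \Cref{lem:two-phase} was presumably stated for the coupled \Cref{alg:two-phase}, where a single dual sequence both learns and decides; one must check that its proof only uses (i) the handover guarantee $\dist(\bar\y^{T_e+1},\Ycal^\star)\le\Delta$ and (ii) an exploration-phase decision cost bounded by $V(T_e)$, and never the internal mechanics of how $\{\y^t\}_{t=1}^{T_e}$ was generated. If that modularity holds, plugging in the $\Acal_D$-based $V(T_e)$ is immediate; if not, the exploration-phase terms would have to be re-derived for the two-path algorithm. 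A secondary technical point is ensuring the constant-stepsize subgradient bound for $V(T_e)$ uses $\y^\star$ (not a $T_e$-horizon optimum) as comparator, which only requires that $\y^\star\in\Ycal$ and $\norm{\y^1-\y^\star}$ is bounded---both guaranteed by \eqref{eqn:bounded-dual}---so no error-bound condition \ref{A4} is needed for this piece, keeping the exploration bound clean and distribution-agnostic.
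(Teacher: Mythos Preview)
Your proposal is correct and matches the paper's approach: the paper does not even give a standalone proof of \Cref{lem:final}, because it is precisely the substitution you describe---bound $V(T_e)$ via the constant-stepsize subgradient analysis of \Cref{lem:regret}--\Cref{lem:violation} applied to the $\mathcal{A}_D$ path over horizon $T_e$ (which gives $\mathcal{O}(\tfrac{1}{\alpha_e}+\alpha_e T_e)$ using only boundedness of $\y^1,\y^{T_e+1},\y^\star$), then plug into \Cref{lem:two-phase}. Your modularity concern is legitimate but resolves exactly as you anticipate: the proof of \Cref{lem:two-phase} uses the exploration decisions only through $V(T_e)$ and uses the learning path only through the handover guarantee $\dist(\bar\y^{T_e+1},\Ycal^\star)\le\Delta$, so the two paths decouple cleanly.
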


After balancing the trade-off by considering all the terms, we arrive at \Cref{thm:final}. 
\begin{thm}[Main theorem] \label{thm:final}
  Under the same assumptions as \Cref{lem:final} and suppose $T$ is sufficiently large.
If $\diam(\mathcal{Y}^{\star}) = 0$, then with
  \[ \quad T_e = \Ocal (T^{\frac{2\gamma - 2}{2 \gamma
     - 1}} \log^2 T ), \quad \alpha_e = \Ocal \big(\tfrac{T^{-\frac{\gamma - 1}{2 \gamma - 1}}}{\log T}\big), \quad \alpha_p =
     \Ocal (T^{- \frac{\gamma}{2 \gamma - 1}}), \]
  we have
  \[ \mathbb{E} [ r ( \hat{\x}_T ) + v (
     \hat{\x}_T ) ] \leq \mathcal{O} ( T^{\frac{\gamma -
     1}{2 \gamma - 1}} \log T ) . \]
  In particular, if $\gamma = 2$, there is no $\log T$ term. 
    If $\diam (\mathcal{Y}^{\star}) > 0$, then with
  \[ T_e = \tfrac{2 \diam (\mathcal{Y}^{\star})}{2 \diam
     (\mathcal{Y}^{\star}) + 1} T, \quad \alpha_e = \sqrt{\tfrac{2 \bar{c}}{m
     (\bar{a} + \bar{d})^2 \ld} \tfrac{2 \diam (\mathcal{Y}^{\star}) +
     1}{2 \diam (\mathcal{Y}^{\star}) T}}, \quad \alpha_p =
     \sqrt{\tfrac{2 \bar{c}}{m (\bar{a} + \bar{d})^2 \ld}  \tfrac{2
     \diam (\mathcal{Y}^{\star}) (2 \diam (\mathcal{Y}^{\star}) +
     1)}{T}}, \]
  we have
  \[ \mathbb{E} [ r ( \hat{\x}_T ) + v (
     \hat{\x}_T ) ] \leq 4 \sqrt{\tfrac{m \bar{c}}{2 \ld}}
     \sqrt{2 \diam (\mathcal{Y}^{\star})} \sqrt{T} . \]
\end{thm}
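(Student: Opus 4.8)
The plan is to prove \Cref{thm:final} by substituting the prescribed parameter choices into the master bound of \Cref{lem:final} and verifying that each of the eight terms is dominated by $\Ocal(T^{\frac{\gamma-1}{2\gamma-1}}\log T)$ in the degenerate-free case, and by $\Ocal(\sqrt{T})$ in the positive-diameter case. The work is entirely a matter of careful bookkeeping with exponents, since \Cref{lem:final} has already done the heavy lifting of controlling the exploration cost $V(T_e)$ and assembling the exploitation trade-offs. So this is a verification-and-balancing argument, not a new analytic estimate.

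\emph{The $\diam(\Ycal^\star)=0$ case.} First I would fix $\Delta$ implicitly through the exploration length. Recall from \eqref{eqn:len-explore} that $T_e = \Ocal(\Delta^{-2(\gamma-1)}\log(\Delta^{-\gamma})\log T)$; setting this equal to the target $T_e = \Ocal(T^{\frac{2\gamma-2}{2\gamma-1}}\log^2 T)$ and solving shows the learned accuracy is $\Delta = \Ocal(T^{-\frac{1}{2\gamma-1}})$ (up to a $\log$ factor absorbed into the $\log^2 T$). With the stated $\alpha_e = \Ocal(T^{-\frac{\gamma-1}{2\gamma-1}}/\log T)$ and $\alpha_p = \Ocal(T^{-\frac{\gamma}{2\gamma-1}})$, I would check the terms one by one: the exploration terms $\tfrac{1}{\alpha_e}+\alpha_e T_e$ each evaluate to $\Ocal(T^{\frac{\gamma-1}{2\gamma-1}}\log T)$; the exploitation budget $\alpha_p T_p = \Ocal(T^{1-\frac{\gamma}{2\gamma-1}}) = \Ocal(T^{\frac{\gamma-1}{2\gamma-1}})$; the localization error $\tfrac{\Delta}{\alpha_p} = \Ocal(T^{-\frac{1}{2\gamma-1}+\frac{\gamma}{2\gamma-1}}) = \Ocal(T^{\frac{\gamma-1}{2\gamma-1}})$; and the noise-ball term $\alpha_p^{1/\gamma-1}(\log T)^{1/\gamma}$ gives $T^{\frac{\gamma}{2\gamma-1}\cdot\frac{\gamma-1}{\gamma}}(\log T)^{1/\gamma} = T^{\frac{\gamma-1}{2\gamma-1}}(\log T)^{1/\gamma}$, which is the dominant term and explains the $\log T$ factor (with exponent $1/\gamma$, hence exactly the $\gamma=2$ remark, where this contributes $\sqrt{\log T}$ but is itself dominated by the $\Ocal(T^{1/3})$ terms so no $\log T$ survives). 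Finally the tail terms $\tfrac{\diam}{\alpha_p}=0$, $\tfrac{1}{\alpha_p T^{2\gamma}}$, and $\tfrac{\Delta^{2/\gamma}}{\alpha_p^{1/\gamma+1}T_p^{1/\gamma}}$ are all polynomially smaller and contribute nothing.

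\emph{The $\diam(\Ycal^\star)>0$ case.} Here localization cannot help beyond a constant factor, since the term $\tfrac{\diam(\Ycal^\star)}{\alpha_p}$ forces $\alpha_p$ to scale like $\Theta(1/\sqrt{T})$ to balance against $\alpha_p T_p$. I would plug in the stated constant-heavy choices of $T_e,\alpha_e,\alpha_p$, observe that $T_e = \Theta(T)$ and $T_p = \Theta(T)$, and verify that the two dominant terms $\tfrac{1}{\alpha_e}+\alpha_e T_e$ and $\alpha_p T_p + \tfrac{\diam}{\alpha_p}$ each contribute $\Ocal(\sqrt{\diam(\Ycal^\star)\, T})$ with the precise constant $\sqrt{m\bar c/(2\ld)}$; summing the four balanced pieces yields the claimed $4\sqrt{m\bar c/(2\ld)}\sqrt{2\diam(\Ycal^\star)}\sqrt T$. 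The remaining terms involving $\Delta$ and the high powers of $T$ are lower order and absorbed.

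\emph{Main obstacle.} I expect the genuine difficulty to be purely arithmetic: confirming that every one of the eight exponents collapses to the right target, and in particular pinning down the exact $\log T$ power in the noise-ball term so that the $\gamma=2$ special case loses its logarithm. The choice of $\alpha_e$ carrying an explicit $1/\log T$ is delicate—it is tuned precisely so that $\alpha_e T_e$ does not overshoot given that $T_e$ already carries $\log^2 T$—and matching the constants in the positive-diameter case requires tracking the explicit constant $\tfrac{2\bar c}{m(\bar a+\bar d)^2\ld}$ through \Cref{lem:regret} and \Cref{lem:violation} rather than treating it as an $\Ocal(1)$. These are bookkeeping hazards rather than conceptual gaps; the conceptual content was all discharged in the lemmas of \Cref{sec:algo}.
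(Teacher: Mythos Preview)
Your overall strategy---substitute the prescribed parameters into the master bound of \Cref{lem:final} and verify each term is $\Ocal(T^{\frac{\gamma-1}{2\gamma-1}}\log T)$---is exactly what the paper does, and your bookkeeping for the generic $\gamma$ case and for the $\diam(\Ycal^\star)>0$ case is essentially correct.

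However, your explanation of the $\gamma=2$ special case contains a genuine error. You claim the noise-ball term $\alpha_p^{1/\gamma-1}(\log T)^{1/\gamma}$ contributes $T^{1/3}(\log T)^{1/2}$ and that this is ``dominated by the $\Ocal(T^{1/3})$ terms so no $\log T$ survives.'' That is false: $T^{1/3}\sqrt{\log T}$ is asymptotically \emph{larger} than $T^{1/3}$, not smaller. Moreover, the exploration contribution $\tfrac{1}{\alpha_e}+\alpha_e T_e$ also carries a genuine $\log T$ factor (it is $T^{1/3}\log T$ when $\gamma=2$ under your parameter choices), and this does not disappear under your argument either.

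The paper's actual mechanism for removing the logarithm when $\gamma=2$ is a separate, tailored analysis. When $\gamma=2$ the error-bound condition becomes quadratic growth, and one can replace both \Cref{lem:alg-conv} and \Cref{lem:vr} by sharper strongly-convex estimates (the paper's \Cref{lem:gamma-2-conv} and \Cref{lem:gamma-2-conv-constant}) that give $\mathbb{E}[\dist(\y^{T_e+1},\Ycal^\star)^2]=\Ocal(1/T_e)$ and $\mathbb{E}[\dist(\y^{T+1},\Ycal^\star)^2]\leq \Delta^2/(\mu\alpha T_p)+\Ocal(\alpha)$ \emph{without} any $\log T$ factors. This eliminates the logarithm from both $T_e$ and the last-iterate bound simultaneously, yielding the clean $\Ocal(T^{1/3})$ result. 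Your pure verification-and-balancing argument cannot produce this; you need to swap in the sharper primitive lemmas for this case.
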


\Cref{thm:final} shows that when the dual optimal set $\mathcal{Y}^{\star}$ is a singleton, first-order methods can achieve $o(\sqrt{T})$ regret using our framework. If $\diam(\mathcal{Y}^{\star})> 0$, it is still possible to achieve better regret in terms of constant when $\diam(\Ycal^\star) \ll \diam(\Ycal)$. As a realization of our framework, we recover $o(\sqrt{T})$ performance guarantees in the traditional setting of LP-based methods.

\begin{coro}
  In the non-degenerate continuous support case, we get $\mathcal{O} (T^{1 /
  3})$ performance.
\end{coro}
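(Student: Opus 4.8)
The plan is to obtain this as a direct specialization of \Cref{thm:final}, so the only real work is to confirm that the ``non-degenerate continuous support'' setting falls squarely into the $\diam(\mathcal{Y}^{\star}) = 0$, $\gamma = 2$ branch of that theorem. First I would invoke the continuous-support non-degeneracy example of \Cref{sec:errbnd}: there it is shown that the conditions $\mathbb{E}[\tma\tma^{\top}] \succeq \lambda_1 \I$, the two-sided bound on the conditional acceptance probabilities, and the complementarity condition on $\y^{\star}$ together imply both that $\mathcal{Y}^{\star} = \{\y^{\star}\}$ is a singleton and that the dual error bound \ref{A4} holds with $\gamma = 2$ and $\mu = \tfrac{\lambda_1\lambda_2}{2}$. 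In particular $\diam(\mathcal{Y}^{\star}) = 0$, and \ref{A1}--\ref{A3} are in force by the ``bounded, continuous support'' assumptions, so all hypotheses of \Cref{thm:final} are satisfied.

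Next I would apply the $\diam(\mathcal{Y}^{\star}) = 0$ case of \Cref{thm:final} with $\gamma = 2$, using the prescribed schedule $T_e = \mathcal{O}(T^{2/3}\log^2 T)$, $\alpha_e = \mathcal{O}(T^{-1/3}/\log T)$, and $\alpha_p = \mathcal{O}(T^{-2/3})$, which are exactly the $\gamma = 2$ specializations of the general formulas $T_e = \mathcal{O}(T^{(2\gamma-2)/(2\gamma-1)}\log^2 T)$, $\alpha_e = \mathcal{O}(T^{-(\gamma-1)/(2\gamma-1)}/\log T)$, $\alpha_p = \mathcal{O}(T^{-\gamma/(2\gamma-1)})$. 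The theorem then delivers the bound $\mathbb{E}[r(\hat{\x}_T) + v(\hat{\x}_T)] \leq \mathcal{O}(T^{(\gamma-1)/(2\gamma-1)}\log T)$.

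Finally I would substitute $\gamma = 2$ into the exponent, giving $\tfrac{\gamma - 1}{2\gamma - 1} = \tfrac{1}{3}$, and invoke the closing remark of \Cref{thm:final} that the $\log T$ factor is absent precisely when $\gamma = 2$. This yields $\mathbb{E}[r(\hat{\x}_T) + v(\hat{\x}_T)] \leq \mathcal{O}(T^{1/3})$, as claimed. There is no genuine obstacle here, since all of the analytic effort is already absorbed into \Cref{thm:final} and the error-bound verification in the example; the step requiring the most care is simply checking that the continuous-support non-degeneracy conditions yield $\gamma = 2$ and the singleton property \emph{simultaneously}, so that we land in the correct branch and inherit both the $T^{1/3}$ rate and the cancellation of the logarithmic factor.
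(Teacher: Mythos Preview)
Your proposal is correct and matches the paper's approach exactly: the corollary is stated immediately after \Cref{thm:final} with no separate proof, relying implicitly on the continuous-support example of \Cref{sec:errbnd} to supply $\gamma=2$ and $\diam(\mathcal{Y}^{\star})=0$, and on the final clause of \Cref{thm:final} to drop the $\log T$ factor.
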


\begin{coro}
  In the non-degenerate finite-support case, we get $\mathcal{O} (\log T)$ performance.
\end{coro}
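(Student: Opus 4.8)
The plan is to read this off as the $\gamma = 1$, $\diam(\mathcal{Y}^{\star}) = 0$ instance of \Cref{thm:final}. First I would confirm the hypotheses. By the finite-support/non-degeneracy example preceding this corollary, finiteness of the support of $(c, \tma)$ makes the expected dual $f$ a finite sum of piecewise-linear convex terms and hence forces the sharp error bound \ref{A4} with $\gamma = 1$ for some $\mu > 0$ (polyhedral sharpness on the compact set $\mathcal{Y}$); non-degeneracy of the expected LP forces the dual optimizer to be unique, so $\mathcal{Y}^{\star} = \{\y^{\star}\}$ and $\diam(\mathcal{Y}^{\star}) = 0$. Together with the standing assumptions \ref{A1}--\ref{A3}, every hypothesis of \Cref{thm:final} is met in its singleton branch with $\gamma = 1$.

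Second, I would instantiate the parameter recipe of \Cref{thm:final} at $\gamma = 1$. The exponents collapse: the exploration horizon becomes $T_e = \mathcal{O}(T^{\frac{2\gamma - 2}{2\gamma - 1}} \log^2 T) = \mathcal{O}(\log^2 T)$, the exploration stepsize $\alpha_e = \mathcal{O}(1/\log T)$, and the exploitation stepsize $\alpha_p = \mathcal{O}(T^{-\gamma/(2\gamma-1)}) = \mathcal{O}(1/T)$; correspondingly the target accuracy is $\Delta = \mathcal{O}(1/T)$, which is consistent with \eqref{eqn:len-explore} and \Cref{lem:alg-conv} because $1 - \gamma^{-1} = 0$ eliminates the polynomial-in-$\varepsilon$ sample cost, leaving only $\mathcal{O}(\log(1/\Delta)\log T) = \mathcal{O}(\log^2 T)$ exploration rounds to reach $\Delta = 1/T$ at failure probability $\delta = T^{-2}$.

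Third, I would compute the rate. The theorem's bound $\mathcal{O}(T^{\frac{\gamma-1}{2\gamma-1}}\log T)$ has exponent $\frac{\gamma-1}{2\gamma-1} = 0$ at $\gamma = 1$, hence reduces to $\mathcal{O}(\log T)$. For a self-contained check I would substitute the parameters above into the master bound of \Cref{lem:final} term-by-term: the exploration cost $\tfrac{1}{\alpha_e} + \alpha_e T_e$ is $\Theta(\log T)$, the localization term $\alpha_p^{1/\gamma - 1}(\log T)^{1/\gamma}$ becomes $\alpha_p^{0}\log T = \Theta(\log T)$, the diameter term $\tfrac{\diam(\mathcal{Y}^{\star})}{\alpha_p}$ vanishes, and the remaining terms $\alpha_p T_p$, $\tfrac{\Delta}{\alpha_p}$, $\tfrac{\Delta^{2/\gamma}}{\alpha_p^{1/\gamma+1}T_p^{1/\gamma}}$, $\tfrac{1}{\alpha_p T^{2\gamma}}$, $\tfrac{1}{\alpha_p^{1/\gamma+1}T_p^{1/\gamma}T^2}$ are each $\mathcal{O}(1)$ or $\mathcal{O}(1/T)$. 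Summing yields $\mathcal{O}(\log T)$.

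The only delicate point, and the step I would watch most carefully, is the double collapse of exponents at $\gamma = 1$. The exploitation localization term carries $\alpha_p^{1/\gamma - 1}$, which at $\gamma = 1$ equals $\alpha_p^{0} = 1$ independently of $\alpha_p$; this is precisely why the $\mathcal{O}(\log T)$ floor cannot be beaten by shrinking the stepsize, and it pins the final rate. Simultaneously the learning sample complexity $\varepsilon^{-2(1 - \gamma^{-1})}$ degenerates to $\varepsilon^{0} = 1$, so an extremely accurate $\Delta = 1/T$ is affordable within $\mathcal{O}(\log^2 T)$ exploration rounds. I would therefore double-check that this small $\Delta$, together with the failure-probability contribution from $\delta = T^{-2}$ (captured by the two $T^{-2}$-weighted terms), keeps every non-logarithmic term at $\mathcal{O}(1)$ or smaller, so that no hidden term re-inflates the rate above $\log T$.
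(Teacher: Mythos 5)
Your proposal is correct and follows essentially the same route as the paper: the corollary is stated as an immediate instantiation of \Cref{thm:final} at $\gamma = 1$ with $\diam(\mathcal{Y}^{\star}) = 0$ (justified by the finite-support non-degeneracy example establishing sharpness via \ref{A4}), which is precisely what you do. Your term-by-term verification of \Cref{lem:final} with $T_e = \mathcal{O}(\log^2 T)$, $\alpha_e = \mathcal{O}(1/\log T)$, $\alpha_p = \mathcal{O}(1/T)$, $\Delta = \mathcal{O}(1/T)$ is accurate and merely makes explicit what the paper leaves implicit.
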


Again,  the intuitions behind the algorithm are simple: error bound ensures $\y_T^\star$ is close to $\Ycal^\star$ and allows online algorithm to localize in an $o(1)$ neighborhood around $\Ycal^\star$; exploration-exploitation allows us to learn from data and get close to $\Ycal^\star$; decoupling learning and decision-making, we get the best of both worlds and control the regret in the exploration phase. These pieces together make first-order methods go beyond $\mathcal{O} (\sqrt{T})$ regret.
\section{Numerical experiments} \label{sec:exp}

This section conducts experiments to illustrate the empirical performance of our framework. We consider both the continuous and finite support settings. To benchmark our algorithms, we compare
\begin{enumerate}[leftmargin=30pt,label=\textbf{M\arabic*.},ref=\rm{\textbf{M\arabic*}},start=1]
\item Benchmark subgradient method \Cref{alg:subgrad} with constant stepsize $\Ocal{(1/\sqrt{T})}$. \label{M1}
\item Our framework \Cref{alg:final}. \label{M2}
\end{enumerate}

\begin{enumerate}[leftmargin=30pt,label=\textbf{MLP.},ref=\rm{\textbf{MLP}}]
\item State-of-the-art LP-based methods. \label{M3}
In the continuous support setting, \ref{M3} is the action-history-dependent algorithm \cite[Algorithm 3]{li2022online}; in the finite support setting, \ref{M3} is the adaptive allocation algorithm \cite[Algorithm 1]{10.48550/arxiv.2101.11092}.
\end{enumerate}

In the following, we provide the details of \ref{M2} for each setting.

\begin{itemize}[leftmargin=10pt]
    \item For the continuous support setting, $\mathcal{A}_L$ is the subgradient method with $\mathcal{O}(1/(\mu t))$ stepsize (\Cref{lem:subgrad-conv}). As suggested by \Cref{thm:final}, $\mathcal{A}_D$ is the subgradient method with stepsize $\alpha_e = 1/\sqrt{T_e} = T^{-1/3}$ in the exploration phase $T_e = T^{2/3}$. In the exploitation phase, $\mathcal{A}_D$ takes stepsize $\alpha_p = T^{-2/3}$. We always set $\mu = 1$ and do not tune it through the experiments.
    \item For the finite support setting, $\mathcal{A}_L$ is ASSG \cite{xu2017stochastic} (\Cref{alg:assg} in the appendix); $\mathcal{A}_D$ is the subgradient method with stepsize $\alpha_e = 1/\sqrt{T}$ in the exploration phase $T_e = 50 \log T$. In the exploitation phase, $\mathcal{A}_D$ takes stepsize $\alpha_p = T^{-1}$.
\end{itemize}

\subsection{Continuous support}
\label{subsec:exp-continuous}
We generate $\{(c_t, \mathbf{a}_t)\}_{t=1}^T$ from different continuous distributions. The performance of three algorithms is evaluated in terms of $r(\hat{\x}_T) + v(\hat{\x}_T)$ (which we will call regret for simplicity). We choose $m \in \{1, 5\}$ and 10 different $T$ evenly spaced over $[10^2, 10^5]$ on $\log$-scale. 
All the results are averaged over $100$ independent random trials.
For all the distributions, each $d_i$ is sampled i.i.d. from uniform distribution $\mathcal{U}[1/3, 2/3]$. The data $\{(c_t, \mathbf{a}_t)\}_{t=1}^T$ is generated as follows: \textbf{1)}. The first distribution \cite{10.48550/arxiv.2003.02513} takes $m = 1$ and samples each $a_{it}, c_t$ i.i.d. from $\mathcal{U}[0, 2]$; \textbf{2)}. The second distribution \cite{li2022online}, takes $m=1, a_{it} = 1$, and samples each $c_t$ i.i.d. from  $\mathcal{U}[0, 1]$; \textbf{3)}. 
The third distribution takes $m = 5$ and samples $a_{it}$ from $\text{Beta}(\alpha, \beta)$ with $(\alpha, \beta) = (1, 8)$ and each $c_t$ i.i.d. from $\mathcal{U}[0, 3]$. \textbf{4)}.
The last distribution takes $m = 5$ and samples $a_{it}$ and $c_t$ i.i.d. from $\mathcal{U}[1, 6]$ and  $\mathcal{U}[0, 3]$, respectively.\goodnewline

For each distribution and algorithm, we plot the growth behavior of regret with respect to $T$. The performance statistics are normalized by the performance at $T= 10^2$. \Cref{fig:exp-con}  suggests that \ref{M2} has a better order of regret compared to \ref{M1}, which is consistent with our theory. Although \ref{M3} achieves the best performance in $r+v$, it requires significantly more computation time than \ref{M2}, since it solves an LP for each $t$. To demonstrate this empirically, we also compare the computation time of \ref{M1}, \ref{M2}, and \ref{M3}. We generate instances according to the first distribution with $m=2$ and $T \in \{10^3, 10^4, 10^5\}$. For each $(m,T)$ pair, we average the $r+v$ and computation time over $10$ independent trials and summarize the result in \Cref{tab:exp-time-con}: \ref{M3} takes more than one hour when $T=10^5$, whereas  \ref{M2} only needs $0.064$ seconds and achieves significant better regret compared to \ref{M3}. Our proposed framework effectively balances efficiency and regret performance.

\begin{figure*}[h]
\centering
\includegraphics[scale=0.25]{./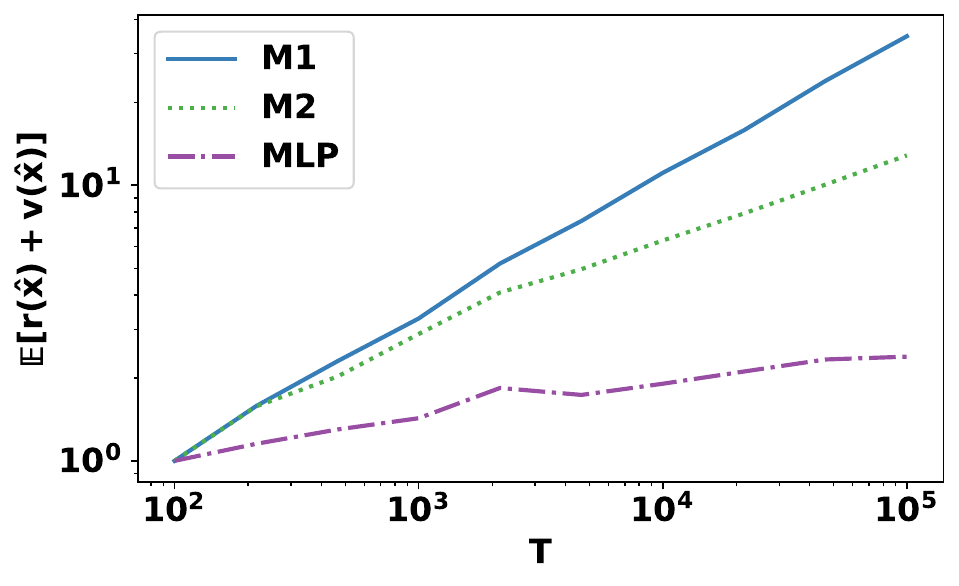}
\includegraphics[scale=0.25]{./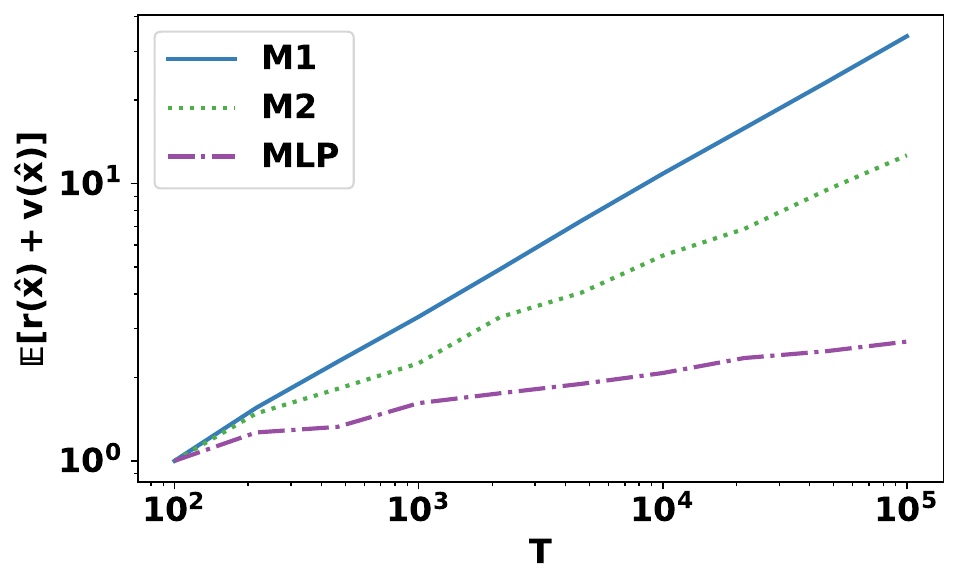}
\includegraphics[scale=0.25]{./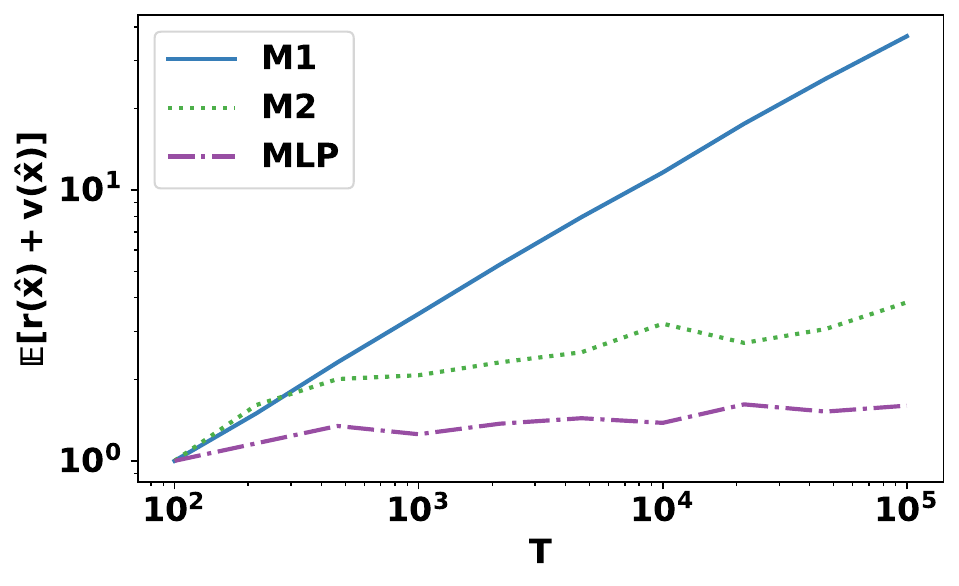}
\includegraphics[scale=0.25]{./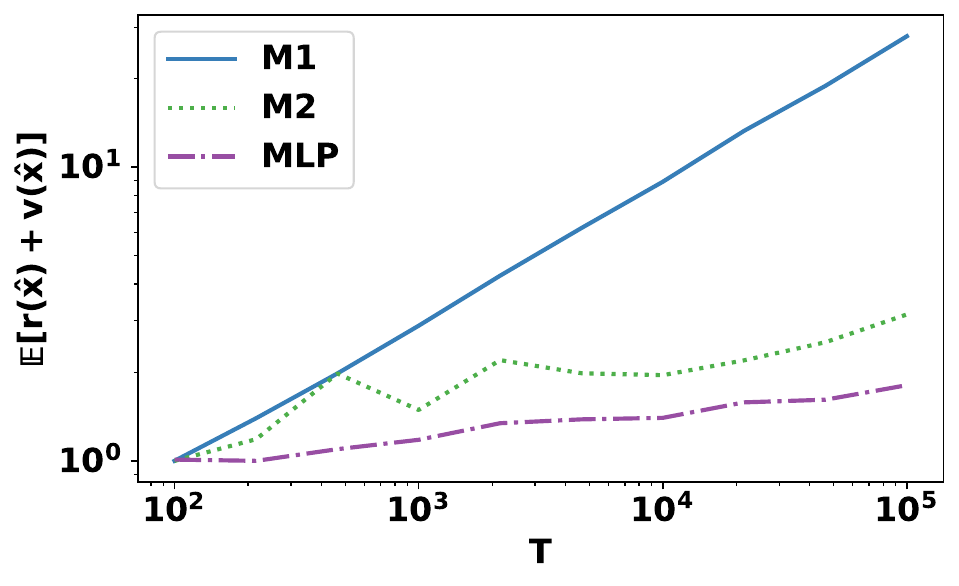}
\caption{\label{fig:exp-con} Growth of normalized 
$r(\hat{\x}_T)+v(\hat{\x}_T)$ of different algorithms under the continuous distributions.}
\end{figure*}

\begin{table}[h]
\centering
\caption{Computation time of different algorithms under the first tested continuous distribution.\label{tab:exp-time-con}}
\resizebox{\textwidth}{!}{
\begin{tabular}{cccc|cccc|cccc}
	\toprule
	$T$ & Algorithm  & Avg. Regret & Avg. Time(s) & $T$ & Algorithm & Avg. Regret  & Avg. Time(s) & $T$ & Algorithm & Avg. Regret  & Avg. Time(s) \\
	\midrule
	\multirow{3}{*}{$10^3$} & \ref{M1} & $12.37$ & $<0.001$ & \multirow{3}{*}{$10^4$} & \ref{M1} & $38.24$ & $<0.01$ & \multirow{3}{*}{$10^5$} & \ref{M1}  & $123.03$ & $0.063$   \\ 
                            &    \ref{M2} & $4.18$   & $<0.001$   &  & \ref{M2}   & $13.83$    & $<0.01$  & &   \ref{M2}   & $24.00$ &  $0.064$    \\
                            &    \ref{M3} & $3.82$ & $0.95$   &   & \ref{M3}     & $4.12$ & $37.5$   &  &   \ref{M3} & $ 5.91 $ & $4742.9$   \\
    \bottomrule
	\end{tabular}
}
\end{table}

\subsection{Finite support}
\label{subsec:exp-finite}

We generate $\{(c_t, \mathbf{a}_t)\}_{t=1}^T$ from different discrete distributions. The performance of three algorithms is evaluated in terms of $r(\hat{\x}_T) + v(\hat{\x}_T)$. We choose $m \in \{2, 5\}$ and $10$ different $T$ evenly spaced over $[10^3, 10^5]$ on log-scale. All the results are averaged over $100$ independent random trials. To generate a discrete distribution with support size $K$, we first sample $K$ different $\{(c_k, \mathbf{a}_k)\}_{k=1}^K$ from some distribution, then randomly generate a finite probability distribution $\mathbf{p} = (p_1, p_2, \ldots, p_K)$ over $\{(c_k, \mathbf{a}_k)\}_{k=1}^K$. At time $t$, we sample $(c_k, \mathbf{a}_k)$ with probability $p_k$. We generate four discrete distributions as follows: \textbf{1)}. The first distribution takes $m=2, K=5$ and samples $c_k, a_{ki}$ i.i.d. from $\mathcal{U}[0,1]$ and $\mathcal{U}[0,3]$. Each $d_i$ is sampled from  $\mathcal{U}[1/3, 2/3]$. \textbf{2)}. The second distribution takes $m = 5, K=5$ and samples $c_k$ from the folded normal distribution with parameter $\mu = 0$ and $\sigma = 1$; $a_{ki}$ is sampled from the folded normal distribution with $\mu = \sigma = 1$. Each element in $d_i$ is sampled from $\frac{1}{3}(1+|X|)$ with $X\sim \mathcal{N}(0,1)$. \textbf{3)}. The third distribution takes $m=5$, $K=10$ and samples $c_k$ i.i.d. from exponential distribution $\text{exp}(1)$; $a_{ki}$ is sampled from from $\text{exp}(2)$. Each element in $d_i$ is sampled from $(1+|X|)/3$, where $X\sim \text{exp}(1)$. \textbf{4)}. The last distribution takes $m=2$, $K=10$ and samples $c_k$ i.i.d. from $\mathcal{U}[1, 2]$ and $a_{ki}$ from $\Gamma(\alpha, \theta)$ with $(\alpha, \theta) = (2, 3)$. Each element in $d_i$ is sampled from $\mathcal{U}[1/3, 2/3]$.

\begin{figure}[h]
\centering
\includegraphics[scale=0.25]{./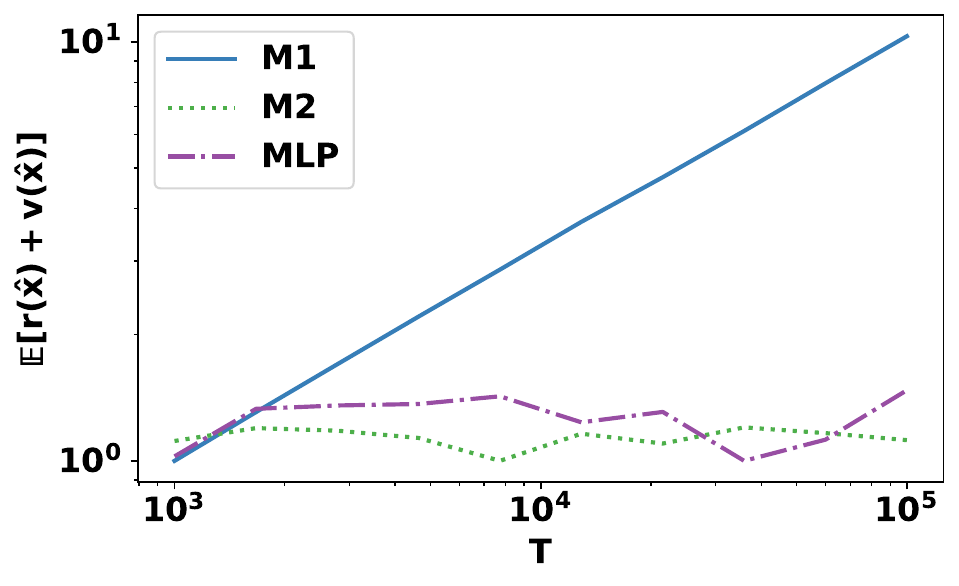}
\includegraphics[scale=0.25]{./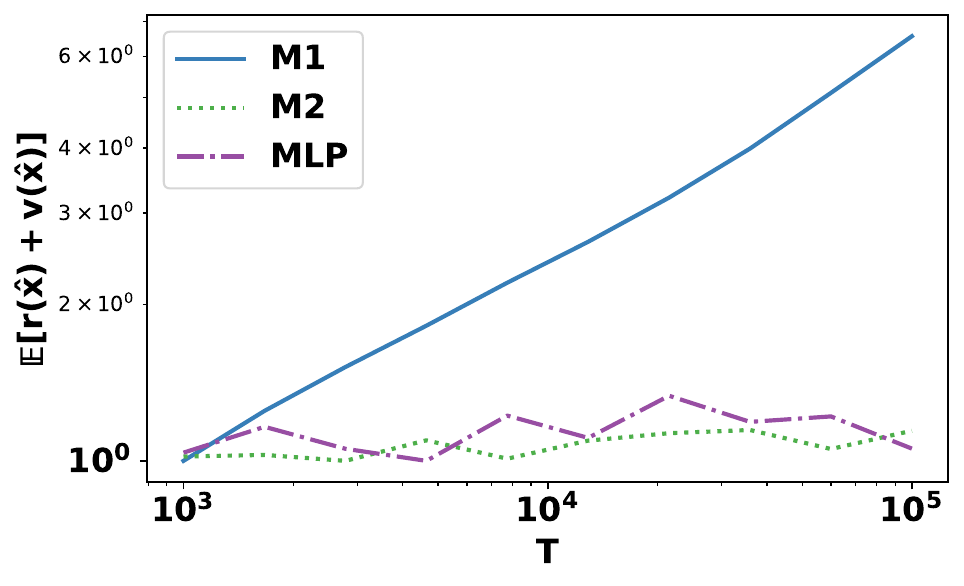}
\includegraphics[scale=0.25]{./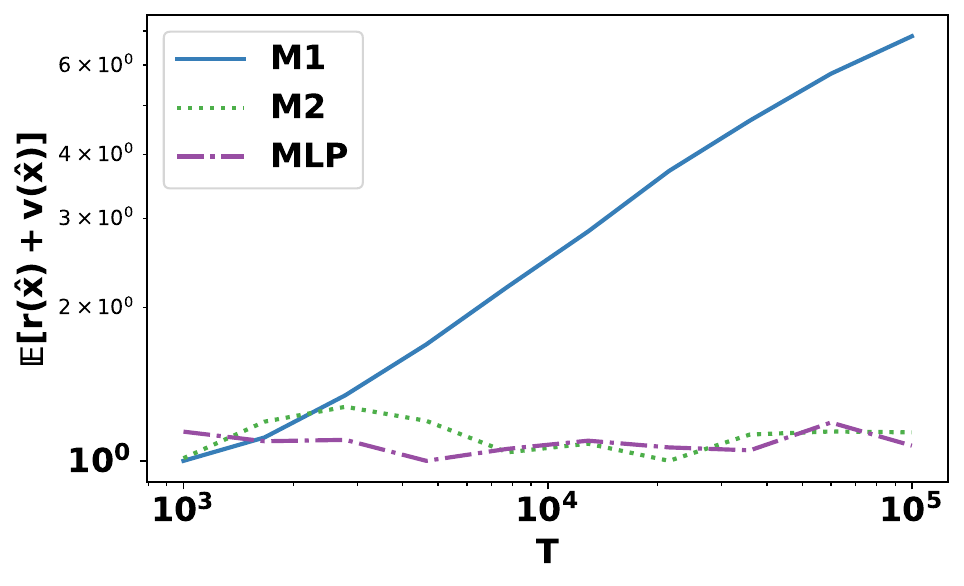}
\includegraphics[scale=0.25]{./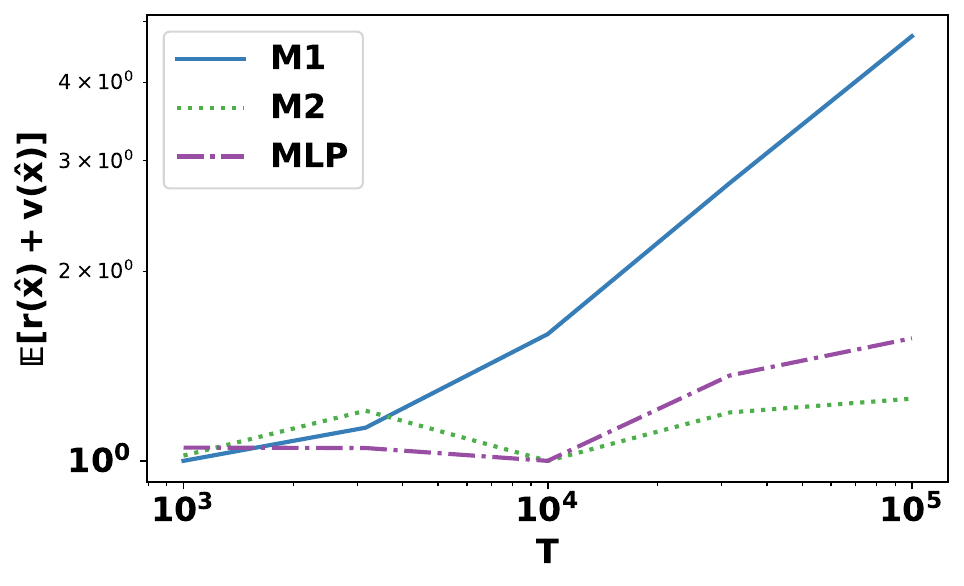}
\caption{\label{fig:exp-dis} Growth of normalized 
$r(\hat{\x}_T)+v(\hat{\x}_T)$ of different algorithms under the finite distributions.}
\end{figure}

For each distribution and algorithm, we plot the normalized regret with respect to $T$. \Cref{fig:exp-dis} indicates that \ref{M2} consistently outperforms \ref{M1} and exhibits $\Ocal(\log T)$ regret. Moreover, \ref{M2} significantly reduces the computation time compared to \ref{M3}. To demonstrate this empirically, we also compare the computation time of \ref{M1}, \ref{M2}, and \ref{M3}. We generate instances according to the fourth distribution with $m=2$ and $T\in \{10^3, 10^4, 10^5\}$. For each $(m, T)$ pair, we average $r+v$ and computation time over $10$ independent trials and summarize the result in \Cref{tab:exp-time-dis}: \ref{M3} greatly reduces computation time compared to \ref{M3} but has comparable regret performance. First-order methods can replace LP-based methods in this case.
\begin{table}[h]
\centering
\caption{Computation time of different algorithms under the last tested finite distribution. \label{tab:exp-time-dis}}
\resizebox{\textwidth}{!}{
\begin{tabular}{cccc|cccc|cccc}
	\toprule
	$T$ & Algorithm & Avg. Regret & Avg. Time(s) & $T$ & Algorithm & Avg. Regret & Avg. Time(s) & $T$ & Algorithm & Avg. Regret & Avg. Time(s) \\
	\midrule
	\multirow{3}{*}{$10^3$} & \ref{M1} & $15.26$ & $<0.001$ & \multirow{3}{*}{$10^4$} & \ref{M1} & $24.39$ & $<0.01$ & \multirow{3}{*}{$10^5$} & \ref{M1} & $71.38$ & $0.080$   \\ 
            &    \ref{M2} & $ 3.61 $  & $<0.001$   &  & \ref{M2}     & $3.00$  & $<0.01$  & &   \ref{M2}   & $3.23$ & $0.084$    \\
            &    \ref{M3}  & $3.04$ & $0.69$   &  & \ref{M3} & $4.03$   & $6.91$   & &   \ref{M3} & $3.62$  & $69.23$   \\
    \bottomrule
	\end{tabular}
}
\end{table}
\section{Conclusion}
In this paper, we propose an online decision-making framework that allows first-order methods to achieve beyond $\mathcal{O}(\sqrt{T})$ regret. We identify that the error bound condition on the dual problem is sufficient for first-order methods to obtain improved regret and design an online learning framework to exploit this condition. We believe that our results provide important new insights for sequential decision-making problems.

\renewcommand \thepart{}
\renewcommand \partname{}

\bibliographystyle{plain}
\bibliography{olp.bib}

\doparttoc
\faketableofcontents
\part{}

\appendix
\onecolumn
\addcontentsline{toc}{section}{Appendix}
\part{Appendix} 
\parttoc

\vspace{10pt}

\newpage

\section{Proof of Results in \Cref{sec:errbnd}}

\subsection{Auxiliary results}

\begin{lem}[Hoeffding's inequality] \label{lem:hoeffding}
  Let $X_1, \ldots, X_n$ be independent random variables such that $0 \leq X_i
  \leq u$ almost surely. Then for all $\zeta \geq 0$,
  \[ \textstyle \mathbb{P} \{ \tfrac{1}{n} \sum_{i = 1}^n X_i -\mathbb{E} [
     \tfrac{1}{n} \sum_{i = 1}^n X_i ] \geq \zeta \} \leq \exp \{
     - 2 n u^{- 2} \zeta^2 \} . \]
\end{lem}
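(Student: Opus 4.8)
The plan is to establish this one-sided tail bound through the classical Chernoff (exponential Markov) method, which naturally produces the sub-Gaussian form $\exp\{-2nu^{-2}\zeta^2\}$ appearing on the right-hand side. Write $S_n := \sum_{i=1}^n X_i$ and $\mu_i := \mathbb{E}[X_i]$. First I would reduce the statement about the normalized average to one about $S_n$: the event $\tfrac1n S_n - \mathbb{E}[\tfrac1n S_n] \ge \zeta$ is identical to $S_n - \mathbb{E}[S_n] \ge n\zeta$. For any fixed $s > 0$, the random variable $e^{s(S_n - \mathbb{E}[S_n])}$ is nonnegative, so Markov's inequality gives
\[ \mathbb{P}\{S_n - \mathbb{E}[S_n] \ge n\zeta\} \le e^{-sn\zeta}\, \mathbb{E}\big[e^{s(S_n - \mathbb{E}[S_n])}\big]. \]

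The next step uses independence to factorize the moment generating function: since $X_1,\dots,X_n$ are independent, so are the centered variables $X_i - \mu_i$, whence $\mathbb{E}[e^{s(S_n-\mathbb{E}[S_n])}] = \prod_{i=1}^n \mathbb{E}[e^{s(X_i-\mu_i)}]$. The crucial estimate is a bound on each factor, namely Hoeffding's lemma: if $Y$ satisfies $\mathbb{E}[Y]=0$ and $Y\in[a,b]$ almost surely, then $\mathbb{E}[e^{sY}] \le \exp(s^2(b-a)^2/8)$. Applying this to $Y = X_i-\mu_i$, which has mean zero and takes values in an interval of width $b-a = u$ (because $0\le X_i\le u$), bounds each factor by $\exp(s^2u^2/8)$ and hence $\mathbb{E}[e^{s(S_n-\mathbb{E}[S_n])}]\le \exp(ns^2u^2/8)$.

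I expect the main obstacle to be Hoeffding's lemma itself, which I would prove by analyzing the cumulant generating function $\psi(s) := \log\mathbb{E}[e^{sY}]$. Convexity of $t\mapsto e^{st}$ together with the convex-combination bound $e^{sY}\le \tfrac{b-Y}{b-a}e^{sa}+\tfrac{Y-a}{b-a}e^{sb}$ lets me take expectations and, using $\mathbb{E}[Y]=0$, write $\mathbb{E}[e^{sY}]$ explicitly in terms of $a,b,s$. A direct computation then shows $\psi(0)=\psi'(0)=0$, while $\psi''(s)$ equals the variance of a random variable supported on $\{a,b\}$ and is therefore bounded by $(b-a)^2/4$ uniformly in $s$. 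Taylor's theorem with Lagrange remainder yields $\psi(s)\le s^2(b-a)^2/8$, which is the claimed lemma.

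Finally I would assemble the pieces and optimize over the free parameter. Combining the Chernoff bound with the moment-generating-function estimate gives, for every $s>0$,
\[ \mathbb{P}\{S_n - \mathbb{E}[S_n] \ge n\zeta\} \le \exp\Big(-sn\zeta + \tfrac{ns^2u^2}{8}\Big). \]
The exponent is a convex quadratic in $s$ minimized at $s = 4\zeta/u^2$; substituting this value collapses it to $-2n\zeta^2/u^2$. Recalling the reduction that $S_n-\mathbb{E}[S_n]\ge n\zeta$ is equivalent to $\tfrac1n S_n - \mathbb{E}[\tfrac1n S_n]\ge\zeta$, this establishes $\mathbb{P}\{\tfrac1n S_n - \mathbb{E}[\tfrac1n S_n]\ge\zeta\}\le \exp\{-2nu^{-2}\zeta^2\}$ for $\zeta>0$; the case $\zeta=0$ is immediate since the right-hand side then equals $1$.
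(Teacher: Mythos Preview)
Your proof is correct and follows the classical Chernoff--Hoeffding argument. Note, however, that the paper does not actually prove this lemma: it is stated as a standard auxiliary result without proof, since Hoeffding's inequality is textbook material. Your write-up is a faithful rendition of the standard proof (exponential Markov, factorization by independence, Hoeffding's lemma via the cumulant generating function, and optimization over $s$), so there is nothing to compare against in the paper beyond the bare statement.
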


\begin{lem} \label{lem:app-nondegenlp}
  Consider standard form LP $\min_{\A \x = \tmb, \x \geq \mathbf{0}}  \langle
  \tmc, \x \rangle $ and suppose both primal and dual problems are
  non-degenerate. Then the primal LP solution $\x^{\star}$ is unique and there exists
  $\mu > 0$ such that
  \[ \langle \tmc, \x \rangle - \langle \tmc, \x^{\star}
     \rangle \geq \mu \| \x - \x^{\star} \| \]
     
for all primal feasible $\x \in \{\x: \A\x = \tmb, \x \geq \mathbf{0}\}$.
\end{lem}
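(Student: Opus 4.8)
The plan is to exploit strict complementarity, which the simultaneous non-degeneracy of both problems forces, together with the linear structure of the feasible polyhedron. Fix the optimal basis $B$, so that $\A_B$ is invertible, $\x^\star_B = \A_B^{-1}\tmb$, and $\x^\star_N = \mathbf{0}$ on the complementary (nonbasic) index set $N$. Primal non-degeneracy gives $\x^\star_B > \mathbf{0}$, and the associated dual optimum $\y^\star = \A_B^{-\top}\tmc_B$ produces a reduced-cost vector $\s^\star = \tmc - \A^\top\y^\star$ with $\s^\star_B = \mathbf{0}$ and, by dual non-degeneracy, $\s^\star_N > \mathbf{0}$. I would first record uniqueness of $\x^\star$ from this: any optimal $\x$ satisfies complementary slackness $s^\star_j x_j = 0$, so strict positivity $\s^\star_N > \mathbf{0}$ forces $\x_N = \mathbf{0}$, whence $\A_B\x_B = \tmb$ and $\x_B = \A_B^{-1}\tmb = \x^\star_B$.

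For the growth bound, the key observation is that the cost gap only sees the reduced costs on $N$. Writing $\tmc = \A^\top\y^\star + \s^\star$ and using $\A(\x - \x^\star) = \mathbf{0}$ (both points are feasible) together with $\langle\s^\star,\x^\star\rangle = 0$, I obtain, for any primal feasible $\x$,
\[ \langle\tmc,\x\rangle - \langle\tmc,\x^\star\rangle = \langle\s^\star,\x\rangle = \sum_{j\in N} s^\star_j x_j \geq \sigma\sum_{j\in N} x_j \geq \sigma\,\|\x_N\|, \]
where $\sigma := \min_{j\in N} s^\star_j > 0$, and where I used $x_j \geq 0$ and $\|\x_N\|_1 \geq \|\x_N\|$.

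It then remains to convert $\|\x_N\|$ into $\|\x - \x^\star\|$, and this is where feasibility does the work: from $\A_B\x_B + \A_N\x_N = \tmb = \A_B\x^\star_B$ and invertibility of $\A_B$, I get $\x_B - \x^\star_B = -\A_B^{-1}\A_N\x_N$, hence
\[ \|\x - \x^\star\|^2 = \|\x_B - \x^\star_B\|^2 + \|\x_N\|^2 \leq \big(1 + \|\A_B^{-1}\A_N\|^2\big)\|\x_N\|^2. \]
Combining the two displays gives the claim with $\mu = \sigma/\sqrt{1 + \|\A_B^{-1}\A_N\|^2} > 0$.

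Once the optimal basis is fixed, the argument is essentially self-contained algebra; the main point to get right is that $\mu$ is \emph{uniform} over the entire (possibly unbounded) feasible set. This holds because both estimates above—the lower bound $\langle\s^\star,\x\rangle \geq \sigma\|\x_N\|$ and the Lipschitz relation $\|\x_B - \x^\star_B\| \leq \|\A_B^{-1}\A_N\|\,\|\x_N\|$—are global linear bounds rather than local ones. The only genuine subtlety is justifying $\s^\star_N > \mathbf{0}$ from dual non-degeneracy (equivalently, strict complementarity at the optimum); everything else follows by the fixed-basis computation.
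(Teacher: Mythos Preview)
Your proof is correct and follows essentially the same approach as the paper: both fix the optimal basis, use strict complementarity to lower bound the cost gap by $\sigma\|\x_N\|$ with $\sigma=\min_{j\in N}s^\star_j$, and then use $\x_B-\x^\star_B=-\A_B^{-1}\A_N\x_N$ to control $\|\x-\x^\star\|$ by $\|\x_N\|$. The only cosmetic difference is that the paper writes the resulting constant via $\|\A_N\|/\sigma_{\min}(\A_B)$ rather than your (slightly sharper) $\|\A_B^{-1}\A_N\|$.
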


\begin{proof}
Denote $\|\x\|_{-\infty} = \min_j {x_j}$. Since both primal and dual problems are non-degenerate, $\x^{\star}$ is
  unique \cite{bertsimas1997introduction}. Denote $(B, N)$ to be the partition of basic and non-basic
  variables, and we have $\x^{\star} = ( \x_B^{\star}, \x_N^{\star} )$,
  where $\x_B^{\star} > \mathbf{0}$ and $\x_N^{\star} = \mathbf{0}$. Similarly, denote $\s$ to
  be the dual slack for $\x$, we can partition $\s^{\star} = (
  \s_B^{\star}, \s_N^{\star} )$ where $\s_B^{\star} = \mathbf{0}$ and
  $\s_N^{\star} > 0$. We have $\A_B \x_B^{\star} = \tmb $ by primal feasibility of $\x^\star$. With dual feasibility,  $\tmc_N = \A_N^{\top}
  \y^{\star} + \s_N^{\star}, \tmc_B = \A_B^{\top} \y^{\star}$ for some
  $\y^{\star}$. Next, consider any feasible LP solution $\x$, and we can write
  \[ \A \x = \A_B \x_B + \A_N \x_N = \tmb = \A_B \x_B^{\star} . \]
  Since $\A_B$ is non-degenerate, taking inverse on both sides gives $\x_B^{\star} = \x_B + \A_B^{- 1} \A_N \x_N$
  and we deduce that
  \begin{align}
    \langle \tmc, \x \rangle - \langle \tmc, \x^{\star}
    \rangle ={} & \langle \tmc_B, \x_B \rangle + \langle
    \tmc_N, \x_N \rangle - \langle \tmc_B, \x^{\star}_B
    \rangle \label{eqn:proof-A-2-1} \\
    ={} & \langle \tmc_B, \x_B \rangle + \langle \tmc_N, \x_N
    \rangle - \langle \tmc_B, \x_B + \A_B^{- 1} \A_N \x_N
    \rangle \label{eqn:proof-A-2-2} \\
    ={} & \langle \tmc_N - \A_N^{\top} \A_B^{- \top} \tmc_B, \x_N
    \rangle \nonumber \\
    ={} & \langle \A_N^{\top} \y^{\star} + \s_N^{\star} - \A_N^{\top}
    \A_B^{- \top} \A_B^{\top} \y^{\star}, \x_N \rangle \label{eqn:proof-A-2-3}\\
    ={} & \langle \s_N^{\star}, \x_N \rangle \geq \|
    \s_N^{\star} \|_{- \infty} \| \x_N \|,  \label{eqn:proof-A-2-4}
  \end{align}
  where \eqref{eqn:proof-A-2-1} uses $\x_N^\star = \mathbf{0}$, \eqref{eqn:proof-A-2-2} plugs in $\x_B^{\star} = \x_B + \A_B^{- 1} \A_N \x_N$, \eqref{eqn:proof-A-2-3} plugs in $\tmc_N = \A_N^{\top}
  \y^{\star} + \s_N^{\star}$ and $\tmc_B = \A_B^{\top} \y^{\star}$, \eqref{eqn:proof-A-2-4} uses the fact that $\s^\star_N > \mathbf{0}$ and $\langle \s_N^{\star}, \x_N \rangle \geq \|\s_N\|_{-\infty} \|\x_N\|_1\geq \|\s_N\|_{-\infty} \|\x_N\|$. Re-arranging the terms,  
  \begin{equation} \label{eqn:proof-A-2-mid}
  	\| \x_N \| \leq \| \s_N^{\star} \|_{- \infty}^{-
  1} ( \langle \tmc, \x \rangle - \langle \tmc,
  \x^{\star} \rangle ).
  \end{equation}
On the other hand, we have
  \begin{align}
    \| \x - \x^{\star} \|^2 ={} & \| \x_B - \x_B^{\star}
    \|^2 + \| \x_N - \x_N^{\star} \|^2 \nonumber\\
    ={} & \| \A_B^{- 1} \A_N \x_N \|^2 + \| \x_N \|^2
    \label{eqn:proof-A-2-5} \\
    ={} & \langle \x_N,  ( \A_N^{\top} \A_B^{- \top} \A_B^{- 1} \A_N +
    \I ) \x_N \rangle \nonumber \\
    \leq{} & \big( \tfrac{\| \A_N \|^2}{\sigma_{\min} ( \A_B
    )^2} + 1 \big) \| \x_N \|^2 \nonumber\\
    \leq{} & \big( \tfrac{\| \A_N \|^2}{\sigma_{\min} ( \A_B
    )^2} + 1 \big) \big( \tfrac{\langle \tmc, \x \rangle -
    \langle \tmc, \x^{\star} \rangle}{\| \s_N^{\star}
    \|_{- \infty}} \big)^2, \label{eqn:proof-A-2-6}
  \end{align}

where \eqref{eqn:proof-A-2-5} again plugs in $\x_B^{\star} = \x_B + \A_B^{- 1} \A_N \x_N$ and $\x_N^\star = \mathbf{0}$; \eqref{eqn:proof-A-2-6} uses the relation \eqref{eqn:proof-A-2-mid}. Taking square-root on both sides gives
  \[ \| \x - \x^{\star} \| \leq ( \tfrac{\| \A_N
     \|^2}{\sigma_{\min} ( \A_B )^2} + 1 )^{1 / 2}
     \tfrac{1}{\| \s_N^{\star} \|_{- \infty}} [ \langle
     \tmc, \x \rangle - \langle \tmc, \x^{\star} \rangle
     ] \leq \tfrac{\| \A_N \| + \sigma_{\min} ( \A_B
     )}{\sigma_{\min} ( \A_B ) \| \s_N^{\star}
     \|_{- \infty}} [ \langle \tmc, \x \rangle -
     \langle \tmc, \x^{\star} \rangle ] \]
  and another re-arrangement of the inequality completes the proof.
\end{proof}

\begin{lem}[Learning algorithm for H\"older growth \cite{xu2017stochastic}] \label{lem:assg}
Consider stochastic optimization problem $\min_{\y \in \mathcal{Y}} f ( \y ) \assign \mathbb{E}_{\xi} [ f ( \y, \xi )
  ]$ with optimal set $\Ycal^\star$ and suppose the following conditions hold:
  \begin{enumerate}[leftmargin=15pt]
    \item there exists some $\y^1 \in \Ycal$ such that $f ( \y^1 ) - f (
    \y^{\star} ) \leq \varepsilon_0$,
    
    \item $\mathcal{Y}^{\star}$ is a nonempty compact set,
    
    \item there exists some constant $G$ such that $\| f' ( \y, \xi
    ) \| \leq G$ for all $\xi$,
    
    \item there exists some constant $\lambda > 0$ and $\theta \in (0, 1]$ such
    that for all $\y \in \mathcal{Y}$
    \[ f ( \y ) - f ( \y^{\star} ) \geq \lambda \cdot
       \dist ( \y, \mathcal{Y}^{\star} )^{1 / \theta} . \]
  \end{enumerate}
Then, there is a first-order method (\Cref{alg:assg}, Algorithm 1, 2, and 4 of \cite{xu2017stochastic}) that
    outputs $f ( \bar{\y}^{T + 1} ) - f ( \y^{\star}
    ) \leq \varepsilon$ after
    \[ T_{\varepsilon} \geq \{ \textstyle \max \{ 9, 1728 \{ \log (
       \tfrac{1}{\delta} ) + \log \lceil \log_2 (
       \tfrac{2\varepsilon_0}{\varepsilon} ) \rceil \}
       \} \tfrac{2^{2(1-\theta)}\lambda^{- 2 \theta} G^2}{\varepsilon^{2 (1 - \theta)}}
       + 1 \} \lceil \log_2 (
       \tfrac{2\varepsilon_0}{\varepsilon} ) \rceil \]
    iterations with probability at least $1 - \delta$.
\end{lem}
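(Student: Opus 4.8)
The plan is to prove this as a restarted stochastic subgradient (RSG/ASSG) guarantee, exactly in the spirit of \cite{xu2017stochastic}: run plain projected stochastic subgradient method in $K$ epochs of equal length $t$, halving the target suboptimality at each restart, and use the H\"older growth condition (item 4) to convert each epoch's objective suboptimality into a bound on $\dist(\cdot,\Ycal^\star)$, which in turn controls the step size and the effective search region of the next epoch. Concretely, I would set $K = \lceil \log_2(2\varepsilon_0/\varepsilon)\rceil$ and define target accuracies $\varepsilon_k = \varepsilon_0/2^k$, so that item 1 gives $f(\y^1) - f(\y^\star) \leq \varepsilon_0$ at the start and $\varepsilon_K \leq \varepsilon/2$ at the end. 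Epoch $k$ starts from the averaged output $\bar{\y}^{(k-1)}$ of the previous epoch, projects its iterates onto a ball of suitable radius around a closest optimal point, runs projected SGM with a constant step size $\eta_k$ for $t$ iterations, and outputs the average iterate $\bar{\y}^{(k)}$.

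The core one-epoch lemma I would establish is: if $f(\bar{\y}^{(k-1)}) - f(\y^\star) \leq \varepsilon_{k-1}$, then for an appropriate $(\eta_k,t)$ the epoch output satisfies $f(\bar{\y}^{(k)}) - f(\y^\star) \leq \varepsilon_{k-1}/2 = \varepsilon_k$ with high probability. First I invoke item 4 with exponent $1/\theta = \gamma$ to get $\dist(\bar{\y}^{(k-1)},\Ycal^\star) \leq (\varepsilon_{k-1}/\lambda)^{\theta} =: D_k$; projecting onto the ball of radius $\Ocal(D_k)$ keeps all iterates within $\Ocal(D_k)$ of $\Ycal^\star$. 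The standard subgradient regret inequality combined with convexity then yields, for the averaged iterate, a bias/variance decomposition of the form $f(\bar{\y}^{(k)}) - f(\y^\star) \leq \tfrac{D_k^2}{2\eta_k t} + \tfrac{\eta_k G^2}{2} + (\text{martingale deviation})$, where item 3 bounds every stochastic subgradient $\g^i$ by $G$. Choosing $\eta_k \asymp \varepsilon_{k-1}/G^2$ balances the first two terms against $\varepsilon_{k-1}/2$ once $t \gtrsim D_k^2 G^2/\varepsilon_{k-1}^2 = \varepsilon_{k-1}^{2\theta-2}G^2/\lambda^{2\theta}$; since $\theta \le 1$ the exponent $2\theta-2 \le 0$, so the most demanding epoch is the last one, where $\varepsilon_{k-1} \ge \varepsilon/2$, and this is precisely where the factor $2^{2(1-\theta)}\lambda^{-2\theta}G^2/\varepsilon^{2(1-\theta)}$ in the claimed $T_\varepsilon$ originates.

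For the high-probability control I would handle the martingale deviation $\sum_i \langle \g^i - \Ebb[\g^i], \y^i - \y^\star\rangle$ by an Azuma--Hoeffding argument: since the iterates are confined to a ball of radius $\Ocal(D_k)$ and subgradients are bounded by $G$, each increment is bounded by $\Ocal(GD_k)$, giving a deviation of order $GD_k\sqrt{t\log(1/\delta')}$ with probability $1-\delta'$. Requiring this to be absorbed into $\varepsilon_{k-1}/2$ inflates $t$ by an additional $\log(1/\delta')$ factor. Finally, I take a union bound over the $K$ epochs with per-epoch failure probability $\delta' = \delta/K$; because $\log(1/\delta') = \log(1/\delta) + \log K = \log(1/\delta) + \log\lceil\log_2(2\varepsilon_0/\varepsilon)\rceil$, this reproduces exactly the bracketed $\{\log(1/\delta) + \log\lceil\log_2(2\varepsilon_0/\varepsilon)\rceil\}$ factor, and multiplying the per-epoch budget by $K = \lceil\log_2(2\varepsilon_0/\varepsilon)\rceil$ recovers the stated total $T_\varepsilon$.

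The main obstacle I anticipate is the high-probability one-epoch bound rather than the restart bookkeeping: controlling the martingale deviation cleanly requires that the iterates provably stay in a bounded neighborhood of $\Ycal^\star$ throughout the epoch (so the increments are uniformly bounded), which is what the per-epoch projection step is there to guarantee and must be argued without circularity. Tracking the absolute constants (the $1728$ and the $2^{2(1-\theta)}$) through the concentration inequality and the step-size/epoch-length balancing is the tedious part, but conceptually everything reduces to the single idea that H\"older growth lets each halving of the objective value shrink the effective search region, so that a fixed per-epoch iteration budget suffices and the total cost is only a logarithmic factor larger than that of one epoch.
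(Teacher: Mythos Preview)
The paper does not prove this lemma: it is stated as an auxiliary result quoted directly from \cite{xu2017stochastic} (Algorithms 1, 2, 4 there), with only the parameter configuration of \Cref{alg:assg} spelled out. So there is no ``paper's own proof'' to compare against beyond the citation.

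Your sketch is a faithful reconstruction of the ASSG analysis from \cite{xu2017stochastic}: epoch-wise restarts with halving targets, H\"older growth converting suboptimality to distance, the one-epoch projected-subgradient bound, Azuma--Hoeffding for the martingale term, and a union bound over $K$ epochs. The arithmetic tracing the $2^{2(1-\theta)}\lambda^{-2\theta}G^2/\varepsilon^{2(1-\theta)}$ factor to the last (hardest) epoch is correct, as is the origin of the $\log(1/\delta)+\log K$ factor.

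One small inaccuracy worth fixing: the projection in each epoch is onto $\mathcal{Y}\cap\mathcal{B}(\bar{\y}^{(k-1)},D_k)$, the ball centered at the \emph{previous epoch's output}, not at ``a closest optimal point'' (which is unknown). The argument still works because the induction hypothesis plus H\"older growth gives $\dist(\bar{\y}^{(k-1)},\Ycal^\star)\leq D_k$, so some optimal point lies in the ball and can serve as the comparison point in the subgradient inequality; but you should state the projection set correctly so that the algorithm is implementable and the boundedness-of-iterates claim is not circular.
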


\begin{lem}[Last-iterate convergence of stochastic subgradient \cite{liu2023revisiting}] \label{lem:last-iterate}
  Consider stochastic optimization problem $\min_{\y \geq \mathbf{0}} f ( \y
  ) \assign \mathbb{E}_{\xi} [ f ( \y, \xi ) ]$.
  Suppose the following conditions hold:
  \begin{enumerate}[leftmargin=15pt]
    \item There exist $M \geq 0$ such that
    \[ f ( \x ) - f ( \y ) - \langle f' ( \y
       ), \x - \y \rangle \leq M \| \x - \y \| \]
    for all $\x, \y$ and $f' ( \y ) \in \partial f ( \y
    )$,
    
    \item It is possible to compute $\g_{\y}$ such that $\mathbb{E} [
    \g_{\y} ] = f' ( \y )$,
    
    \item $\mathbb{E} [ \| \g_{\y} - f' ( \y ) \|^2
    ] \leq \sigma^2$.
  \end{enumerate}
  Then, the last iterate of the projected subgradient method with stepsize $\alpha$: $\y^{t + 1} = [ \y^t - \alpha \g^t ]_+$
  satisfies
  \[ \mathbb{E} [ f ( \y^{T + 1} ) - f ( \y )
     ] \leq \tfrac{\| \y^1 - \y \|^2}{T \alpha} + 2
     \alpha (M^2 + \sigma^2) (1 + \log T) \]
  for all $\y \geq \mathbf{0}$.
\end{lem}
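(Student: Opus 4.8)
The plan is to prove the last-iterate bound by the suffix-averaging technique of Shamir--Zhang, since the routine telescoping of the one-step inequality only controls the \emph{average} iterate rather than $\y^{T+1}$ itself. The starting point is the standard nonexpansiveness estimate: for any fixed feasible comparator $\y \geq \mathbf{0}$ and any $t$, the update $\y^{t+1} = [\y^t - \alpha\g^t]_+$ satisfies $\|\y^{t+1} - \y\|^2 \leq \|\y^t - \y\|^2 - 2\alpha\langle \g^t, \y^t - \y\rangle + \alpha^2\|\g^t\|^2$. Taking conditional expectations, assumption (2) gives $\Ebb[\g^t\mid\mathcal{F}_t] = f'(\y^t)$, so convexity yields $\Ebb[\langle \g^t, \y^t - \y\rangle\mid\mathcal{F}_t] = \langle f'(\y^t), \y^t - \y\rangle \geq f(\y^t) - f(\y)$ whenever $\y$ is $\mathcal{F}_t$-measurable, while assumptions (1) and (3) give the second-moment bound $\Ebb\|\g^t\|^2 = \|f'(\y^t)\|^2 + \Ebb\|\g^t - f'(\y^t)\|^2 \leq M^2 + \sigma^2$. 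Summing over $t = 1,\dots,T$ with comparator $\y$ and telescoping produces the average-iterate estimate $\tfrac{1}{T}\sum_{t=1}^{T}\Ebb[f(\y^t) - f(\y)] \leq \tfrac{\|\y^1-\y\|^2}{2\alpha T} + \tfrac{\alpha}{2}(M^2+\sigma^2)$, which will furnish the leading term.

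To pass from the average to the last iterate, I would introduce the suffix averages $S_k := \tfrac{1}{k}\sum_{t = T-k+1}^{T}\Ebb[f(\y^t) - f(\y)]$ for $k = 1,\dots,T$, so that $S_T$ is the average-iterate quantity just bounded while $S_1 = \Ebb[f(\y^T) - f(\y)]$ is the target. A direct computation gives the exact identity $S_k - S_{k+1} = \tfrac{1}{k(k+1)}\sum_{t=T-k+1}^{T}\Ebb[f(\y^t) - f(\y^{T-k})]$, in which the comparator $f(\y)$ cancels. The key step is to bound the right-hand side by re-running the one-step inequality over the window $t = T-k+1,\dots,T$ \emph{against the shrinking comparator} $\y = \y^{T-k}$; since $\y^{T-k}$ is $\mathcal{F}_{T-k+1}$-measurable the conditional-expectation argument above still applies, and using $\|\y^{T-k+1} - \y^{T-k}\| \leq \alpha\|\g^{T-k}\|$ together with the second-moment bound yields $\sum_{t=T-k+1}^{T}\Ebb[f(\y^t) - f(\y^{T-k})] \leq \tfrac12\alpha(k+1)(M^2+\sigma^2)$. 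Hence $S_k - S_{k+1} \leq \tfrac{\alpha(M^2+\sigma^2)}{2k}$.

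Telescoping this recursion from $k=1$ to $T-1$ turns the per-step slack into a harmonic sum, $S_1 \leq S_T + \tfrac{\alpha(M^2+\sigma^2)}{2}\sum_{k=1}^{T-1}\tfrac1k \leq S_T + \tfrac{\alpha(M^2+\sigma^2)}{2}(1+\log T)$, and substituting the average-iterate bound for $S_T$ gives $\Ebb[f(\y^T)-f(\y)] \leq \tfrac{\|\y^1-\y\|^2}{2\alpha T} + \tfrac{\alpha(M^2+\sigma^2)}{2}(2+\log T)$, which is stronger than and hence implies the stated inequality (a trivial index shift handles $\y^{T+1}$ versus $\y^T$). The main obstacle is not the algebra but setting up the suffix recursion correctly: one must recognise that comparing each suffix window against the \emph{previous iterate} $\y^{T-k}$, rather than against a fixed optimum, is precisely what produces the telescoping $1/k$ factors that sum to $\log T$, and one must verify that the tower-property manipulations remain valid because $\y^{T-k}$ is measurable with respect to the filtration at the start of the window. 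A secondary point to check is the second-moment bound $\Ebb\|\g^t\|^2 \le M^2 + \sigma^2$: the mean part is controlled through the growth/Lipschitz content of assumption (1) and the fluctuation part through the variance assumption (3).
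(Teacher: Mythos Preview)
The paper does not give a self-contained proof of this lemma; it simply invokes Theorem~C.1, equation~(24) of \cite{liu2023revisiting} with $L=0$, $h(\y)=0$, $\psi(\x)=\tfrac12\|\x\|^2$. Your proposal therefore does strictly more than the paper, reconstructing the cited bound via the Shamir--Zhang suffix-averaging technique, which is indeed the engine behind the Liu et al.\ result. The suffix identity $S_k-S_{k+1}=\tfrac{1}{k(k+1)}\sum_{t=T-k+1}^{T}\Ebb[f(\y^t)-f(\y^{T-k})]$, the shrinking-comparator device against the $\mathcal{F}_{T-k+1}$-measurable iterate $\y^{T-k}$, and the harmonic telescoping are all correct, and your final bound $\tfrac{\|\y^1-\y\|^2}{2\alpha T}+\tfrac{\alpha(M^2+\sigma^2)}{2}(2+\log T)$ is tighter than the one stated.

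The one genuine subtlety is the second-moment claim $\Ebb\|\g^t\|^2\le M^2+\sigma^2$, which you use in both the average-iterate and suffix-window estimates and which rests on $\|f'(\y^t)\|\le M$. Condition~(1) bounds the Bregman residual $f(\x)-f(\y)-\langle f'(\y),\x-\y\rangle$, not the subgradient norm; a linear function has zero residual but arbitrarily large gradient, so $\|f'(\y)\|\le M$ does not follow from (1) alone. You rightly flag this as a point to check, and in the OLP application it is harmless because the paper, when verifying the hypotheses in the proof of \Cref{lem:vr}, takes $M$ to be twice the Lipschitz constant of $f$, which does force $\|f'(\y)\|\le M$. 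For a proof of the lemma exactly as stated, a clean route is to use condition~(1) directly on the step $f(\y^{t+1})\le f(\y^t)+\langle f'(\y^t),\y^{t+1}-\y^t\rangle+M\|\y^{t+1}-\y^t\|$ and combine it with the three-point inequality of the projection; splitting the negative $\tfrac12\|\y^{t+1}-\y^t\|^2$ term between the $M$-part and the noise cross-term $\langle\epsilon^t,\y^{t+1}-\y^t\rangle$ yields $\alpha\,\Ebb[f(\y^{t+1})-f(\y)]\le \tfrac12\Ebb[\|\y^t-\y\|^2-\|\y^{t+1}-\y\|^2]+\alpha^2(M^2+\sigma^2)$ without ever isolating $\|\g^t\|^2$. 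This variant also lands on $\y^{T+1}$ directly and recovers the exact constants in the lemma.
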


\Cref{lem:last-iterate} is an application of Theorem C.1, equation (24) of \cite{liu2023revisiting}
with $L = 0, h ( \y ) = 0$ and $\psi(\x) = \frac{1}{2}\|\x\|^2$.

\subsection{Dual learning algorithm}
\label{app:assg}
We include two algorithms in \cite{xu2017stochastic} that can exploit \ref{A4} and achieve the sample complexity in \Cref{lem:alg-conv}. \Cref{alg:assg} is the baseline algorithm and \Cref{alg:assg-r} is its parameter-free variant that adapts to unknown $\lambda$. Note that the algorithm has an explicit projection routine onto $\Ycal' = \{\y \geq \mathbf{0}: \|\y\| \leq \frac{\uc}{\ld}\}$. According to \cite{xu2017stochastic}, given parameters $(\delta, \varepsilon, \varepsilon_0, \gamma, G)$, \Cref{alg:assg} is configured as follows:
\[ K = \lceil \log_2 (
       \tfrac{2\varepsilon_0}{\varepsilon} ) \rceil, \quad D_1 = \tfrac{2^{1-\gamma}\lambda^{-\theta} \varepsilon_0}{\varepsilon^{1-\gamma}}, \quad  t = \max\{9, 1728 \log(\tfrac{K}{\delta}) \} \tfrac{G^2 D_1^2}{\varepsilon_0^2}.\]
       
\begin{algorithm}[h]
\caption{Accelerated Stochastic SubGradient Method (ASSG) \label{alg:assg}}	
\KwIn{Initial point $\y_{0} \in \Ycal' =\{\y \geq \mathbf{0}: \|\y\| \leq \frac{\uc}{\ld}\}$, outer iteration count $K$, inner iteration count $t$, initial error estimate $\varepsilon_0$, initial diameter $D_1$, Lipschitz constant $G$}
Set $\eta_1 = \frac{\varepsilon_0}{3G^2}$

\For{$k = 1, 2, \ldots, K$}{
 Let $\y_1^k = \y_{k-1}$
 
 \For{$\tau = 1, 2, \ldots, t-1$}
 {	
 	$\y_{\tau+1}^{k} = \prod_{\mathcal{\Ycal}\cap \mathcal{B}(\y_{k-1}, D_k)}[\y^k_\tau - \eta_k \g_{\y_\tau^k}]$
 }
 
 Let $\y_k =\frac{1}{t} \sum_{\tau=1}^t \y_\tau^k $
 
 Let $\eta_{k+1}=\frac{1}{2}\eta_k$ and $D_{k+1} = \frac{1}{2}D_k$
 }
 \KwOut{$\y_K$}
\end{algorithm}
 
\begin{algorithm}[h]
\caption{ASSG with Restart (RASSG) \label{alg:assg-r}}	
\KwIn{Initial point $\y^{0} \in \Ycal' = \{\y \geq \mathbf{0}: \|\y\| \leq \frac{\uc}{\ld}\}$, outer iteration count $K$, initial distance $D_1^{(1)}$, inner iteraion count $t_1$, initial error estimate $\varepsilon_0$ and $\omega\in (0, 1]$, error bound parameter $\gamma$, restart round $S$, Lipschitz constant $G$}
Set $\varepsilon_0^{(1)} = \varepsilon_0$, $\eta_1 = \frac{\varepsilon_0}{3G^2}$

\For{$s = 1, 2, \ldots, S$}{
	$\y^{(s)}\leftarrow$ ASSG$(\y^{(s-1)}, K, t_s, D_1^{(s)}, \varepsilon_0^{(s)})$
	
	Let $t_{s+1} = t_s 2^{2(1-\gamma^{-1})}, D_1^{(s+1)} = D_1^{(s)} 2^{1-\gamma^{-1}}$, and $\varepsilon_0^{(s+1)} = \omega \varepsilon_0^{(s)}$
}
\KwOut{$\y^{(S)}$}
\end{algorithm}

\subsection{Verification of the examples}
\label{app:verify}

\subsubsection{Continuous support}

The result is a direct application of Proposition 2 of
{\cite{li2022online}}.

\subsubsection{Finite support}

Denote $\{ (\xi_k, \bm{\alpha}_k) \}_{k = 1}^K$ to be the support of LP data associated with distribution $\p \in \Rbb^K$. i.e.,
there are $K$ types of customers and customers of type $k$ arrive with
probability $p_k$. We can write the expected dual problem as
\[ \min_{( \y, \bm{\sigma} ) \geq \mathbf{0}} \quad \langle
   \tmd, \y \rangle + \textstyle \textstyle \sum_{k = 1}^K p_i \sigma_i \quad \text{subject
   to} \quad \sigma_i \geq \xi_i - \langle \bm{\alpha}_i, \y
   \rangle, i \in [K] . \]
More compactly, we introduce slack $\bm{\lambda} \in \mathbb{R}^K$ and
define ${{\mathbf{f}}} \assign ( \tmd ; \p ; \mathbf{0} ), \z
\assign ( \y ; \bm{\sigma}; \bm{\lambda} ) \geq \mathbf{0},
\mathbf{Q} \assign ( \A^{\top}, \I, - \I )$. Then, the dual problem
can be written as standard-form.
\begin{equation} \label{eqn:dlp-stform}
	\min_{\z \geq \mathbf{0}} \quad \langle {{\mathbf{f}}}, \z
   \rangle \quad \text{subject to} \quad \mathbf{Q} \z =\bm{\xi}.
\end{equation}
When $\diam (\mathcal{Y}^{\star}) > 0$, the result is an application of weak sharp minima to LP {\cite{burke1993weak}}. When the primal-dual problems are both non-degenerate, $\mathcal{Y}^{\star} = \{\y^{\star} \}$ and applying \tmtextbf{Lemma \ref{lem:app-nondegenlp}}, we get the following
error bound in terms of the LP optimal basis.
\begin{lem}
  Let $(B, N)$ denote the optimal basis partition for \eqref{eqn:dlp-stform} and let $\s_N$ denote the dual slack of primal variables $\z$, then
  \[ \langle {{\mathbf{f}}}, \z \rangle - \langle
     {{\mathbf{f}}}, \z^{\star} \rangle \geq \mu \| \z -
     \z^{\star} \|, \]
  where $\mu = \frac{\sigma_{\min} (\mathbf{Q}_B) \| \s_N \|_{-
  \infty}}{\| \mathbf{Q}_N \| + \sigma_{\min} (\mathbf{Q}_B)}$. Moreover, we
  have $f ( \y ) - f ( \y^{\star} ) \geq \mu \| \y
  - \y^{\star} \|$.
\end{lem}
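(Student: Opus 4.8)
The first displayed inequality I would obtain as a direct instance of \Cref{lem:app-nondegenlp}. The plan is to apply that lemma to the standard-form LP \eqref{eqn:dlp-stform} under the identification $\A \mapsto \mathbf{Q}$, $\tmc \mapsto \mathbf{f}$, and $\tmb \mapsto \bm{\xi}$. Since we are in the case where the primal and dual of \eqref{eqn:dlp-stform} are both non-degenerate, \Cref{lem:app-nondegenlp} applies verbatim: $\z^{\star}$ is the unique optimal solution, and reading off the constant produced inside its proof gives exactly $\mu = \frac{\sigma_{\min}(\mathbf{Q}_B)\,\|\s_N\|_{-\infty}}{\|\mathbf{Q}_N\| + \sigma_{\min}(\mathbf{Q}_B)}$ with $(B,N)$ the optimal basis partition and $\s_N$ the associated dual slack. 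This yields $\langle \mathbf{f}, \z\rangle - \langle \mathbf{f}, \z^{\star}\rangle \geq \mu\,\|\z - \z^{\star}\|$ for every feasible $\z$.

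For the ``moreover'' part, the key step I would establish is that $f$ is the partial minimization of the standard-form objective over the slack blocks. Writing $\z = (\y;\bm{\sigma};\bm{\lambda})$, the zero block of $\mathbf{f}=(\tmd;\p;\mathbf{0})$ gives $\langle \mathbf{f}, \z\rangle = \langle \tmd, \y\rangle + \sum_{k} p_k \sigma_k$, whereas the constraint $\mathbf{Q}\z=\bm{\xi}$ reads $\sigma_k - \lambda_k = \xi_k - \langle \bm{\alpha}_k, \y\rangle$ with $\sigma_k,\lambda_k \geq 0$ for each $k$. Hence for a fixed $\y$, minimizing $\sum_k p_k \sigma_k$ forces $\sigma_k = [\xi_k - \langle \bm{\alpha}_k, \y\rangle]_+$ and $\lambda_k = [\langle \bm{\alpha}_k, \y\rangle - \xi_k]_+$; denoting this minimizing feasible point by $\hat{\z}(\y)$, I would verify
\[ \langle \mathbf{f}, \hat{\z}(\y)\rangle = \langle \tmd, \y\rangle + \textstyle\sum_{k} p_k [\xi_k - \langle \bm{\alpha}_k, \y\rangle]_+ = \langle \tmd, \y\rangle + \mathbb{E}_{(c,\tma)}[c - \langle \tma, \y\rangle]_+ = f(\y), \]
using that the distribution has finite support. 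Taking $\y = \y^{\star}$ in particular shows $\langle \mathbf{f}, \z^{\star}\rangle = f(\y^{\star})$.

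To finish, I would combine the two pieces. Applying the first inequality at the feasible point $\hat{\z}(\y)$ gives $f(\y) - f(\y^{\star}) = \langle \mathbf{f}, \hat{\z}(\y)\rangle - \langle \mathbf{f}, \z^{\star}\rangle \geq \mu\,\|\hat{\z}(\y) - \z^{\star}\|$, and since $\y - \y^{\star}$ is precisely the $\y$-block of $\hat{\z}(\y)-\z^{\star}$, discarding the $\bm{\sigma}$- and $\bm{\lambda}$-blocks yields $\|\hat{\z}(\y)-\z^{\star}\| \geq \|\y - \y^{\star}\|$, whence $f(\y) - f(\y^{\star}) \geq \mu\,\|\y - \y^{\star}\|$. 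The hard part is really just the partial-minimization identity $\langle \mathbf{f}, \hat{\z}(\y)\rangle = f(\y)$ — in particular confirming that the componentwise $(\bm{\sigma},\bm{\lambda})$ is simultaneously feasible and cost-minimizing for fixed $\y$, so that the first inequality can legitimately be invoked at $\hat{\z}(\y)$; the subvector-norm comparison and the appeal to \Cref{lem:app-nondegenlp} are routine.
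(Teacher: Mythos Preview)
Your proposal is correct and essentially identical to the paper's proof: both invoke \Cref{lem:app-nondegenlp} on the compact LP \eqref{eqn:dlp-stform} for the first inequality, then construct the feasible point $\hat{\z}(\y)$ (the paper calls it $\z_{\y}$) with $\sigma_k=[\xi_k-\langle\bm{\alpha}_k,\y\rangle]_+$ and the corresponding slack $\lambda_k$, observe $\langle\mathbf{f},\hat{\z}(\y)\rangle=f(\y)$, and finish via the subvector-norm comparison $\|\hat{\z}(\y)-\z^{\star}\|\geq\|\y-\y^{\star}\|$.
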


\begin{proof}
  $\langle {{\mathbf{f}}}, \z \rangle - \langle
  {{\mathbf{f}}}, \z^{\star} \rangle \geq \mu \| \z -
  \z^{\star} \|$ follows from \Cref{lem:app-nondegenlp} applied to the
  compact LP formulation. Next, define $\z_{\y} \assign ( \y ;
  \bm{\sigma}_{\y} ; \bm{\lambda}_{\y} )$ where $\sigma_{\y}
  = [ \bm{\xi}- \textstyle \sum_{k = 1}^K \bm{\alpha}_i y_i ]_+$
  and $\bm{\lambda}_{\y} = \sigma_{\y} -\bm{\xi}+ \textstyle \sum_{k = 1}^K
  \bm{\alpha}_i y_i$. We deduce
  \[ f ( \y ) - f ( \y^{\star} ) = \langle
     \mathbf{f}, \z_{\y} \rangle - \langle \mathbf{f}, \z^{\star}
     \rangle \geq \mu \| \z_{\y} - \z^{\star} \| \geq \mu
     \| \y - \y^{\star} \| \]
  and this completes the proof.
\end{proof}

\subsubsection{General growth}
Given $\y^{\star} \in \arg \min_{\y} f ( \y ) \subseteq
\inte (\mathcal{Y})$, by optimality condition, $\mathbf{0} = \tmd -\mathbb{E}
[ \tma \mathbb{I} \{ c \geq \langle \tma, \y^{\star}
\rangle \} ]$ and
\[ \tmd = \textstyle\int \tma \textstyle\int_{\langle \tma, \y^{\star}
   \rangle}^{\infty} \mathd F ( c| \tma ) \mathd F (
   \tma ), \]
where $F(c, \tma)$ denotes the cdf. of the distribution of $(c, \tma)$. Then we deduce that
\begin{align}
f ( \y ) - f ( \y^{\star} )  ={} & \langle \tmd, \y - \y^{\star} \rangle +\mathbb{E} [
  [ c - \langle \tma, \y \rangle ]_+ - [ c -
  \langle \tma, \y^{\star} \rangle ]_+ ] \nonumber\\
  ={} & \textstyle\int \textstyle\int_{\langle \tma, \y^{\star} \rangle}^{\infty} 
  \langle \tma, \y - \y^{\star} \rangle \mathd F ( c| \tma
  ) \mathd F ( \tma ) + \textstyle\int \textstyle\int_{\langle \tma, \y
  \rangle}^{\langle \tma, \y^{\star} \rangle} \mathd F (
  c| \tma ) \mathd F ( \tma ) \nonumber\\
  ={} & \textstyle\int \textstyle\int_{\langle \tma, \y \rangle}^{\langle \tma,
  \y^{\star} \rangle} \mathbb{I} \{ c \geq v \} \langle \tma, \y -
  \y^{\star} \rangle \mathd v \mathd F ( c, \tma ) + \textstyle\int
  \textstyle\int_{\langle \tma, \y \rangle}^{\langle \tma, \y^{\star}
  \rangle} \mathd F ( c| \tma ) \mathd F ( \tma )
  \nonumber\\
  ={} & \textstyle\int \textstyle\int_{\langle \tma, \y \rangle}^{\langle \tma,
  \y^{\star} \rangle} \mathbb{I} \{ c \geq v \} -\mathbb{I} \{ c
  \geq \langle \tma, \y^{\star} \rangle \} \mathd v \mathd F
  ( c, \tma ) . \nonumber
\end{align}
Next, we invoke the assumptions and
\begin{align}
  & \textstyle\int \textstyle\int_{\langle \tma, \y \rangle}^{\langle \tma,
  \y^{\star} \rangle} \mathbb{I} \{ c \geq v \} -\mathbb{I} \{ c
  \geq \langle \tma, \y^{\star} \rangle \} ~\mathd v  \mathd F
  ( c, \tma ) \nonumber\\
  \geq{} & \tfrac{\lambda_5}{2} \textstyle\int \textstyle\int_{\langle \tma, \y
  \rangle}^{\langle \tma, \y^{\star} \rangle} |
  \langle \tma, \y^{\star} \rangle - v |^p \mathd v \mathd F
  ( \tma ) \nonumber\\
  ={} & \tfrac{\lambda_5}{2 (p + 1)} \mathbb{E} [ | \langle \tma,
  \y - \y^{\star} \rangle |^{p + 1} ] \nonumber\\
  \geq{} & \tfrac{\lambda_5}{2 (p + 1)} \mathbb{E} [ | \langle
  \tma, \y - \y^{\star} \rangle | ]^{p + 1} \label{eqn:proof-A-3-3}\\
  \geq{} & \tfrac{\lambda_4^{p+1}\lambda_5}{2 (p + 1)} \| \y - \y^{\star} \|^{p + 1},
  \nonumber
\end{align}
where \eqref{eqn:proof-A-3-3} uses $p \geq 0$ and that $\Ebb[|X|^{p + 1}] \geq E[|X|]^{p + 1}$. Since $\| \y - \y^{\star} \|^{p + 1} > 0$ for $\y \neq \y^{\star}$,
this completes the proof.

\subsection{Proof of Lemma \ref{lem:alg-conv}}

We verify the conditions in \Cref{lem:assg}.\\

\textbf{Condition 1}.  Take $\y^1 = \mathbf{0} \in \Ycal$. Then
\[ f ( \y^1 ) - f ( \y^{\star} ) \leq f ( \y^1
   ) =\mathbb{E} [ \langle \tmd, \y^1 \rangle + [ c -
   \langle \tma, \y^1 \rangle ]_+ ] =\mathbb{E} [[c]_+]
   \leq \bar{c}, \]
where the first inequality holds since $f ( \y^{\star} ) \geq 0$.\\

\textbf{Condition 2} holds since $\mathcal{Y}^{\star} \subseteq \mathcal{Y}$, $\mathcal{Y}^{\star}$ is closed and $\mathcal{Y}$ is a compact set.\\

\textbf{Condition 3} holds since $\g_{\y} = \tmd - \tma \mathbb{I} \{ c \geq
\langle \tma, \y \rangle \}$ and $\| \g \| \leq
\sqrt{m} (\bar{a} + \bar{d})$. Hence $G = \sqrt{m} (\bar{a} + \bar{d})$.\\

\textbf{Condition 4} holds by the dual error bound condition $f ( \y
) \geq \mu \cdot \dist ( \y, \mathcal{Y}^{\star} )^{\gamma}$
with $\lambda = \mu$ and $\theta = 1 / \gamma$.\\

Now invoke \Cref{lem:assg} and we get that, after
\begin{align}
  T_{\varepsilon} \geq{} & \{ \textstyle \max \{ 9, 1728 \{ \log (
  \tfrac{1}{\delta} ) + \log \lceil \log_2 (
  \tfrac{2\bar{c}}{\varepsilon} ) \rceil \} \}
  \tfrac{2^{2(1-\gamma^{-1})} \mu^{- 2 / \gamma} m (\bar{a} + \bar{d})^2}{\varepsilon^{2 (1 -
  \gamma^{- 1})}} + 1 \} \lceil \log_2 (
  \tfrac{2\bar{c}}{\varepsilon} ) \rceil \nonumber\\
  ={} & \mathcal{O} ( \varepsilon^{- 2 (1 - \gamma^{- 1})} \log (
  \tfrac{1}{\delta} ) \log ( \tfrac{1}{\varepsilon} ) )
  \nonumber
\end{align}

iterations, the algorithm outputs $\bar{\y}^{T + 1}$ such that with probability at least $1 - \delta$,
\[ \mu \cdot \dist ( \bar{\y}^{T + 1},
   \mathcal{Y}^{\star} ) \leq f (
   \bar{\y}^{T + 1} ) - f ( \y^{\star} ) \leq
   \varepsilon \]
and this completes the proof.

\subsection{Proof of Lemma \ref{lem:vr}}

We verify the conditions in \Cref{lem:last-iterate}.\\

\textbf{Condition 1}. Since $f ( \y )$ is convex and has Lipschitz constant $\sqrt{m} (\bar{a} + \bar{d})$, we take $M = 2 \sqrt{m} (\bar{a} + \bar{d})$
and deduce that
\begin{align}
  f ( \x ) - f ( \y ) - \langle f' ( \y
  ), \x - \y \rangle \leq{} & \sqrt{m} (\bar{a} + \bar{d}) \|
  \x - \y \| + \| f' ( \y ) \| \cdot \| \x
  - \y \| \nonumber\\
  \leq{} & 2 \sqrt{m} (\bar{a} + \bar{d}) \| \x - \y \| \nonumber\\
  ={} & M \| \x - \y \| . \nonumber
\end{align}

\textbf{Condition 2} holds in the stochastic i.i.d. input setting.\\

\textbf{Condition 3} holds by taking $\sigma^2 = 4 m (\bar{a} + \bar{d})^2$
and notice that
\[ \mathbb{E} [ \| \g_{\y} - f' ( \y ) \|^2 ]
   \leq 2\mathbb{E} [ \| \g_{\y} \|^2 ] + 2 [
   \| f' ( \y ) \|^2 ] \leq 4 m (\bar{a} +
   \bar{d})^2 . \]
Next, we invoke \Cref{lem:last-iterate} and get last-iterate convergence for $T \geq 3$.
\begin{align}
  \mathbb{E} [ f ( \y^{T + 1} ) - f ( \y ) ]
  \leq{} & \tfrac{\| \y^1 - \y \|^2}{T \alpha} + 2 \alpha (M^2 +
  \sigma^2) (1 + \log T) \nonumber\\
  \leq{} & \tfrac{\| \y^1 - \y \|^2}{T \alpha} + 16 \alpha m (\bar{a} + \bar{d})^2(1 +
  \log T) \label{eqn:proof-3-2-1}\\
  \leq{} & \tfrac{\| \y^1 - \y \|^2}{T \alpha} + 32 \alpha m (\bar{a} + \bar{d})^2 \log
  T, \nonumber
\end{align}
where \eqref{eqn:proof-3-2-1} plugs in $M = 2 \sqrt{m} (\bar{a} + \bar{d})$ and  $\sigma^2 = 4 m (\bar{a} + \bar{d})^2$. Taking $\y = \Pi_{\mathcal{Y}^{\star}} ( \y^1 )$ completes the
proof.

\subsection{Proof of Lemma \ref{lem:dual-conv}}

By definition and the fact that $\Ycal^\star \in \Ycal$, 
\begin{equation}
  \y^{\star} \in \arg \min_{\y \in \Ycal}  ~f ( \y ) \qquad  \text{and} \qquad
  \y_T^{\star} \in \arg \min_{\y \in \Ycal} ~ f_T ( \y ).\nonumber
\end{equation}
According to \ref{A4}, $f ( \y^{\star} ) \leq f ( \y_T^{\star} ) - \mu
\dist ( \y_T^{\star}, \mathcal{Y}^{\star} )^{\gamma}$ and 
\begin{align}
  \mu \cdot \dist ( \y_T^{\star}, \mathcal{Y}^{\star} )^{\gamma} \leq{} &
  f ( \y_T^{\star} ) - f ( \y^{\star} ) \nonumber\\
  ={} & f ( \y_T^{\star} ) - f_T ( \y_T^{\star} ) + f_T
  ( \y_T^{\star} ) - f_T ( \y^{\star} ) + f_T (
  \y^{\star} ) - f ( \y^{\star} ) . \nonumber\\
  \leq{} & f ( \y_T^{\star} ) - f_T ( \y_T^{\star} ) + f_T
  ( \y^{\star} ) - f ( \y^{\star} ), \label{eqn:proof-lem-3-3-1}
\end{align}
where \eqref{eqn:proof-lem-3-3-1} uses $f_T
  ( \y_T^{\star} ) - f_T ( \y^{\star} ) \leq 0 $. Taking expectation and using $\mathbb{E} [ f_T ( \y^{\star} )
] = f ( \y^{\star} )$, we arrive at
\[ \mu \mathbb{E} [ \dist ( \y_T^{\star}, \mathcal{Y}^{\star}
   )^{\gamma} ] \leq \mathbb{E} [ f ( \y_T^{\star}
   ) - f_T ( \y_T^{\star} ) ] \]
and it remains to bound $ f ( \y_T^{\star} ) - f_T (
\y_T^{\star} ) $. For any fixed $\y \in \mathcal{Y}$,
\[ f_T ( \y ) = \tfrac{1}{T} \textstyle \sum_{t = 1}^T \langle \tmd, \y
   \rangle + [ c_t - \langle \tma_t, \y \rangle
   ]_+ \]
and for each $t$, since $\y \geq \mathbf{0}$, 
\begin{align}
  0 \leq{} & \langle \tmd, \y \rangle + [ c_t - \langle
  \tma_t, \y \rangle ]_+ \nonumber\\
  \leq{} & \| \tmd \| \cdot \| \y \| + | c_t | + \|
  \tma_t \| \cdot \| \y \| \nonumber\\
  \leq{} & \sqrt{m} \bar{d} \tfrac{(\bar{c} + \ld)}{\ld} + \bar{c} +
  \sqrt{m} \bar{a} \tfrac{(\bar{c} + \ld)}{\ld} \nonumber\\
  ={} & \sqrt{m} \tfrac{(\bar{a} + \bar{d}) (\bar{c} + \ld)}{\ld} +
  \bar{c} \nonumber
\end{align}
Using \Cref{lem:hoeffding},
\[ \mathbb{P} \{ f ( \y ) - f_T ( \y ) \geq \zeta
   \} \leq \exp \{ - \tfrac{2 \ld^2 T}{( \sqrt{m} (\bar{a} +
   \bar{d}) (\bar{c} + \ld) + \bar{c} \ld )^2} \zeta^2 \}
   . \]
Recall that $\y_T^{\star} \in \mathcal{Y}$ by \eqref{eqn:bounded-dual}, and
we construct an $\varepsilon$-net of $\mathcal{Y}$ as follows:
\[ \mathcal{Y} \subseteq \mathcal{N}_k \assign \bigcup_{\{ j_i \}_{i = 1}^m
   \in \{ 0, \ldots, k \}^m} \{ \y : \| \y - \textstyle \sum_{i = 1}^m
   \tfrac{\bar{c} + \ld}{k \ld} j_i \mathbf{e}_i \|_{\infty}
   \leq \tfrac{\bar{c} + \ld}{k \ld} \}, \]
where we denote the centers of each net as $\mathcal{C}_k$ and $|
\mathcal{C}_k | = (k + 1)^m$.
In each member of the net, we have, by Lipschitz continuity of $f ( \y
)$ and $f_T ( \y )$, that
\[ f ( \y_1 ) - f ( \y_2 ) \leq \sqrt{m} (\bar{a} +
   \bar{d}) \| \y_1 - \y_2 \| \leq m (\bar{a} + \bar{d}) \|
   \y_1 - \y_2 \|_{\infty} \leq \tfrac{m (\bar{a} + \bar{d}) (\bar{c} +
  \ld)}{k \ld} . \]
Next, with union bound,
\begin{align}
  \mathbb{P} \{ \textstyle \max_{\y \in \mathcal{C}_k} f ( \y ) - f_T
  ( \y ) \geq \zeta \} \leq{} & \textstyle \sum_{\z \in \mathcal{C}_k}
  \mathbb{P} \{ f ( \z ) - f_T ( \z ) \geq \zeta
  \} \nonumber\\
  \leq{} & (k + 1)^m \exp \{ - \tfrac{2 \ld^2 T}{( \sqrt{m} (\bar{a}
  + \bar{d}) (\bar{c} + \ld) + \bar{c} \ld )^2} \zeta^2 \}
  . \nonumber
\end{align}
Taking $k = \sqrt{T}$, we have
\begin{align}
  & \mathbb{P} \{ \textstyle \sup_{\y \in \mathcal{Y}} f ( \y ) - f_T
  ( \y ) \leq \zeta + \tfrac{2m (\bar{a} + \bar{d}) (\bar{c} +
\ld)}{\sqrt{T} \ld} \} \nonumber\\
  \geq{} & \mathbb{P} \{ \textstyle \sup_{\y \in \mathcal{Y}} f ( \y ) -
  f_T ( \y ) \leq \zeta + \tfrac{2m (\bar{a} + \bar{d}) (\bar{c} +
  \ld)}{\sqrt{T} \ld} | \textstyle \max_{\y \in \mathcal{C}_k} f ( \y
  ) - f_T ( \y ) \leq \zeta \} \cdot \mathbb{P} \{
  \textstyle \max_{\y \in \mathcal{C}_k} f ( \y ) - f_T ( \y ) \leq
  \zeta \} \nonumber\\
  ={} & \mathbb{P} \{ \textstyle \max_{\y \in \mathcal{C}_k} f ( \y ) - f_T
  ( \y ) \leq \zeta \} \nonumber\\
  \geq{} & 1 - ( \sqrt{T} + 1 )^m \exp \{ - \tfrac{4 \ld^2
  T}{( \sqrt{m} (\bar{a} + \bar{d}) (\bar{c} + \ld) + \bar{c} \ld
  )^2} \zeta^2\}. \nonumber
\end{align}
Taking $\zeta = \sqrt{\tfrac{3 m ( \sqrt{m} (\bar{a} + \bar{d})
(\bar{c} + \ld) + \bar{c} \ld )^2}{4 \ld^2} \tfrac{\log
T}{T}}$ gives
\[ \mathbb{P} \Big\{ \textstyle \sup_{\y \in \mathcal{Y}} f ( \y ) - f_T
   ( \y ) \leq \mathcal{O} \big( \sqrt{\tfrac{\log T}{T}} \big)
   \Big\} \geq 1 - \tfrac{1}{T} \]
and
\[ \mathbb{E} [ \dist ( \y_T^{\star}, \mathcal{Y}^{\star}
   ) ]^{\gamma} \leq \mathbb{E} [ \dist ( \y_T^{\star}, \mathcal{Y}^{\star}
   )^{\gamma} ] \leq \tfrac{1}{\mu}\mathbb{E} [ f ( \y_T^{\star}
   ) - f_T ( \y_T^{\star} ) ] =\mathcal{O} (
   \sqrt{\tfrac{\log T}{T}} ) = o (1) . \]
   
 This completes the proof.
\section{Proof of results in \Cref{sec:algo}}

\subsection{Auxiliary results}
\begin{lem}[Bounded dual solution \cite{gao2023solving}] \label{lem:dualconv-1}
  Assume that \ref{A1} to \ref{A3} hold and suppose \Cref{alg:subgrad} with $\alpha_t \equiv \alpha$  starts from $\y^1$ and $\|\y^1\| \leq \tfrac{\bc}{\bld} $, then 
  \begin{align} \label{eqn:bounded-dual-1}
    \| \y^t \| \leq{} & \tfrac{\bc}{\bld} + \tfrac{m ( \ba + \bd
    )^2 \alpha}{2 \bld} + \alpha \sqrt{m} ( \ba + \bd ) = \mathsf{R}, \text{ for all }
    t,
  \end{align}
  almost surely. Moreover, if $\alpha \leq \frac{2 \bld}{3 m ( \ba + \bd )^2} $, then $\y^t \in \Ycal$ for all $t$ almost surely.
\end{lem}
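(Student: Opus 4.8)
The plan is to prove the uniform bound $\|\y^t\| \leq \mathsf{R}$ by induction on $t$. Two ingredients feed the induction: the projection $[\cdot]_+$ onto the non-negative orthant is non-expansive toward $\0$, so $\|[\x]_+\| \leq \|\x\|$; and the stochastic subgradient is bounded, $\|\g^t\| = \|\tmd - \tma_t x^t\| \leq \sqrt{m}(\ba + \bd)$ by \ref{A2} and \ref{A3} (using $x^t \in \{0,1\}$). The base case is immediate, since $\|\y^1\| \leq \tfrac{\bc}{\bld} \leq \mathsf{R}$.

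The heart of the argument is a single-step inequality. Squaring the update and applying non-expansiveness gives
\[ \|\y^{t+1}\|^2 \leq \|\y^t - \alpha\g^t\|^2 = \|\y^t\|^2 - 2\alpha\langle\g^t, \y^t\rangle + \alpha^2\|\g^t\|^2. \]
The key step, which I expect to be the main obstacle, is the unified lower bound $\langle\g^t, \y^t\rangle \geq \bld\|\y^t\| - \bc$. I would establish it by case analysis on the decision $x^t$. If $x^t = 0$, then $\g^t = \tmd$ and $\langle\tmd, \y^t\rangle \geq \bld\|\y^t\|_1 \geq \bld\|\y^t\|$ by \ref{A3} and $\y^t \geq \0$. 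If $x^t = 1$, the acceptance rule $c_t \geq \langle \tma_t, \y^t\rangle$ forces $\langle\tma_t, \y^t\rangle \leq \bc$, whence $\langle\g^t, \y^t\rangle = \langle\tmd,\y^t\rangle - \langle\tma_t,\y^t\rangle \geq \bld\|\y^t\| - \bc$. Exploiting the decision threshold in the accept case is precisely what makes this bound go through; without it the subgradient term $-\tma_t$ could increase the norm uncontrollably.

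Combining the two displays yields $\|\y^{t+1}\|^2 \leq \|\y^t\|^2 - 2\alpha(\bld\|\y^t\| - \bc) + \alpha^2 m(\ba+\bd)^2$. A direct rearrangement then shows that whenever $\|\y^t\| \geq R_1 := \tfrac{\bc}{\bld} + \tfrac{\alpha m(\ba+\bd)^2}{2\bld}$, the right-hand side is at most $\|\y^t\|^2$, i.e. the step is non-expansive, $\|\y^{t+1}\| \leq \|\y^t\|$. This closes the induction: assuming $\|\y^t\| \leq \mathsf{R}$, either $\|\y^t\| \geq R_1$ and $\|\y^{t+1}\| \leq \|\y^t\| \leq \mathsf{R}$, or $\|\y^t\| < R_1$ and a single subgradient step grows the norm by at most $\alpha\|\g^t\|$, giving $\|\y^{t+1}\| \leq R_1 + \alpha\sqrt{m}(\ba+\bd) = \mathsf{R}$.

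For the second claim I would substitute $\alpha \leq \tfrac{2\bld}{3m(\ba+\bd)^2}$ into $\mathsf{R}$ and verify that the two $\alpha$-dependent terms sum to at most $1$, so that $\mathsf{R} \leq \tfrac{\bc}{\bld} + 1 = \tfrac{\uc+\ld}{\ld}$. The first term obeys $\tfrac{m(\ba+\bd)^2\alpha}{2\bld} \leq \tfrac{1}{3}$ immediately; for the second, $\alpha\sqrt{m}(\ba+\bd) \leq \tfrac{2\bld}{3\sqrt{m}(\ba+\bd)} \leq \tfrac{2}{3}$, where the final inequality uses $\bld \leq \bd \leq \ba+\bd \leq \sqrt{m}(\ba+\bd)$, which follows from \ref{A3} together with $m \geq 1$. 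Since the iterates are non-negative by construction of the projection, this confirms $\y^t \in \Ycal$ for all $t$.
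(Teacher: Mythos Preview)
Your proof is correct and matches the paper's approach: the paper defers the bound \eqref{eqn:bounded-dual-1} to Lemma~5 of \cite{gao2023solving} (whose argument is essentially the induction you spell out), and your verification that $\alpha \leq \tfrac{2\bld}{3m(\ba+\bd)^2}$ forces $\mathsf{R} \leq \tfrac{\uc+\ld}{\ld}$ is identical to the paper's own computation.
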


\begin{proof}
The relation \eqref{eqn:bounded-dual-1} follows immediately from
Lemma 5 of {\cite{gao2023solving}}. To see $\y^t \in \Ycal$, we successively deduce, for $\alpha \leq \frac{2 \bld}{3 m ( \ba + \bd )^2}$, that 
\[ \tfrac{m ( \ba + \bd )^2 \alpha}{2 \bld} + \alpha \sqrt{m}
   ( \ba + \bd ) = \tfrac{1}{3} + \tfrac{2 \bld}{3 \sqrt{m} (
   \ba + \bd )} \leq \tfrac{1}{3} + \tfrac{2 ( \ba + \bd )}{3
   \sqrt{m} ( \ba + \bd )} \leq 1 \]
and this completes the proof.	
\end{proof}

\begin{lem}[Subgradient method on strongly convex problems \cite{rakhlin2011making}] \label{lem:sgd-highprob}
Let $\delta \in (0, e^{- 1})$ and assume $T \geq 4$. Suppose $f ( \y
  )$ is $\mu$-strongly convex and $\| \g_{\y} \| \leq G$.
  Then, the subgradient method with stepsize $\alpha_t = 1 / (\mu t)$ satisfies
  \[ \| \y^{T + 1} - \y^{\star} \|^2 \leq \tfrac{624 \log (
     \frac{\log T}{\delta} + 1 ) G^2}{\mu^2 T} \]
  with probability at least $1 - \delta$.
\end{lem}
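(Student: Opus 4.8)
The plan is to track the squared distance $d_t := \|\y^t - \y^{\star}\|^2$ and follow the high-probability last-iterate analysis of \cite{rakhlin2011making}. First I would use non-expansiveness of the projection $[\cdot]_+$ together with the update $\y^{t+1} = [\y^t - \alpha_t \g^t]_+$ to expand
\[ d_{t+1} \leq \|\y^t - \alpha_t \g^t - \y^{\star}\|^2 = d_t - 2\alpha_t \langle \g^t, \y^t - \y^{\star}\rangle + \alpha_t^2\|\g^t\|^2. \]
Writing $\g^t = f'(\y^t) + (\g^t - f'(\y^t))$ and invoking $\mu$-strong convexity in the form $\langle f'(\y^t), \y^t - \y^{\star}\rangle \geq \mu\, d_t$ (using the first-order optimality of $\y^\star$ when the constraint is active), with $\alpha_t = 1/(\mu t)$ this yields the master recursion
\[ d_{t+1} \leq \Big(1 - \tfrac{2}{t}\Big) d_t + \tfrac{2}{\mu t}\,\xi_t + \tfrac{G^2}{\mu^2 t^2}, \qquad \xi_t := \langle f'(\y^t) - \g^t, \y^t - \y^{\star}\rangle, \]
where $\{\xi_t\}$ is a martingale-difference sequence for the natural filtration $\Fcal_t$. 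I would also record the deterministic bound $d_t \leq G^2/\mu^2$, which follows from $\mu\|\y^t - \y^{\star}\| \leq \|f'(\y^t)\| \leq \Ebb\|\g_{\y^t}\| \leq G$; this controls the increments via $|\xi_t| \leq 2G\sqrt{d_t}$ and the conditional second moments via $\Ebb[\xi_t^2\mid\Fcal_t] \leq 4G^2 d_t$.

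Next I would unroll the recursion. Using the telescoping identity $\prod_{s=t+1}^T (1 - 2/s) = \frac{t(t-1)}{T(T-1)}$ (valid for $t \geq 2$, with the $t=1,2$ terms handled trivially since their weights vanish or are $\Ocal(T^{-2})$), the deterministic contributions collapse to $\Ocal(G^2/(\mu^2 T))$, while the noise accumulates into the weighted martingale
\[ M_T = \frac{2}{\mu\, T(T-1)} \sum_{t=1}^T (t-1)\,\xi_t. \]
Thus the claim reduces to establishing $M_T \leq \Ocal\big(G^2 \log(\log T/\delta)/(\mu^2 T)\big)$ with probability at least $1-\delta$.

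The main obstacle is this martingale bound, because both the increment bound $|(t-1)\xi_t| \leq 2(t-1)G\sqrt{d_t}$ and the predictable quadratic variation $\sum_t 4(t-1)^2 G^2 d_t$ depend on the very quantity $d_t$ we are trying to control. I would break the circularity by a self-bounding / peeling argument: first derive the in-expectation rate $\Ebb[d_t] \leq \Ocal(G^2/(\mu^2 t))$ by taking expectations in the master recursion and solving it, then partition the probability space into $\Ocal(\log T)$ dyadic ranges for the realized quadratic variation and apply Freedman's (Bernstein-type) martingale inequality on each range. The per-scale Freedman bound supplies the $\log(1/\delta)$ factor and the union bound over the $\Ocal(\log T)$ scales supplies the $\log\log T$ factor, which together produce the $\log(\log T/\delta + 1)$ appearing in the statement.

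Finally I would combine the deterministic $\Ocal(G^2/(\mu^2 T))$ contribution with the high-probability bound on $M_T$ and carry the explicit constants through, using $\delta \in (0, e^{-1})$ and $T \geq 4$ to absorb lower-order terms, arriving at $\|\y^{T+1} - \y^{\star}\|^2 \leq \frac{624\log(\log T/\delta + 1)G^2}{\mu^2 T}$. Since the statement is quoted verbatim from \cite{rakhlin2011making}, the cleanest formal route is simply to verify that the present hypotheses ($\mu$-strong convexity, $\|\g_{\y}\| \leq G$, the stepsize $\alpha_t = 1/(\mu t)$, and the ranges of $\delta$ and $T$) match theirs and invoke the result directly; the recursion and the self-bounded martingale concentration sketched above are precisely the steps underlying their proof, with the concentration step being the crux.
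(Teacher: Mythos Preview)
The paper does not prove this lemma at all: it is stated as an auxiliary result and attributed directly to \cite{rakhlin2011making}, with no argument given. Your proposal is therefore strictly more than what the paper does---you sketch the actual mechanism behind the cited result (the contraction recursion for $d_t$, the weighted martingale $M_T$, and the Freedman-plus-peeling concentration that produces the $\log(\log T/\delta)$ factor), and your final remark that the cleanest route is to check the hypotheses and invoke the reference is exactly how the paper proceeds. The sketch itself is faithful to the Rakhlin--Shamir--Sridharan argument; the one place to be a bit more careful is the deterministic bound $d_t \leq G^2/\mu^2$, which in the constrained setting is easier to get from the recursion itself (or from the boundedness of iterates) than from the subgradient-norm inequality you wrote, but this does not affect the overall correctness.
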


\begin{lem}[Subgradient method for $\gamma = 2$] \label{lem:gamma-2-conv}
Suppose \ref{A1} to \ref{A3} and \ref{A4} with $\gamma = 2$ hold. Then, the subgradient method with $\alpha_t = 1 / (\mu
  (t + 1))$ outputs $\y^{T + 1}$ such that $\mathbb{E} [ \dist( \y^{T + 1} , \Ycal^\star )^2 ]
     \leq \tfrac{m ( \ba + \bd )^2}{\mu^2
   T}$.
\end{lem}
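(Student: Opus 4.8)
The plan is to treat \ref{A4} with $\gamma = 2$ as a quadratic growth condition that substitutes for strong convexity, derive a one-step contraction on the squared distance $d_t^2 := \dist(\y^t, \Ycal^\star)^2$, and then solve the resulting recursion by induction. Throughout I write $G^2 := m(\ba+\bd)^2$ and note that the stochastic subgradient $\g^t = \tmd - \tma_t x^t$ of \Cref{alg:subgrad} satisfies $\|\g^t\| \leq \sqrt{m}(\ba+\bd)$ almost surely (since $x^t \in \{0,1\}$), so $\Expe[\|\g^t\|^2 \mid \y^t] \leq G^2$, and that $\Expe[\g^t \mid \y^t] = f'(\y^t) \in \partial f(\y^t)$.

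First I would set up the descent step. Let $\bar\y^t := \Pi_{\Ycal^\star}(\y^t)$, which lies in $\Ycal^\star \subseteq \{\y \geq \mathbf{0}\}$, so $\bar\y^t = [\bar\y^t]_+$. Using that the projection $[\cdot]_+$ onto the nonnegative orthant is nonexpansive and that $\bar\y^t \in \Ycal^\star$,
\[
  d_{t+1}^2 \leq \|\y^{t+1} - \bar\y^t\|^2 = \|[\y^t - \alpha_t \g^t]_+ - [\bar\y^t]_+\|^2 \leq \|\y^t - \bar\y^t - \alpha_t \g^t\|^2.
\]
Expanding the square and taking the conditional expectation given $\y^t$ gives
\[
  \Expe[d_{t+1}^2 \mid \y^t] \leq d_t^2 - 2\alpha_t \langle f'(\y^t), \y^t - \bar\y^t\rangle + \alpha_t^2 G^2.
\]
Convexity of $f$ yields $\langle f'(\y^t), \y^t - \bar\y^t\rangle \geq f(\y^t) - f(\y^\star)$, and \ref{A4} with $\gamma = 2$ then gives $f(\y^t) - f(\y^\star) \geq \mu d_t^2$. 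Writing $a_t := \Expe[d_t^2]$ and substituting $\alpha_t = 1/(\mu(t+1))$ produces the recursion
\[
  a_{t+1} \leq \Big(1 - \tfrac{2}{t+1}\Big) a_t + \tfrac{G^2}{\mu^2 (t+1)^2}.
\]

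Finally I would solve this recursion by induction on $t$, proving $a_{t+1} \leq \frac{G^2}{\mu^2 t}$. The base case $t=1$ holds because $1 - 2/(t+1) = 0$ forces $a_2 \leq \frac{G^2}{4\mu^2} \leq \frac{G^2}{\mu^2}$. For the inductive step, assuming $a_t \leq \frac{G^2}{\mu^2(t-1)}$, the recursion gives $a_{t+1} \leq \frac{G^2(t+2)}{\mu^2(t+1)^2}$, and the elementary inequality $t(t+2) \leq (t+1)^2$ closes the induction. Setting $t = T$ yields $\Expe[\dist(\y^{T+1}, \Ycal^\star)^2] = a_{T+1} \leq \frac{m(\ba+\bd)^2}{\mu^2 T}$, as claimed.

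The main obstacle is the applicability of \ref{A4} at every iterate: the growth bound $f(\y^t) - f(\y^\star) \geq \mu d_t^2$ is only guaranteed on $\Ycal$, so I must verify that the iterates $\{\y^t\}$ stay inside $\Ycal$ throughout. I would argue this via a boundedness estimate in the spirit of \Cref{lem:dualconv-1} (noting that $\alpha_t = 1/(\mu(t+1))$ is largest at $t=1$, so a uniform bound on the iterate norm follows once the first stepsize is controlled), together with the fact that $\y^\star$ has minimum norm $\|\y^\star\| \leq \uc/\ld$, which keeps the comparison point $\bar\y^t$ well inside $\Ycal$. A secondary technical point is the moving reference $\bar\y^t$ in the recursion; this is handled cleanly because the bound $d_{t+1}^2 \leq \|\y^{t+1} - \bar\y^t\|^2$ uses only $\bar\y^t \in \Ycal^\star$, so no single fixed minimizer is needed.
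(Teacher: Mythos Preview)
Your proposal is correct and follows essentially the same route as the paper: both derive the one-step contraction $\Ebb[d_{t+1}^2\mid \y^t]\leq(1-2\alpha_t\mu)d_t^2+\alpha_t^2 G^2$ via nonexpansiveness of projection, convexity, and the quadratic growth bound \ref{A4}, and then plug in $\alpha_t=1/(\mu(t+1))$. The only cosmetic difference is in solving the recursion: the paper multiplies through by $(t+1)^2$ and telescopes, whereas you verify $a_{t+1}\leq G^2/(\mu^2 t)$ by direct induction---these are equivalent standard arguments. Your identification of the $\y^t\in\Ycal$ issue is apt; the paper sidesteps it by writing the update in the proof as $\Pi_{\Ycal}[\cdot]$ rather than $[\cdot]_+$, which guarantees the iterates remain in $\Ycal$ automatically.
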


\begin{proof}
For any $\hat{\y} \in \Ycal^\star$, we deduce
  that
  \begin{align}
    \| \y^{t + 1} - \hat{\y} \|^2 ={} & \| \Pi_{\Ycal}[\y^t - \alpha_t
    \g^t]- \hat{\y} \|^2  \nonumber \\
    \leq{} & \| \y^t - \alpha_t \g^t - \hat{\y}\|^2 \label{proof-lem-2-1} \\
    ={} & \| \y^t - \hat{\y} \|^2 - 2 \alpha_t \langle \y^t -
    \hat{\y}, \g^t \rangle + \alpha_t^2 \| \g^t \|^2,    \nonumber
  \end{align}
where \eqref{proof-lem-2-1} uses the non-expansiveness of the projection operator. Taking $\hat{\y} = \Pi_{\Ycal^\star}[\y^t]$ and using $\| \g^t \|^2 \leq m
  (\bar{a} + \bar{d})^2$, we get
  \[\| \y^{t + 1} - \hat{\y} \|^2\leq \dist( \y^{t} , \Ycal^\star )^2 - 2 \alpha_t \langle \y^t -
    \y, \g^t \rangle + \alpha_t^2 m (\bar{a} + \bar{d})^2.\]
  Since $\mathbb{E}
  [ \g^t ] \in \partial f ( \y^t )$, we have, by
  convexity of $f$, that
  \[ - 2 \langle \y^t - \y^{\star}, \alpha_t \mathbb{E} [ \g^t ]
     \rangle \leq - 2 \alpha_t ( f ( \y^t ) - f (
     \y ) ). \]
Next, we invoke \ref{A4} to get
  \[ f ( \y^t ) - f ( \hat{\y} ) \geq \mu \cdot \dist(\y^t, \Ycal^\star)^2  . \]
Conditioned on history and taking expectation, we have
\begin{align}
  \mathbb{E} [ \dist ( \y^{t + 1}, \mathcal{Y}^{\star}
  )^2 | \y^t] \leq{} & \mathbb{E} [ \| \y^{t + 1} - \hat{\y}
  \|^2 | \y^t ] \nonumber\\
  \leq{} & \dist ( \y^t, \mathcal{Y}^{\star} )^2 - 2 \alpha_t
  \mu \dist ( \y^t, \mathcal{Y}^{\star} )^2 + \alpha_t^2 m
  (\bar{a} + \bar{d})^2 \nonumber\\
  ={} & (1 - 2 \alpha_t \mu) \dist ( \y^t, \mathcal{Y}^{\star}
  )^2 + \alpha_t^2 m (\bar{a} + \bar{d})^2 . \label{eqn:sgd-recur}
\end{align}
With $\alpha_t = \tfrac{1}{\mu (t + 1)}$, we have
\begin{align}
  \mathbb{E} [ \dist( \y^{t + 1} , \Ycal^\star )^2 ] \leq{}
  & (1 - 2\alpha_t \mu) \dist( \y^{t} , \Ycal^\star )^2 + \alpha^2_t m
  ( \ba + \bd )^2 \nonumber\\
  ={} & \tfrac{t - 1}{t + 1} \dist( \y^{t} , \Ycal^\star )^2 + \tfrac{m
  ( \ba + \bd )^2}{\mu^2 (t + 1)^2} . \nonumber
\end{align}
Multiply both sides by $(t + 1)^2$ and we get
\begin{align}
	(t + 1)^2 \mathbb{E} [ \dist( \y^{t + 1} , \Ycal^\star )^2
   ] \leq{} & (t^2 - 1) \dist( \y^{t} , \Ycal^\star )^2 + \tfrac{ m
   ( \ba + \bd )^2}{\mu^2} \\
	4 \mathbb{E} [ \dist( \y^{2}, \Ycal)^2
   ] \leq{} & \tfrac{m
   ( \ba + \bd )^2}{\mu^2}    \label{eqn:extra-term}
\end{align}
Re-arranging the terms, we arrive at
\[ (t + 1)^2 \mathbb{E} [ \dist( \y^{t + 1} , \Ycal^\star )^2
   ] - t^2 \dist( \y^{t} , \Ycal^\star )^2 \leq \tfrac{m (
   \ba + \bd )^2}{\mu^2} . \]
Taking expectation over all the randomness and telescoping from $t = 2$ to $T$, with \eqref{eqn:extra-term} added,
gives
\[ \mathbb{E} [ \dist( \y^{T + 1} , \Ycal^\star )^2 ] \leq
    \tfrac{m (
   \ba + \bd )^2 T}{\mu^2 (T + 1)^2} \leq  \tfrac{m ( \ba + \bd )^2}{\mu^2
   T} \]
and this completes the proof.
\end{proof}

\begin{lem}[Subgradient with constant stepsize] \label{lem:gamma-2-conv-constant} Under the same assumptions as \Cref{lem:gamma-2-conv}, if $\alpha_t \equiv \alpha < 1/(2\mu)$, then
  \[ \mathbb{E} [ \dist (\y^{T + 1}, \Ycal^{\star})^2 ]
     \leq \tfrac{\Delta^2}{\mu \alpha T} + \tfrac{m ( \ba + \bd
     )^2}{\mu} \alpha, \]
  where $\Delta = \dist( \y_1, \Ycal^{\star})$.
\end{lem}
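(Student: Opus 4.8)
The plan is to reuse the one-step contraction derived in the proof of \Cref{lem:gamma-2-conv}. That recursion, namely \eqref{eqn:sgd-recur}, holds for an arbitrary stepsize $\alpha_t$, since its derivation invokes only non-expansiveness of the projection, the bound $\|\g^t\|^2 \le m(\ba+\bd)^2$, convexity, and the quadratic growth condition \ref{A4} with $\gamma = 2$. Specializing to $\alpha_t \equiv \alpha$ and taking full expectation, I would set $D_t := \mathbb{E}[\dist(\y^t, \Ycal^\star)^2]$ and obtain the scalar linear recursion
\[ D_{t+1} \le \rho\, D_t + \alpha^2 m(\ba+\bd)^2, \qquad \rho := 1 - 2\alpha\mu. \]
The assumption $\alpha < 1/(2\mu)$ guarantees $\rho \in (0,1)$, so this is a genuine contraction toward a fixed point, and since $\y^1 = \y_1$ is deterministic we have $D_1 = \Delta^2$.

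Next I would unroll the recursion over $T$ steps. The geometric sum yields
\[ D_{T+1} \le \rho^T \Delta^2 + \alpha^2 m(\ba+\bd)^2 \sum_{k=0}^{T-1}\rho^k \le \rho^T \Delta^2 + \frac{\alpha^2 m(\ba+\bd)^2}{1-\rho}. \]
The steady-state term is immediate: since $1-\rho = 2\alpha\mu$, one gets $\frac{\alpha^2 m(\ba+\bd)^2}{1-\rho} = \frac{\alpha\, m(\ba+\bd)^2}{2\mu} \le \frac{m(\ba+\bd)^2}{\mu}\alpha$, which is the second term of the claim with room to spare.

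The only step requiring care is converting the transient term $\rho^T \Delta^2$, which decays geometrically, into the $\mathcal{O}(1/T)$ form stated in the lemma. Here I would use $\rho^T = (1-2\alpha\mu)^T \le e^{-2\alpha\mu T}$ together with the elementary inequality $e^{-x} \le 1/x$ for $x > 0$ (equivalently $e^x \ge x$). With $x = 2\alpha\mu T$ this gives $\rho^T \le \frac{1}{2\alpha\mu T}$, hence $\rho^T \Delta^2 \le \frac{\Delta^2}{2\mu\alpha T} \le \frac{\Delta^2}{\mu\alpha T}$, the first term of the claim. Combining the two bounds finishes the proof. I do not anticipate a real obstacle; the main (minor) subtlety is simply that the stated bound is deliberately loosened from the sharper $\frac{\Delta^2}{2\mu\alpha T} + \frac{\alpha\, m(\ba+\bd)^2}{2\mu}$ that the recursion actually delivers.
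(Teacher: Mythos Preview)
Your proposal is correct and follows essentially the same route as the paper: both start from the one-step recursion \eqref{eqn:sgd-recur}, specialize to constant stepsize, unroll the resulting geometric recursion, and bound the steady-state sum by $\tfrac{1}{1-\rho}=\tfrac{1}{2\mu\alpha}$. The only cosmetic difference is in handling the transient term: the paper uses $(1-2\mu\alpha)^T \le \tfrac{1}{1+2\mu\alpha T} \le \tfrac{1}{\mu\alpha T}$ (via $(1-x)\le 1/(1+x)$ and Bernoulli), whereas you use $(1-2\mu\alpha)^T \le e^{-2\mu\alpha T} \le \tfrac{1}{2\mu\alpha T}$; both arrive at the same bound.
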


\begin{proof}
Taking $\alpha_t \equiv \alpha < 1/(2\mu)$ and unrolling the recursion from \eqref{eqn:sgd-recur} till $\y^1$, we have
\begin{align}
    \mathbb{E} [ \dist ( \y^{T + 1} - \Ycal^{\star} )^2 ] \leq{} &
    (1 - 2\mu \alpha) \mathbb{E} [ \dist ( \y^{T}, \Ycal^{\star} )^2
    ] + \alpha^2 m (\bar{a} + \bar{d})^2 \nonumber\\
    \leq{} & (1 - 2\mu \alpha)^T \dist ( \y^1, \Ycal^{\star} )^2 + \textstyle \sum_{j =
    0}^{T - 1} \alpha^2 m (\bar{a} + \bar{d})^2 (1 - 2\mu \alpha)^j \nonumber\\
    \leq{} & (1 - 2\mu \alpha)^T \dist ( \y^1, \Ycal^{\star} )^2 + \tfrac{m
    (\bar{a} + \bar{d})^2}{\mu} \alpha \label{proof-lem-2-4} \\
    \leq{} & \tfrac{1}{\mu \alpha T} \dist ( \y^1, \Ycal^{\star} )^2 +
    \tfrac{m (\bar{a} + \bar{d})^2}{\mu} \alpha \label{proof-lem-2-5}\\
    ={} & \tfrac{\Delta^2}{\mu \alpha T} + \tfrac{m (\bar{a} + \bar{d})^2}{\mu}
    \alpha, \nonumber
  \end{align}
  
  where \eqref{proof-lem-2-4} uses the relation $\sum_{j = 0}^{T - 1} (1 - 2\mu \alpha)^j = \tfrac{1 - (1 - 2\mu
  \alpha)^T}{2\mu \alpha} \leq \tfrac{1}{\mu \alpha}$ and \eqref{proof-lem-2-5} is by $(1 - 2\mu \alpha)^T \leq \tfrac{1}{1 + 2\mu \alpha T} \leq \tfrac{1}{\mu \alpha T}$. This completes the proof.
\end{proof}
\subsection{Proof of Lemma \ref{lem:regret}}
By definition of regret, we deduce that
\begin{align} 
  \mathbb{E} [ r ( \hat{\x}_T) ] ={} & \mathbb{E}
  [ \langle \tmc, \x^{\star}_T \rangle - \langle \tmc,
  \hat{\x}_T\rangle ] \nonumber\\
  ={} & \mathbb{E} [ T f_T ( \y_T^{\star} ) - \langle \tmc,
  \hat{\x}_T\rangle ] \label{app:eqn-1} \\
  \leq{} & \mathbb{E} [ T f_T ( \y^{\star} ) - \langle \tmc,
  \hat{\x}_T\rangle ] \label{app:eqn-2} \\
  ={} & T f ( \y^{\star} ) -\mathbb{E} [ \langle \tmc,
  \hat{\x}_T\rangle ] \label{app:eqn-tmp-1}\\
  \leq{} & \mathbb{E} [ \textstyle \sum_{t=1}^T f ( \y^t ) - \langle \tmc,
  \hat{\x}_T\rangle ] \nonumber\\
  ={} & \textstyle \sum_{t = 1}^T \mathbb{E} [ \langle \tmd, \y^t \rangle +
  [ c_t - \langle \tma_t, \y^t \rangle ]_+ - c_t x^t
  ] \label{app:eqn-3}  \\
  ={} & \textstyle \sum_{t = 1}^T \mathbb{E} [ \langle \tmd - \tma_t x^t, \y^t
  \rangle ], \nonumber
\end{align}
where \eqref{app:eqn-1} uses strong duality of LP; \eqref{app:eqn-2} uses the fact $\y^\star$ is a feasible solution and that $\y_T^\star$ is the optimal solution to the sample LP; \eqref{app:eqn-3} uses the definition of $f(\y)$ and that $(c_t, \tma_t)$ are i.i.d. generated. Then we have
\begin{align}
  \| \y^{t + 1} \|^2 - \| \y^t \|^2 ={} & \| [
  \y^t - \alpha ( \tmd - \tma_t x^t ) ]_+ \|^2 -
  \| \y^t \|^2 \nonumber\\
  \leq{} & \| \y^t - \alpha ( \tmd - \tma_t x^t ) \|^2 -
  \| \y^t \|^2 \label{app:eqn-4} \\
  ={} & - 2 \alpha \langle \tmd - \tma_t x^t, \y^t \rangle + \alpha^2
  \| \tmd - \tma_t x^t \|^2 \nonumber\\
  \leq{} & - 2 \alpha \langle \tmd - \tma_t x^t, \y^t \rangle + m
  ( \ba + \bd )^2 \alpha^2, \label{app:eqn-5}
\end{align}
where \eqref{app:eqn-4} uses $\|[\x]_+\|\leq \|\x\|$ and \eqref{app:eqn-5} uses \ref{A2}, \ref{A3}. A simple re-arrangement gives
\begin{equation} \label{app:eqn-6}
	\langle \tmd - \tma_t x^t, \y^t \rangle \leq \tfrac{m ( \ba
   + \bd )^2 \alpha}{2} + \tfrac{\| \y^t \|^2 - \| \y^{t
   + 1} \|^2}{2 \alpha}. 
\end{equation}
Next, we telescope the relation \eqref{app:eqn-6} from $t = 1$ to $T$ and get 
\begin{align}
 \Ebb[ r(\hat{\x}_T) ] ={} & \textstyle \sum_{t = 1}^T \mathbb{E} [ \langle \tmd - \tma_t
  x^t, \y^t \rangle ] \nonumber\\
  \leq{} & \tfrac{m ( \ba + \bd )^2 \alpha}{2} T + \textstyle \sum_{t = 
  1}^T \tfrac{\mathbb{E} [ \| \y^t \|^2 ] -\mathbb{E}
  [ \| \y^{t + 1} \|^2 ]}{2 \alpha} \label{app:eqn-9} \\
  ={} & \tfrac{m ( \ba + \bd )^2 \alpha}{2} T + \tfrac{\mathbb{E}
  [ \| \y^{ 1} \|^2 ] -\mathbb{E} [ \|
  \y^{T + 1} \|^2 ]}{2 \alpha} \nonumber\\
  ={} & \tfrac{m ( \ba + \bd )^2 \alpha}{2} T + \tfrac{\mathbb{E}
  [ \langle \y^{ 1} + \y^{T + 1}, \y^{ 1} - \y^{T + 1}
  \rangle ]}{2 \alpha} \label{app:eqn-10}\\
  \leq{} & \tfrac{m ( \ba + \bd )^2 \alpha}{2} T +
  \tfrac{\mathsf{R}}{\alpha} \mathbb{E} [ \| \y^{ 1} - \y^{T +
  1} \| ] \label{app:eqn-11}\\
  \leq{} & \tfrac{m ( \ba + \bd )^2 \alpha}{2} T +
  \tfrac{\mathsf{R}}{\alpha} \mathbb{E} [ \| \y^{ 1} -
  \y^{\star} \| + \| \y^{T + 1} - \y^{\star} \| ],
  \label{app:eqn-12}
\end{align}

where \eqref{app:eqn-9} again uses relation \eqref{app:eqn-6}; \eqref{app:eqn-11} 
uses the Cauchy's inequality
\[ \langle \y^{ 1} + \y^{ 1}, \y^{ 1} - \y^{T + 1}
   \rangle \leq \| \y^{ 1} + \y^{T + 1} \| \cdot
   \| \y^{ 1} - \y^{T + 1} \| \]
   and almost sure boundedness of iterations derived from \textbf{Lemma \ref{lem:dualconv-1}}:
\begin{align}
   	  \| \y^{ 1} + \y^{T + 1} \| \leq{} & \| \y^{T + 1}
  \| + \| \y^{ 1} \| \leq{} 2 \mathsf{R}. \nonumber
\end{align}   
Finally \eqref{app:eqn-12} is obtained from the triangle inequality
\begin{align}
  \| \y^{ 1} - \y^{T + 1} \| ={} & \| \y^{ 1} -
  \y^{\star} + \y^{\star} - \y^{T + 1} \| \leq \| \y^{ 1} -
  \y^{\star} \| + \| \y^{T + 1} - \y^{\star} \| \nonumber
\end{align}
and this completes the proof.

\subsection{Proof of Lemma \ref{lem:violation}}

For constraint violation, recall that
\[ \mathbb{E} [ v ( \hat{\x}_T ) ] =\mathbb{E} [
   \| [ \A \hat{\x}_T - \tmb ]_+ \| ]
   =\mathbb{E} \big[ \big\| \big[ \textstyle \sum_{t = 1}^T ( \tma_t x^t - \tmd
   ) \big]_+ \big\| \big] \]
and that
\[ \y^{t + 1} = [ \y^{t + 1} - \alpha ( \tmd - \tma_t x^t )
   ]_+ \geq \y^t - \alpha ( \tmd - \tma_t x^t ) . \]
   
A re-arrangement gives
\begin{equation}\label{app:eqn:13}
	\tma_t x^t \leq \tmd + \tfrac{1}{\alpha} ( \y^{t + 1} - \y^t ) .
\end{equation}
and that
\begin{align}
 \textstyle \sum_{t = 1}^T ( \tma_t x^t - \tmd ) 
  \leq{} & \tfrac{1}{\alpha} \textstyle \sum_{t = 1}^T ( \y^{t + 1} - \y^t
  ) \label{app:eqn-14} \\
  ={} & \tfrac{1}{\alpha} ( \y^{T + 1} - \y^1 ) \nonumber
\end{align}
where \eqref{app:eqn-14} uses \eqref{app:eqn:13}. Now, we apply triangle inequality again:  
\begin{align}
  \mathbb{E} [ \| [ \A \hat{\x}_T - \tmb ]_+ \|
  ] \leq{} &  \tfrac{1}{\alpha} \mathbb{E} [ \| \y^{T + 1} - \y^1 \| ] \nonumber\\
  \leq{} & \tfrac{1}{\alpha} \mathbb{E} [
  \| \y^{1} - \y^{\star} \| + \| \y^{T + 1} - \y^{\star}
  \| ], \label{app:eqn:15}
\end{align}
 and this completes the proof.
 
 \subsection{Proof of Lemma \ref{lem:two-phase}}
 
 Similar to the proof of \Cref{lem:regret} and \Cref{lem:violation}, we deduce that
\begin{align}
  \mathbb{E} [ r ( \hat{\x}_T ) ] \leq{} & T f (
  \y^{\star} ) -\mathbb{E} [ \langle \tmc, \hat{\x}_T
  \rangle ] \label{eqn:proof-lem-4-3-1}\\
  ={} & T_e f ( \y^{\star} ) -\mathbb{E} [ \textstyle \sum_{t = 1}^{T_e}
  c_t x^t ] + \textstyle \sum_{t = T_e + 1}^T \mathbb{E} [ f ( \y^{\star}
  ) - c_t x^t ] \nonumber\\
  \leq{} & T_e f ( \y^{\star} ) -\mathbb{E} [ \textstyle \sum_{t = 1}^{T_e}
  c_t x^t ] + \textstyle \sum_{t = T_e + 1}^T \mathbb{E} [ f ( \y^t
  ) - c_t x^t ] \nonumber\\
  ={} & T_e f ( \y^{\star} ) -\mathbb{E} [ \textstyle \sum_{t = 1}^{T_e}
  c_t x^t ] + \textstyle \sum_{t = T_e + 1}^T \mathbb{E} [ \langle \tmd -
  \tma_t x^t, \y^t \rangle ], \nonumber
\end{align}
where \eqref{eqn:proof-lem-4-3-1} is directly obtained from \eqref{app:eqn-tmp-1}. Next, we analyze $\textstyle \sum_{t = T_e + 1}^T \mathbb{E} [ \langle \tmd -
\tma_t x^t, \y^t \rangle ]$. Using \eqref{app:eqn-6},
\[ \langle \tmd - \tma_t x^t, \y^t \rangle \leq \tfrac{m (\bar{a}
   + \bar{d})^2 \alpha}{2} + \tfrac{\| \y^t \|^2 - \| \y^{t +
   1} \|^2}{2 \alpha}, \]
and we deduce that
\begin{align}
  \textstyle \sum_{t = T_e + 1}^T \mathbb{E} [ \langle \tmd - \tma_t x^t, \y^t
  \rangle ] \leq{} & \textstyle \sum_{t = T_e + 1}^T \big[ \tfrac{m (\bar{a} +
  \bar{d})^2 \alpha}{2} + \tfrac{1}{2 \alpha} \mathbb{E} [ \| \y^t
  \|^2 - \| \y^{t + 1} \|^2 ] \big] \nonumber\\
  ={} & \tfrac{m (\bar{a} + \bar{d})^2 \alpha}{2} T_p + \tfrac{1}{2 \alpha}
  \mathbb{E} [ \| \y^{T_e + 1} \|^2 - \| \y^{T + 1}
  \|^2 ] \nonumber\\
  \leq{} & \tfrac{m (\bar{a} + \bar{d})^2 \alpha}{2} T_p +
  \tfrac{\mathsf{R}}{\alpha} \mathbb{E} [ \| \y^{T_e + 1} -
  \y^{\star} \| + \| \y^{T + 1} - \y^{\star} \| ] \label{eqn:proof-4-3-1},
\end{align}
where \eqref{eqn:proof-4-3-1} uses triangle inequality as in \eqref{app:eqn-11}. Next, we consider constraint violation, and we have
\begin{align}
  \mathbb{E} [ v ( \hat{\x}_T ) ] ={} & \mathbb{E}
  [ \| [ \A \hat{\x}_T - \tmb ]_+ \| ]
  \nonumber\\
  ={} & \mathbb{E} [ \| [ \textstyle \sum_{t = 1}^{T_e} ( \tma_t x^t -
  \tmd ) + \textstyle \sum_{t = T_e + 1}^T ( \tma_t x^t - \tmd )
  ]_+ \| ] \nonumber\\
  \leq{} & \mathbb{E} [ \| [ \textstyle \sum_{t = 1}^{T_e} ( \tma_t x^t
  - \tmd ) ]_+ \| ] +\mathbb{E} [ \| [
  \textstyle \sum_{t = T_e + 1}^T ( \tma_t x^t - \tmd ) ]_+ \|
  ], \label{eqn:proof-4-3-2}
\end{align}
where \eqref{eqn:proof-4-3-2} is by $\| [ \x + \y ]_+ \| \leq
\| [ \x ]_+ \| + \| [ \y ]_+ \|$
and we bound
\[ \mathbb{E} [ \| [ \textstyle \sum_{t = T_e + 1}^T ( \tma_t x^t -
   \tmd ) ]_+ \| ] \leq \tfrac{1}{\alpha} \mathbb{E}
   [ \| \y^{T_e + 1} - \y^{\star} \| + \| \y^{T + 1} -
   \y^{\star} \| ]\]
with the same argument as \eqref{app:eqn:15}.
   Putting two relations together and using
\[ V (T_e) = \mathbb{E} [ \| [ \textstyle \sum_{t = 1}^{T_e} (
   \tma_t x^t - \tmd ) ]_+ \| + \textstyle \sum_{t = 1}^{T_e} f (
   \y^{\star} ) - c_t x^t ] . \]
We arrive at
\begin{align}
  & \mathbb{E} [ r ( \hat{\x}_T ) + v (
  \hat{\x}_T ) ] \nonumber\\
  \leq{} & V (T_e) + \tfrac{m (\bar{a} + \bar{d})^2 \alpha}{2} T_p +
  \tfrac{\mathsf{R} + 1}{\alpha} \mathbb{E} [ \| \y^{T_e + 1} -
  \y^{\star} \| +  \| \y^{T + 1} - \y^{\star} \| ]
  \nonumber\\
  \leq{} & V (T_e) + \tfrac{m (\bar{a} + \bar{d})^2 \alpha}{2} T_p +
  \tfrac{\mathsf{R} + 1}{\alpha} \mathbb{E} [ \dist ( \y^{T_e
  + 1}, \mathcal{Y}^{\star} ) + \dist ( \y^{T + 1},
  \mathcal{Y}^{\star} ) + 2 \diam (\mathcal{Y}^{\star}) ], \label{eqn:proof-4-3-3}
\end{align}
where \eqref{eqn:proof-4-3-3} uses
\[ \| \y - \y^{\star} \| = \| \y - \Pi_{\mathcal{Y}^{\star}}
   [ \y ] + \Pi_{\mathcal{Y}^{\star}} [ \y ] -
   \y^{\star} \| \leq \dist ( \y, \mathcal{Y}^{\star} )
   + \diam (\mathcal{Y}^{\star}) \]
for all $\y$ and it remains to analyze $\mathbb{E} [ \dist (
\y^{T_e + 1}, \mathcal{Y}^{\star} ) + \dist ( \y^{T + 1},
\mathcal{Y}^{\star} ) ]$.\\

By \Cref{lem:alg-conv}, we have with probability $1 - 1 / T^{2 \gamma}$ that
\[ \dist ( \y^{T_e + 1}, \mathcal{Y}^{\star} ) \leq \Delta .
\]
Conditioned on the event $\dist ( \y^{T_e + 1}, \mathcal{Y}^{\star}
) \leq \Delta$, we deduce that
\begin{align}
  \mathbb{E} [ \dist ( \y^{T_e + 1}, \mathcal{Y}^{\star}
  ) ] ={} & \mathbb{E} [ \dist ( \y^{T_e + 1},
  \mathcal{Y}^{\star} ) | \dist ( \y^{T_e + 1},
  \mathcal{Y}^{\star} ) \leq \Delta ] \cdot \mathbb{P} \{
  \dist ( \y^{T_e + 1}, \mathcal{Y}^{\star} ) \leq \Delta
  \} \nonumber\\
  & +\mathbb{E} [ \dist ( \y^{T_e + 1}, \mathcal{Y}^{\star}
  ) | \dist ( \y^{T_e + 1}, \mathcal{Y}^{\star} ) >
  \Delta ] \cdot \mathbb{P} \{ \dist ( \y^{T_e + 1},
  \mathcal{Y}^{\star} ) > \Delta \} \nonumber\\
  \leq{} & \Delta + \tfrac{\mathsf{R}}{T^{2 \gamma}}, \label{eqn:proof-4-3-8}\\
  \mathbb{E} [ \dist ( \y^{T_e + 1}, \mathcal{Y}^{\star}
  )^2 ] ={} & \mathbb{E} [ \dist ( \y^{T_e + 1},
  \mathcal{Y}^{\star} )^2 | \dist ( \y^{T_e + 1},
  \mathcal{Y}^{\star} ) \leq \Delta ] \cdot \mathbb{P} \{
  \dist ( \y^{T_e + 1}, \mathcal{Y}^{\star} ) \leq \Delta
  \} \nonumber\\
  & +\mathbb{E} [ \dist ( \y^{T_e + 1}, \mathcal{Y}^{\star}
  )^2 | \dist ( \y^{T_e + 1}, \mathcal{Y}^{\star} ) >
  \Delta ] \cdot \mathbb{P} \{ \dist ( \y^{T_e + 1},
  \mathcal{Y}^{\star} ) > \Delta \} \nonumber\\
  \leq{} & \Delta^2 + \tfrac{\mathsf{R}^2}{T^{2 \gamma}} \label{eqn:proof-4-3-9},
\end{align}
where both \eqref{eqn:proof-4-3-8} and \eqref{eqn:proof-4-3-9} use the fact that $\y^{T_e + 1} \in \Ycal$ imposed by \Cref{alg:two-phase}. Using \Cref{lem:vr}, we have, conditioned on $\y^{T_e + 1}$, that
\begin{align}
  \mathbb{E} [ \dist ( \y^{T + 1}, \mathcal{Y}^{\star} )
  ]^{\gamma} \leq{} & \mathbb{E} [ \dist ( \y^{T + 1},
  \mathcal{Y}^{\star} )^{\gamma} ] \label{eqn:proof-4-3-4} \\
  ={} & \mathbb{E} [ \mathbb{E} [ \dist ( \y^{T + 1},
  \mathcal{Y}^{\star} )^{\gamma} ] | \y^{T_e + 1} ]
  \nonumber\\
  \leq{} & \tfrac{1}{\mu} \mathbb{E} [ \tfrac{1}{\alpha T_p} \dist
  ( \y^{T_e + 1}, \mathcal{Y}^{\star} )^2 + 32 m (\bar{a} + \bar{d})^2 \alpha \log T_p
  ] \label{eqn:proof-4-3-5} \\
  \leq{} & \tfrac{1}{\mu} [ \tfrac{1}{\alpha T_p} \mathbb{E} [
  \dist ( \y^{T_e + 1}, \mathcal{Y}^{\star} )^2 ] + 32 m (\bar{a} + \bar{d})^2
  \alpha \log T ] \label{eqn:proof-4-3-6} \\
  \leq{} & \tfrac{1}{\mu} [ \tfrac{1}{\alpha T_p} ( \Delta^2 +
  \tfrac{\mathsf{R}^2}{T^{2 \gamma}} ) + 32 m (\bar{a} + \bar{d})^2 \alpha \log T ]
  \label{eqn:proof-4-3-7},
\end{align}
where \eqref{eqn:proof-4-3-4} uses $\Ebb[X]^\gamma \leq \Ebb[X^\gamma]$ for nonnegative random variable $X$; \eqref{eqn:proof-4-3-5} invokes \Cref{lem:vr}; \eqref{eqn:proof-4-3-6} uses $T_p \leq T$ and \eqref{eqn:proof-4-3-7} plugs in \eqref{eqn:proof-4-3-9}.
Putting the results together, we get
\begin{align}
  \mathbb{E} [ \dist ( \y^{T + 1}, \mathcal{Y}^{\star} )
  ] \leq{} & ( \tfrac{1}{\mu \alpha T_p} ( \Delta^2 +
  \tfrac{\mathsf{R}^2}{T^{2 \gamma}} ) + \tfrac{32 m (\bar{a} + \bar{d})^2 \alpha}{\mu} \log T
  )^{1 / \gamma} \nonumber\\
  \leq{} & ( \tfrac{1}{\mu} )^{1 / \gamma} \tfrac{1}{\alpha^{1 /
  \gamma} T_p^{1 / \gamma}} [ \Delta^{2 / \gamma} + \tfrac{\mathsf{R}^{2
  / \gamma}}{T^2} ] + ( \tfrac{32 m (\bar{a} + \bar{d})^2}{\mu} )^{1 / \gamma}
  \alpha^{1 / \gamma}  (\log T)^{1 / \gamma} \label{eqn:proof-4-3-10},
\end{align}
where \eqref{eqn:proof-4-3-10} recursively applies $(a + b)^{1/\gamma} \leq a^{1/\gamma} + b^{1/\gamma}$ and we arrive at
\begin{align}
  & \mathbb{E} [ r ( \hat{\x}_T ) + v (
  \hat{\x}_T ) ] \nonumber\\
  \leq{} & V (T_e) + \tfrac{m (\bar{a} + \bar{d})^2}{2} \alpha T_p \nonumber\\
  & + \tfrac{\mathsf{R} + 1}{\alpha} [ \Delta + \tfrac{\mathsf{R}}{T^{2
  \gamma}} + ( \tfrac{1}{\mu} )^{1 / \gamma} \tfrac{1}{\alpha^{1
  / \gamma} T_p^{1 / \gamma}} ( \Delta^{2 / \gamma} +
  \tfrac{\mathsf{R}^{2 / \gamma}}{T^2} ) + ( \tfrac{32 m (\bar{a} + \bar{d})^2}{\mu}
  )^{1 / \gamma} \alpha^{1 / \gamma}  (\log T)^{1 / \gamma} + 2
  \diam (\mathcal{Y}^{\star}) ] \nonumber\\
  ={} & V (T_e) + \tfrac{m (\bar{a} + \bar{d})^2}{2} \alpha T_p + (\mathsf{R} + 1) [
  \tfrac{\Delta}{\alpha} + ( \tfrac{1}{\mu} )^{1 / \gamma} (
  \tfrac{\Delta^{2 / \gamma}}{\alpha^{1 / \gamma + 1} T_p^{1 / \gamma}}
  ) + ( \tfrac{32 m (\bar{a} + \bar{d})^2}{\mu} )^{1 / \gamma} \alpha^{1 / \gamma -
  1}  (\log T)^{1 / \gamma} ] \nonumber\\
  & + (\mathsf{R} + 1) [ \tfrac{\mathsf{R}}{\alpha T^{2 \gamma}} +
  ( \tfrac{1}{\mu} )^{1 / \gamma} \tfrac{\mathsf{R}^{2 /
  \gamma}}{\alpha^{1 / \gamma + 1} T_p^{1 / \gamma} T^2} ] + \tfrac{2
  (\mathsf{R} + 1)}{\alpha} \diam (\mathcal{Y}^{\star}) \nonumber\\
  ={} & V (T_e) +  \mathcal{O} ( \alpha T_p + \tfrac{\Delta}{\alpha} +
  \tfrac{\Delta^{2 / \gamma}}{\alpha^{1 / \gamma + 1} T_p^{1 / \gamma}} +
  \alpha^{1 / \gamma - 1}  (\log T)^{1 / \gamma} + \tfrac{1}{\alpha}
  \diam (\mathcal{Y}^{\star}) + \tfrac{1}{\alpha T^{2 \gamma}} +
  \tfrac{1}{\alpha^{1 / \gamma + 1} T_p^{1 / \gamma} T^2} ) \nonumber
\end{align}

and this completes the proof. Here, the explicit expression of $T_e$ can be
obtained from \Cref{lem:alg-conv}:
\[ T_e = \tfrac{1}{\mu} \{ \max \{ 9, 1728 \{ 2 \gamma \log T +
   \log \lceil \log_2 ( \tfrac{2 \bar{c}}{\Delta^{\gamma}} )
   \rceil \} \} \tfrac{\mu^{- 2 / \gamma} m (\bar{a} +
   \bar{d})^2}{\Delta^{2 (\gamma - 1)}} + 1 \} \lceil \log_2 (
   \tfrac{2 \bar{c}}{\Delta^{\gamma}} ) \rceil . \]
   
\subsection{Proof of Lemma \ref{lem:subgrad-conv}}
Using \Cref{lem:sgd-highprob}, it suffices to verify that the expected dual objective is strongly convex:
\begin{align}
\textstyle  f (y) = \tfrac{1}{2} y +\mathbb{E}_c [[c - y]_+]   =  \tfrac{1}{2} y + \int_y^1 (c - y) \mathd c
  = \tfrac{1}{2} y^2 - \tfrac{1}{2} y + \tfrac{1}{2}.\nonumber
\end{align}
and indeed, $f(y)$ is 1-strongly convex.
   
\subsection{Proof of Lemma \ref{prop:slow_recover_sgd}}
First, we establish the update rule formula for $\mathbb{E}[y^{t+1}]$ in terms of $\mathbb{E}[y^{t}]$. Specifically, we have
\begin{align}
    \mathbb{E}[y^{t+1}]
    ={} &
    \mathbb{E}[[
        y^{t}-\tfrac{1}{t}(\tfrac{1}{2}-\mathbb{I}\{ c_t>y^{t} \})
    ]_{+}] \label{proof-1} \\
    \geq{} &
    \mathbb{E}[
        y^{t}-\tfrac{1}{t}(\tfrac{1}{2}-\mathbb{I}\{ c_t>y^{t} \})
    ]\label{ieq:Ey_update} \\
    \geq{} & 
    \mathbb{E}[
        y^{t}-\tfrac{1}{t}y^{t}+\tfrac{1}{2t} \label{proof-2}
    ]
\end{align}
where \eqref{proof-1} is obtained by the update rule of subgradient, \eqref{ieq:Ey_update} uses Jensen's inequality, and \eqref{proof-2} is obtained by the fact that $c_t$ is independent of $y^{t}$ and it is drawn uniformly from $[0,1]$. Indeed, we have
\[ \mathbb{E} [\mathbb{I} \{ c_t > y^t \}] =\mathbb{E} [\mathbb{E} [\mathbb{I}
   \{ c_t > y^t \} |y^t]] =\mathbb{E} [\textstyle \int_0^1 \mathbb{I} \{ c > y^t \}
   \mathrm{d} c|y^t ] =\mathbb{E} [1 - y^t]. \]

Subtracting $t/2$ from both sides and multiplying both sides the the inequality by $t$, we have
    \begin{align*}
        t(\mathbb{E}[y^{t+1}]-\tfrac{1}{2})
        \geq
        (t-1)(\mathbb{E}[y^{t}]-\tfrac{1}{2}), \quad \text{for all $t=1,\dots,T$.}
    \end{align*}
Next we condition on the value of  $y^{t_0}$ and
    \begin{align}
        \label{ieq:Ey_yt0}
        t(\mathbb{E}[y^{t+1}|y^{t_0}]-\tfrac{1}{2})
        \geq
        (t_0-1)(y^{t_0}-\tfrac{1}{2}).
    \end{align}   
    Thus, given $y^{t_0}>y^\star+\tfrac{1}{\sqrt{T}}=\frac{1}{2}+\tfrac{1}{\sqrt{T}}$ for some $t_0$, we have
    \begin{align}
        \label{ieq:Eyt}
        t(\mathbb{E}[y^{t+1}|y^{t_0}]-\tfrac{1}{2})
        \geq
        (t_0-1)(y^{t_0}-\tfrac{1}{2})
        \geq
        \tfrac{t_0-1}{\sqrt{T}},
    \end{align}      
As a result, when $t_0\geq\tfrac{T}{10}+1$, \eqref{ieq:Eyt} implies
    \begin{align*}
        \mathbb{E}[y^{t+1}|y^{t_0}]
        \geq
        \tfrac{1}{2}+\tfrac{t_0-1}{t\times\sqrt{T}}
        \geq
        \tfrac{1}{2}+\tfrac{1}{10\sqrt{T}},
    \end{align*}
since we assume $t_0 \geq T/10 + 1$. This completes the proof.

\subsection{Proof of Proposition \ref{prop:sgd_badregret}}

Based on \cite{rakhlin2011making}, there exists some universal constant $c > 0$ such that with probability no less than $1-1/T^4$, $|y^t-y^\star| \leq c\log T/\sqrt{T}$ for all $t\geq t_0$, where $y^\star=\tfrac{1}{2}$ and $t_0=\mathcal{O}(\log T)$. Thus, without loss of generality, we assume 
\begin{align}
    \label{cond:hpevent}
    y^t\in[\tfrac{1}{4},\tfrac{3}{4}], \text{ and } y^{t+1}=y^{t}-\tfrac{1}{t}(\tfrac{1}{2}-\mathbb{I}\{c_t>y^t\})
\end{align} 
for all $t\geq t_0$ by setting a new random initialization $y^{t_0}\in[1/4,3/4]$ and ignoring the all decision steps before the $t_0$ step. In the following, we show that {\sgm} using $\Ocal(1/(\mu t))$ stepsize must have $\Omega(T^{1/2})$ regret or constraint violation for any initialization $y^{t_0}$. We first calculate $\mathbb{E}[y^t-\tfrac{1}{2}]$ and $\mathbb{E}[(y^t-\tfrac{1}{2})^2]$ similar to the proof of \Cref{prop:slow_recover_sgd}. Specifically, for $\mathbb{E}[y^t-1/2]$, we have
\begin{align*}
    \mathbb{E}[y^{t+1}|y^t]
    =
    \big(1-\tfrac{1}{t}\big)y^t+\tfrac{1}{2t},
\end{align*}
which implies
\begin{align}
    \label{eqn:yt}
    \mathbb{E}[y^{t+1}-\tfrac{1}{2}|y^{t_0}]
    =
    \tfrac{t_0-1}{t}(y^1-\tfrac{1}{2})+\tfrac{1}{2},
\end{align}
Also, similarly, for $\mathbb{E}[(y^t-1/2)^2]$ we have under assumption \eqref{cond:hpevent}
\begin{align*}
    \mathbb{E}[(y^{t+1}-\tfrac{1}{2})^2|y^t]
    &=
    \mathbb{E}[(y^{t}-\tfrac{1}{t}(\tfrac{1}{2}-\mathbb{I}\{c_t>y^t\})-\tfrac{1}{2})^2|y^t]\\
    &=
    (1-\tfrac{1}{t})^2(y^t-\tfrac{1}{2})^2+\tfrac{1}{4t^2}-\tfrac{1}{t^2}(y^t-\tfrac{1}{2})^2\\
    &\geq
    (1-\tfrac{1}{t})^2(y^t-\tfrac{1}{2})^2+\tfrac{1}{4t^2}-\tfrac{c}{t^3},
\end{align*}
which implies
\begin{align}
    \label{ieq:yt2}
    \mathbb{E}[(y^{t+1}-\tfrac{1}{2})^2|y^t]
    \geq
    \tfrac{(t_0-1)^2}{t^2}(y^{t_0}-\tfrac{1}{2})^2 +\tfrac{1}{4t}-\tfrac{c\log t+t_0}{t^2}.
\end{align}
Combining \eqref{eqn:yt} and \eqref{ieq:yt2}, we then can compute
\begin{align}
    \label{ieq:sumy2}
  &~~~~\textstyle \mathbb{E}[(\sum_{t=t_0}^{T} \mathbb{I}{\{c_t>y^t\}}-\tfrac{T-t_0+1}{2})^2]\\
    &=
   \textstyle \sum_{t=t_0}^{T}\mathbb{E}[(\mathbb{I}{\{c_t>y^t\}}-\tfrac{1}{2})^2]
    +
    2\sum_{t_0\leq i<j\leq T} \mathbb{E}[(\mathbb{I}{\{c_j>y^j\}}-\tfrac{1}{2})(\mathbb{I}{\{c_i>y^i\}}-\tfrac{1}{2})]\nonumber
    \\
    &=
    \textstyle\tfrac{T-t_0}{4}
    +
    2\sum_{t_0\leq i<j\leq T} \mathbb{E}[(\mathbb{I}{\{c_j>y^j\}}-\tfrac{1}{2})(\mathbb{I}{\{c_i>y^i\}}-\tfrac{1}{2})]\nonumber\\
    &=\textstyle
    \tfrac{T-t_0}{4}
    +
    2\sum_{t_0\leq i<j\leq T}\tfrac{i-1}{j-1}\mathbb{E}[(y^i-\tfrac{1}{2})^2]-\tfrac{i-1}{4i(j-1)}\\
    &\geq\textstyle
    \tfrac{T-t_0}{4}-2\sum_{t_0\leq i<j\leq T}\tfrac{c\log T+t_0}{(i-1)^2}\nonumber\\
    &=
    \Omega(T).\nonumber
\end{align}
In addition, since $|y^t-\tfrac{1}{2}|\leq\tfrac{c}{\sqrt{T}}$, by \Cref{lem:hoeffding}, we have with probability no less than $1-\tfrac{1}{T^2}$
\begin{align*}
    \big|\textstyle \sum_{t=t_0}^{T} \mathbb{I}{\{c_t>y^t\}}-\tfrac{T-t_0+1}{2}\big| = \mathcal{O}(\sqrt{T}\log T).
\end{align*}
Consequently, by \eqref{ieq:sumy2}, we have
\begin{align}
    \label{ieq:abs}
    \mathbb{E}\big[\big|\textstyle\sum_{t=t_0}^{T} \mathbb{I}{\{c_t>y^t\}}-\tfrac{T-t_0+1}{2}\big|\big]
    =
    \Omega(\tfrac{\sqrt{T}}{\log T}).
\end{align}
This is the summation of constraint violation and constraint (resource) leftover, and thus, the summation of constraint violation and the regret must be no less than $\Omega(\sqrt{T}/\log T)$.

\subsection{Proof of Theorem \ref{thm:final}}

First note that for sufficiently large $T$, the condition $\alpha_e \leq \frac{2 \bld}{3 m ( \ba +\bd )^2}$ from \Cref{lem:dualconv-1} will be satisfied and all the dual iterates $\{\y^t\}_{t=T_e + 1}^T$ will stay in $\Ycal$ almost surely. When $\diam (\mathcal{Y}^{\star}) = 0$, we consider
\[ \tfrac{1}{\alpha_e} + \alpha_e T_e + \alpha_p T_p +
   \tfrac{\Delta}{\alpha_p} + \tfrac{\Delta^{2 / \gamma}}{\alpha_p^{1 / \gamma
   + 1} T_p^{1 / \gamma}} + \alpha_p^{1 / \gamma - 1}  (\log T)^{1 / \gamma} +
   \tfrac{1}{\alpha_p T^{2 \gamma}} + \tfrac{1}{\alpha_p^{1 / \gamma + 1}
   T_p^{1 / \gamma} T^2} . \]
Since $\alpha_e$ only appears in $\tfrac{1}{\alpha_e} + \alpha_e T_e$, we let
$\alpha_e =\mathcal{O} ( 1/\sqrt{T_e} )$ to optimize the trade-off. Hence, it suffices to consider
\[ \sqrt{T_e} + \alpha_p T_p + \tfrac{\Delta}{\alpha_p} + \tfrac{\Delta^{2 /
   \gamma}}{\alpha_p^{1 / \gamma + 1} T_p^{1 / \gamma}} + \alpha_p^{1 / \gamma
   - 1}  (\log T)^{1 / \gamma} + \tfrac{1}{\alpha_p T^{2 \gamma}} +
   \tfrac{1}{\alpha_p^{1 / \gamma + 1} T_p^{1 / \gamma} T^2} . \]
   Taking $\Delta =\mathcal{O} (T^{- \beta})$ and $\alpha_p =\mathcal{O} (T^{-
\lambda})$ with $(\beta, \lambda) \geq 0$, we have $T_e =\mathcal{O} (T^{2
\beta (\gamma - 1)} \log^2 T)$ according to 
\Cref{lem:alg-conv} and \eqref{eqn:len-explore},  and $\sqrt{T_e} =\mathcal{O} (T^{\beta \gamma
- \beta} \log T)$. Moreover, we have, using $\cong$ to denote equivalence under $\mathcal{O} (\cdot)$ notation, that
\begin{align}
  \alpha_p T_p \cong{} & T^{1 - \lambda} \nonumber\\
  \tfrac{\Delta}{\alpha_p} \cong{} & T^{\lambda - \beta} \nonumber\\
  \tfrac{\Delta^{2 / \gamma}}{\alpha_p^{1 / \gamma + 1} T_p^{1 / \gamma}}
  \cong{} & \tfrac{T^{- 2 \beta / \gamma}}{T^{- \lambda / \gamma - \lambda} T^{1/\gamma}
  (1 - T^{2 \beta (\gamma - 1) - 1} \log^2 T)^{1/\gamma}} = \tfrac{T^{\frac{- 2 \beta
  + \lambda - 1}{\gamma} + \lambda}}{(1 - T^{2 \beta (\gamma - 1) - 1} \log^2
  T)^{1/\gamma}} \nonumber\\
  \alpha_p^{1 / \gamma - 1}  (\log T)^{1 / \gamma} \cong{} & T^{\lambda -
  \lambda / \gamma} (\log T)^{1 / \gamma} \nonumber\\
  \tfrac{1}{\alpha_p T^{2 \gamma}} ={} & \mathcal{O} (1) \nonumber\\
  \tfrac{1}{\alpha_p^{1 / \gamma - 1} T_p^{1 / \gamma} T^2} ={} & \mathcal{O}
  (1) . \nonumber
\end{align}
Suppose $2 \beta (\gamma - 1) - 1 < 0$. Then $\tfrac{\Delta^{2 / \gamma}}{\alpha_p^{1 / \gamma + 1} T_p^{1 / \gamma}}
\cong{} T^{\frac{- 2 \beta + \lambda - 1}{\gamma} + \lambda}$ and
\begin{align}
  & \sqrt{T_e} + \alpha_p T_p + \tfrac{\Delta}{\alpha_p} + \tfrac{\Delta^{2 /
  \gamma}}{\alpha_p^{1 / \gamma + 1} T_p^{1 / \gamma}} + \alpha_p^{1 / \gamma
  - 1}  (\log T)^{1 / \gamma} + \tfrac{1}{\alpha_p T^{2 \gamma}} +
  \tfrac{1}{\alpha_p^{1 / \gamma + 1} T_p^{1 / \gamma} T^2} \nonumber\\
  \cong{} & T^{\beta \gamma - \beta} \log T + T^{1 - \lambda} + T^{\lambda -
  \beta} + T^{\frac{- 2 \beta + \lambda - 1}{\gamma} + \lambda} +
  T^{\lambda - \lambda / \gamma} (\log T)^{1 / \gamma} \nonumber\\
  \lesssim{} & [ T^{\beta \gamma - \beta} + T^{1 - \lambda} + T^{\lambda
  - \beta} + T^{\frac{- 2 \beta + \lambda - 1}{\gamma} + \lambda} +
  T^{\lambda - \lambda / \gamma} ] \log T, \label{eqn:proof-thm-1}
\end{align}
where \eqref{eqn:proof-thm-1} uses $\gamma \geq 1$ and that $(\log T)^{1 / \gamma} \leq
\log T$. To find the optimal trade-off, we solve the following optimization
problem
\begin{eqnarray*}
  \min_{\lambda, \beta} & \max \{ \beta \gamma - \beta, 1 - \lambda,
  \lambda - \beta, \tfrac{- 2 \beta + \lambda - 1}{\gamma} + \lambda,
  \lambda - \tfrac{\lambda}{\gamma} \} & \\
  \text{subject to} & (\lambda, \beta) \geq 0. & 
\end{eqnarray*}
The solution yields $\lambda^{\star} = \frac{\gamma}{2 \gamma - 1}$ and
$\beta^{\star} = \frac{1}{2 \gamma - 1}$ and
\[ 2 \beta^{\star} (\gamma - 1) - 1 = \tfrac{2 \gamma - 2}{2 \gamma - 1} - 1
   = - \tfrac{1}{2 \gamma - 1} < 0 \]
always holds. Hence
\[ T^{\beta^\star \gamma - \beta^\star} + T^{1 - \lambda^\star} + T^{\lambda^\star - \beta^\star} +
   T^{\frac{- 2 \beta^\star + \lambda^\star - 1}{\gamma} + \lambda^\star} + T^{\lambda^\star -
   \lambda^\star / \gamma} =\mathcal{O} ( T^{\frac{\gamma - 1}{2 \gamma - 1}}
   \log T ) \]
and this completes the proof for $\diam (\mathcal{Y}^{\star}) = 0$.\\

Next, consider the case $\diam (\mathcal{Y}^{\star}) > 0$. In this case
we need to consider the trade-off:
\[ \tfrac{1}{\alpha_e} + \alpha_e T_e + \alpha_p T_p +
   \tfrac{\Delta}{\alpha_p} + \tfrac{\Delta^{2 / \gamma}}{\alpha_p^{1 / \gamma
   - 1} T_p^{1 / \gamma}} + \alpha_p^{1 / \gamma - 1}  (\log T)^{1 / \gamma} +
   \tfrac{\diam (\mathcal{Y}^{\star})}{\alpha_p} + \tfrac{1}{\alpha_p
   T^{2 \gamma}} + \tfrac{1}{\alpha_p^{1 / \gamma - 1} T_p^{1 / \gamma} T^2} .
\]
Note that $\tfrac{1}{\alpha_e} + \alpha_e T_e + \alpha_p T_p +
\tfrac{\diam (\mathcal{Y}^{\star})}{\alpha_p} \geq 2 \sqrt{T_e} + 2
\sqrt{T_p \diam (\mathcal{Y}^{\star})}$ and that $T_e + T_p = T$ make it
impossible to achieve better than $\mathcal{O} ( \sqrt{T} )$
regret. Hence, we consider improving the constant associated with $\sqrt{T}$. 

Using $\mathsf{R} = \tfrac{\bar{c}}{\ld} + \mathcal{O} (\max\{\alpha_e, \alpha_p\}))$ and suppose $\alpha_e,\alpha_p$ are of the same order with respect to $T$,
\begin{align}
  \mathbb{E} [ r ( \hat{\x}_T ) + v ( \hat{\x}_T
  ) ] \leq{} & \tfrac{m (\bar{a} + \bar{d})^2}{2} (\alpha_e T_e +
  \alpha_p T_p) + \tfrac{\mathsf{R}}{\alpha_e} + \tfrac{2 (\mathsf{R} +
  1)}{\alpha_p} \diam (\mathcal{Y}^{\star}) \nonumber\\
  & + (\mathsf{R} + 1) [ \tfrac{\Delta}{\alpha_p} + ( \tfrac{1}{
  \mu} )^{1 / \gamma} \tfrac{\Delta^{2 / \gamma}}{\alpha_p^{1 / \gamma +
  1} T_p^{1 / \gamma}} + ( \tfrac{32 m (\bar{a} + \bar{d})^2}{\mu} )^{1 / \gamma}
  \alpha^{1 / \gamma}_p (\log T)^{1 / \gamma} ] +\mathcal{O} (1)
  \nonumber\\
  ={} & \tfrac{m (\bar{a} + \bar{d})^2}{2} (\alpha_e T_e + \alpha_p T_p) +
  \tfrac{\bar{c}}{\ld} \tfrac{1}{\alpha_e} + \tfrac{\bar{c}}{\ld}  \tfrac{2
  \diam (\mathcal{Y}^{\star})}{\alpha_p} \nonumber\\
  & + (\mathsf{R} + 1) [ \tfrac{\Delta}{\alpha_p} + ( \tfrac{1}{
  \mu} )^{1 / \gamma} \tfrac{\Delta^{2 / \gamma}}{\alpha_p^{1 / \gamma +
  1} T_p^{1 / \gamma}} + ( \tfrac{32 m (\bar{a} + \bar{d})^2}{\mu} )^{1 / \gamma}
  \alpha^{1 / \gamma}_p (\log T)^{1 / \gamma} ] +\mathcal{O} (1). \nonumber
\end{align}

Suppose we take $T_e = \theta T$ and $T_p = (1 - \theta) T$ for $\theta \in
(0, 1)$ and we let $\alpha_e = \tfrac{\beta_e}{\sqrt{T_e}} =
\tfrac{\beta_e}{\sqrt{\theta T}}, \alpha_p = \tfrac{\beta_p}{\sqrt{T_p}} =
\tfrac{\beta_p}{\sqrt{(1 - \theta) T}}$. Then, $\Delta = o (1)$ and
\[ (\mathsf{R} + 1) [ \tfrac{\Delta}{\alpha_p} + ( \tfrac{1}{\mu}
   )^{1 / \gamma} \tfrac{\Delta^{2 / \gamma}}{\alpha_p^{1 / \gamma + 1}
   T_p^{1 / \gamma}} + ( \tfrac{32 m (\bar{a} + \bar{d})^2}{\mu} )^{1 / \gamma} \alpha^{1
   / \gamma}_p (\log T)^{1 / \gamma} ] = o ( \sqrt{T} ) . \]
Hence, it suffices to consider
\begin{align}
  & \tfrac{m (\bar{a} + \bar{d})^2}{2} (\alpha_e T_e + \alpha_p T_p) +
  \tfrac{\bar{c}}{\ld} \tfrac{1}{\alpha_e} + \tfrac{\bar{c}}{\ld}  \tfrac{2
  \diam (\mathcal{Y}^{\star})}{\alpha_p} \nonumber\\
  ={} & \tfrac{m (\bar{a} + \bar{d})^2}{2} \beta_e \sqrt{\theta T} + \tfrac{m
  (\bar{a} + \bar{d})^2}{2} \beta_p \sqrt{(1 - \theta) T} +
  \tfrac{\bar{c}}{\ld} \tfrac{\sqrt{\theta T}}{\beta_e} +
  \tfrac{\bar{c}}{\ld}  \tfrac{2 \diam (\mathcal{Y}^{\star})}{\beta_p}
  \sqrt{(1 - \theta) T} \nonumber\\
  ={} & \big[ \tfrac{m (\bar{a} + \bar{d})^2}{2} \beta_e + \tfrac{\bar{c}}{\ld
  \beta_e} \big] \sqrt{\theta T} + \big[ \tfrac{m (\bar{a} +
  \bar{d})^2}{2} \beta_p + \tfrac{2 \diam (\mathcal{Y}^{\star})
  \bar{c}}{\ld \beta_p} \big] \sqrt{(1 - \theta) T.} \nonumber
\end{align}
Taking $\beta_e = \sqrt{\tfrac{2}{m (\bar{a} + \bar{d})^2} \cdot
\tfrac{\bar{c}}{\ld}}$ and $\beta_p = \sqrt{\tfrac{2}{m (\bar{a} +
\bar{d})^2} \cdot \tfrac{2 \diam (\mathcal{Y}^{\star}) \bar{c}}{\ld}}$ to optimize the two trade-offs, we get
\[ \mathbb{E} [ r ( \hat{\x}_T ) + v ( \hat{\x}_T
   ) ] \leq 2 \sqrt{\tfrac{m \bar{c}}{2 \ld}} (\bar{a} + \bar{d})
   \sqrt{\theta T} + 2 \sqrt{2} \sqrt{\tfrac{m \bar{c}}{2 \ld}} (\bar{a} +
   \bar{d}) \sqrt{\diam (\mathcal{Y}^{\star})} \sqrt{(1 - \theta) T} \]
With $\theta = \frac{2 \diam (\mathcal{Y}^{\star})}{2 \diam
(\mathcal{Y}^{\star}) + 1}$, we have
\[ \mathbb{E} [ r ( \hat{\x}_T ) + v ( \hat{\x}_T
   ) ] \leq 4 \sqrt{\tfrac{m \bar{c}}{2 \ld}} \sqrt{\tfrac{2
   \diam (\mathcal{Y}^{\star})}{2 \diam (\mathcal{Y}^{\star}) +
   1}} (\bar{a} + \bar{d}) \sqrt{T}. \]
Since $\diam (\mathcal{Y}^{\star}) \geq 0$, this completes the proof.

\subsection{Removing additional $\log T$ when $\gamma = 2$}

When $\gamma = 2$, the dual error bound condition reduces to quadratic growth, and it is possible to remove the $\log T$ factor in the regret result. Recall that
$\log T$ terms appear when bounding $\mathbb{E} [ \dist (
\y^{T_e + 1}, \mathcal{Y}^{\star} ) ]$ and $\mathbb{E} [
\dist ( \y^{T + 1}, \mathcal{Y}^{\star} )^2 ]$. For $\gamma = 2$, using a tailored analysis, \Cref{lem:gamma-2-conv} guarantees
\[ \mathbb{E} [ \dist ( \y^{T_e + 1}, \mathcal{Y}^{\star}
   ) ] \leq \sqrt{\mathbb{E} [ \dist ( \y^{T_e +
   1}, \mathcal{Y}^{\star} )^2 ]} =\mathcal{O} (
   \tfrac{1}{\sqrt[]{T}} ) . \]
Moreover, using \Cref{lem:gamma-2-conv-constant}, we can directly bound
the expectation
\begin{align}
  \mathbb{E} [ \dist ( \y^{T + 1}, \mathcal{Y}^{\star}
  )^2 ] ={} & \mathbb{E} \big[ \mathbb{E} [ \dist (
  \y^{T + 1}, \mathcal{Y}^{\star} )^2 | \y^{T_e + 1} ]\big ]
  \nonumber\\
  \leq{} & \mathbb{E} \big[ \tfrac{\dist ( \y^{T_e + 1},
  \mathcal{Y}^{\star} )^2}{\mu \alpha T} + \tfrac{m (\bar{a} +
  \bar{d})^2}{\mu} \alpha \big] \nonumber\\
  ={} & \tfrac{\mathbb{E} [ \dist ( \y^{T_e + 1},
  \mathcal{Y}^{\star} )^2 ]}{\mu \alpha T} + \tfrac{m (\bar{a} +
  \bar{d})^2}{\mu} \alpha \nonumber\\
  \leq{} & \tfrac{\Delta^2}{\mu \alpha T} + \tfrac{m (\bar{a} + \bar{d})^2}{\mu}
  \alpha . \nonumber
\end{align}
Therefore, $\log T$ terms can be removed from the analysis.

\subsection{Learning with unknown parameters}

It is possible that $\gamma$ and $\mu$ are unknown in practice. When $\mu$ is unknown, it is possible to run parameter-variants of first-order
methods \Cref{alg:assg-r}, which is slightly more complicated. In terms of $\gamma$, in the finite-support setting, the LP polyhedral error
bound always guarantees $\gamma = 1$. In the continuous support setting, it
suffices to know an upperbound bound on $\gamma$: if \ref{A4} holds for some $\gamma > 0$, then given $\theta > 0$, 
\begin{align}
   f ( \y ) \geq{} & f ( \y^{\star} ) + \mu \dist
  ( \y, \mathcal{Y}^{\star} )^{\gamma} \nonumber\\
  ={} & f ( \y ) - f ( \y^{\star} ) + \mu
  \tfrac{\dist ( \y, \mathcal{Y}^{\star} )^{\gamma +
  \theta}}{\dist ( \y, \mathcal{Y}^{\star} )^{\theta}}
  \nonumber\\
  \geq{} & f ( \y ) - f ( \y^{\star} ) +
  \tfrac{\mu}{\tmop{diam} (\mathcal{Y})^{\theta}} \dist (
  \y, \mathcal{Y}^{\star} )^{\gamma + \theta} . \nonumber
\end{align}

and \ref{A4} also holds for $\gamma' > \gamma$.

\end{document}